\colorlet{siaminlinkcolor}{green!50!black}
\colorlet{siamexlinkcolor}{red!50!black}
\colorlet{siamreviewcolor}{black!50}
\renewcommand*{\backref}[1]{\ifx#1\relax \else Page #1 \fi}
\renewcommand*{\backrefalt}[4]{%
  \ifcase #1 \footnotesize{(Not cited)}%
  \or        \footnotesize{(Cited on page~#2)}%
  \else      \footnotesize{(Cited on pages~#2)}%
  \fi
}
\tikzstyle{box} = [rectangle, draw, text centered, rounded corners, minimum height=2em]
\tikzstyle{connector} = [draw, -latex']
\theoremstyle{plain}
\newtheorem{theorem}{Theorem}[section]
\newtheorem{assumption}[theorem]{Assumption}
\newtheorem{lemma}[theorem]{Lemma}
\newtheorem{proposition}[theorem]{Proposition}
\newtheorem{corollary}[theorem]{Corollary}
\newtheoremstyle{my_def}
  {\topsep}   
  {\topsep}   
  {\normalfont}  
  {0pt}       
  {\bfseries\itshape} 
  {.}         
  {5pt plus 1pt minus 1pt} 
  {}          
\theoremstyle{my_def}
\newtheorem{definition}[theorem]{Definition}
\newtheorem{example}[theorem]{Example}
\newtheoremstyle{my_rem}
  {0.5\topsep}   
  {0.5\topsep}   
  {\normalfont}  
  {0pt}       
  {\bfseries\itshape} 
  {.}         
  {5pt plus 1pt minus 1pt} 
  {}          
\theoremstyle{my_rem}
\newtheorem{remark}[theorem]{Remark}
\numberwithin{equation}{section}
\numberwithin{theorem}{section}
\newcommand{\T}{\mathbb{T}}
\newcommand{\R}{\mathbb{R}}
\newcommand{\E}{\mathbb{E}}
\newcommand{\N}{\mathbb{N}}
\renewcommand{\P}{\mathbb{P}}
\newcommand{\scrE}{\mathscr{E}}
\newcommand{\scrR}{\mathscr{R}}
\newcommand{\scrP}{\mathscr{P}}
\newcommand{\scrA}{\mathscr{A}}
\newcommand{\cA}{\mathcal{A}}
\newcommand{\cE}{\mathcal{E}}
\newcommand{\cF}{\mathcal{F}}
\newcommand{\cG}{\mathcal{G}}
\newcommand{\cH}{\mathcal{H}}
\newcommand{\cK}{\mathcal{K}}
\newcommand{\cL}{\mathcal{L}}
\newcommand{\cP}{\mathcal{P}}
\newcommand{\cQ}{\mathcal{Q}}
\newcommand{\cX}{\mathcal{X}}
\newcommand{\cY}{\mathcal{Y}}
\newcommand{\cZ}{\mathcal{Z}}
\newcommand{\one}{\mathds{1}}
\let\originalleft\left 
\let\originalright\right
\renewcommand{\left}{\mathopen{}\mathclose\bgroup\originalleft}
\renewcommand{\right}{\aftergroup\egroup\originalright}
\DeclarePairedDelimiterX{\iptemp}[2]{\langle}{\rangle}{#1, #2}
\newcommand{\ip}{\iptemp}
\DeclarePairedDelimiterX{\normtemp}[1]{\lVert}{\rVert}{#1}
\newcommand{\norm}{\normtemp}
\DeclarePairedDelimiterX{\abstemp}[1]{\lvert}{\rvert}{#1}
\newcommand{\abs}{\abstemp}
\DeclarePairedDelimiterX{\trtemp}[1]{(}{)}{#1}
\newcommand{\tr}{\operatorname{tr}\trtemp}
\renewcommand{\hat}{\widehat}
\newcommand{\set}[2]{{\left\{ #1 \,\middle|\, #2 \right\}}}
\newcommand{\slot}{{\,\cdot\,}}
\newcommand{\defeq}{\coloneqq} 
\newcommand{\mmin}{\wedge} 
\newcommand{\condbar}{\, \vert \,}
\DeclareMathOperator{\Id}{Id} 
\DeclareMathOperator{\image}{Im} 
\renewcommand{\E}{\operatorname{\mathbb{E}}} 
\newcommand{\diid}{\stackrel{\mathrm{iid}}{\sim}} 
\newcommand{\unif}{\mathrm{Unif}}
\newcommand{\Unif}{\unif}
\DeclareMathOperator*{\esssup}{ess\,sup}
\DeclareMathOperator*{\argmin}{arg\,min}
\newcommand{\al}{\alpha}
\def\qfa{\quad\text{for all}\quad}
\def\qas{\quad\text{as}\quad}
\def\qa{\quad\text{and}\quad}
\def\qf{\quad\text{for}\quad}
\def\qw{\quad\text{where}\quad}
\newcommand{\U}{{\{u_n\}}}
\newcommand{\Th}{{\{\theta_m\}}}
\newcommand{\GH}{\cG_{\cH}}
\newcommand{\eps}{\varepsilon}
\renewcommand{\epsilon}{\varepsilon}
\renewcommand{\phi}{\varphi}
\newcommand{\alst}{\al^{\star}}
\newcommand{\alhat}{\hat{\al}}
\newcommand{\define}[1]{{\emph{#1}}}
\title{Error Bounds for Learning with\\
Vector-Valued Random Features}
\let\inserttitle\@title
\author{%
  Samuel Lanthaler\thanks{Equal Contribution} \\
  California Institute of Technology\\
  \texttt{slanth@caltech.edu}\\
  \And
  Nicholas H.~Nelsen\footnotemark[1] \\
  California Institute of Technology \\
  \texttt{nnelsen@caltech.edu} \\
}
\begin{document}

\maketitle

\begin{abstract}
This paper provides a comprehensive error analysis of learning with vector-valued random features (RF). The theory is developed for RF ridge regression in a fully general infinite-dimensional input-output setting, but nonetheless applies to and improves existing finite-dimensional analyses. In contrast to comparable work in the literature, the approach proposed here relies on a direct analysis of the underlying risk functional and completely avoids the explicit RF ridge regression solution formula in terms of random matrices. This removes the need for concentration results in random matrix theory or their generalizations to random operators. The main results established in this paper include strong consistency of vector-valued RF estimators under model misspecification and minimax optimal convergence rates in the well-specified setting. The parameter complexity (number of random features) and sample complexity (number of labeled data) required to achieve such rates are comparable with Monte Carlo intuition and free from logarithmic factors.
\end{abstract}

\section{Introduction}
Supervised learning of an unknown mapping $\cG\colon \cX \to \cY$ is a core task in machine learning. The \emph{random feature model} (RFM), proposed in \cite{rahimi2007random,RR2008}, combines randomization with optimization to accomplish this task. The RFM is based on a linear expansion with respect to a randomized basis, the \emph{random features} (RF). The coefficients in this RF expansion are optimized to fit the given data of input-output pairs. For popular loss functions, such as the square loss, the RFM leads to a convex optimization problem which can be solved efficiently and reliably.

The RFM provides a scalable approximation of an underlying kernel method \cite{rahimi2007random,RR2008}. While the former is based on an expansion in $M$ random features $\phi(\slot;\theta_1),\dots, \phi(\slot;\theta_M)$, the corresponding kernel method relies on an expansion in values of a positive definite kernel function $K(\slot,u_1),\dots,K(\slot,u_N)$ on a dataset of size $N$. Kernel methods are conceptually appealing, theoretically sound, and have attracted considerable interest \cite{aronszajn1950theory,caponnetto2007optimal,scholkopf2018learning}. However, they require the storage, manipulation, and often inversion of the kernel matrix $\mathsf{K}$ with entries $K(u_i,u_j)$. The size of $\mathsf{K}$ scales quadratically in the number of samples $N$, which can be prohibitive for large datasets. When the underlying input-output map is vector-valued with $\dim(\cY)=p$, the often significant computational cost of kernel methods is further exacerbated by the fact that each entry $K(u_i,u_j)$ of $\mathsf{K}$ is, in general, a $p$-by-$p$ matrix. Hence, the size of $\mathsf{K}$ scales quadratically in both $N$ and $p$. This severely limits the applicability of kernel methods to problems with high-dimensional, or indeed infinite-dimensional, output space. In contrast, learning with RF only requires storage of RF matrices whose size is quadratic in the number of features $M$. When $M\ll Np$, this implies substantial computational savings, with the most extreme case being the infinite-dimensional setting in which $p=\infty$.

In the context of operator learning, the underlying target mapping is an operator $\cG\colon \cX \to \cY$ with infinite-dimensional input and output spaces. Such operators appear naturally in scientific computing and often arise as solution maps of an underlying partial differential equation. Operator learning has attracted considerable interest, e.g., \cite{pca,gin2020deepgreen,pca2,fourierop2020,lu2021learning}, and in this context, the RFM serves as an alternative to neural network-based methods with considerable potential for a sound theoretical basis. Indeed, an extension of the RFM to this infinite-dimensional setting has been proposed and implemented in \cite{rfm}. Although the results show promise, a mathematical analysis of this approach including error bounds and rates of convergence has so far been outstanding.

\paragraph{Related work.}
Several papers have derived error bounds for learning with RF. Early work on the RFM \cite{RR2008} proceeded by direct inspection of the risk functional, demonstrating that $M\simeq N$ random features suffice to achieve a squared error $O(1/\sqrt{N})$ for RF ridge regression (RR). This result was considerably improved in \cite{rudi2017generalization}, where $\sqrt{N} \log N$ random features were shown to be sufficient to achieve the same squared error. This improvement in parameter complexity is based on the explicit RF RR solution formula, combined with extensive use of matrix analysis and matrix concentration inequalities. Similar analysis in \cite{li2021towards} sharpens the parameter complexity to $\sqrt{N} \log d^\lambda_{\mathsf{K}}$ random features. Here $d^\lambda_{\mathsf{K}}$ is the \emph{number of effective degrees of freedom} \cite{bach2017equivalence,caponnetto2007optimal}, with $\lambda$ the RR regularization parameter and $\mathsf{K}$ the kernel matrix. In this context, we also mention related analysis in \cite{bach2017equivalence}. In all these works, the squared error in terms of sample size $N$ match the minimax optimal rates for kernel RR derived in \cite{caponnetto2007optimal}. Going beyond the above error bounds, \cite{bach2017equivalence,li2021towards,rudi2017generalization} also derive fast rates under additional assumptions on the underlying data distribution and/or with improved RF sampling schemes.

Many works also study the interpolatory $(M\simeq N)$ or overparametrized $(M\gg N)$ regimes in the scalar output setting~\cite{chen2022concentration,ma2020min,ghorbani2021linearized,hashemi2023generalization,mei2022generalization}. However, when $p = \dim(\cY) \gg 1$ or $p=\infty$, such regimes may no longer be relevant. This is because the kernel matrix $\mathsf{K}$ now has size $Np$-by-$Np$, and it is possible that the number of random features $M$ satisfies $M\ll Np$ even though $M\gg N$. In this case, high-dimensional vector-valued learning naturally operates in the underparametrized regime.

In the area of operator learning for scientific problems, approximation results are common \cite{CDS2011,deng2021convergence,herrmann2022neural,gonon2023approximation,thfno,thdeeponet,ryck2022generic,SchwabZech2019} but statistical guarantees are lacking, the main exceptions being~\cite{boulle2022learning,de2023convergence,jin2022minimax,mollenhauer2022learning,schafer2021sparse,stepaniants2023learning} in the linear operator setting and \cite{caponnetto2007optimal} in the nonlinear setting. The RFM also has potential for such nonlinear problems. Indeed, vector-valued random Fourier features have been studied before~\cite{brault2016random,minh2016operator}. However, theory is only provided for kernel approximation, not generalization guarantees. 

To summarize, while previous analyses have provided considerable insight into the generalization properties of the RFM, they have almost exclusively focused on the scalar-valued setting. Given the paucity of theoretical work beyond this setting, it is a priori unclear whether similar estimates continue to hold when the RFM is applied to infinite-dimensional vector-valued mappings. 

\paragraph{Contributions.}
The primary purpose of this paper is to extend earlier results on learning with random features to the vector-valued setting. The theory developed in this work unifies sources of error stemming from approximation, generalization, misspecification, and noisy observations. We focus on training via ridge regression with the square loss. Our results differ from existing work not only in the scope of applicability, but also in the strategy employed to derive our results. Similar to \cite{RR2008}, we do not rely on the explicit random feature ridge regression solution (RF-RR) formula, which is specific to the square loss. One main benefit of this approach is that it entirely avoids the use of matrix concentration inequalities, thereby making the extension to an infinite-dimensional vector-valued setting straightforward. Our main contributions are now listed (see also Table \ref{tab:rates}).
\begin{enumerate}[label={(C\arabic*)}]
    \item \label{item:contr_TEMP} Given $N$ training samples, we prove that $M\simeq \sqrt{N}$ random features and regularization strength $\lambda\simeq 1/\sqrt{N}$ is enough to guarantee that the squared error is $O(1/\sqrt{N})$, provided that the target operator belongs to a specific reproducing kernel Hilbert space (Thm.~\ref{thm:error_bound_total_well});

    \item we establish that the vector-valued RF-RR estimator is strongly consistent (Thm.~\ref{thm:main_consis_strong});

    \item under additional regularity assumptions, we derive rates of convergence even when the target operator does not belong to the specific reproducing kernel Hilbert space (Thm.~\ref{thm:main_rates_slow});

    \item we demonstrate that the approach of Rahimi and Recht \cite{RR2008} can be used to derive state-of-the-art rates for the RFM which, for the first time, are free from logarithmic factors.
\end{enumerate}

\paragraph{Outline.}
The remainder of this paper is organized as follows. We set up the ridge regression problem in Sect.~\ref{sec:prelim}. The main results are stated in Sect.~\ref{sec:main} and their proofs are sketched in Sect.~\ref{sec:proof}. Sect.~\ref{sec:numeric} provides a simulation study and Sect.~\ref{sec:conclusion} gives concluding remarks. Detailed proofs are deferred to the supplementary material ({\bf SM}).

\begin{table}[ht!]
  \caption{A summary of available error estimates for the RFM, with regularization parameter $\lambda$, output space $\cY$, and number of random features $M$. $({}^\star)$: the truth is assumed to be written as $\cG(u) = \E_\theta[\alpha^\ast(\theta) \phi(u;\theta)]$ with restrictive bound $|\alpha^\ast(\theta)|\le R$ to avoid explicit regularization.}
  \label{tab:rates}
  \centering
  \begin{tabular}{lccccc}
    \toprule
    Paper & Approach &  $\lambda$ &$\dim(\cY)$ & $M$ & Squared Error \\
    \midrule
    Rahimi \& Recht \cite{RR2008} & ``kitchen sinks''  & n/a $({}^\star)$ & 1 & $N$ &  $R/\sqrt{N}$ \\
    Rudi \& Rosasco \cite{rudi2017generalization} & matrix concen. &  $1/\sqrt{N}$ & 1 & $\sqrt{N} \log(N)$ & $1/\sqrt{N}$ \\
    \Citet{li2021towards} & matrix concen. &  $1/\sqrt{N}$ & 1 & $\sqrt{N}\log(d^\lambda_{\mathsf{K}})$ & $1/\sqrt{N}$ \\
    \textbf{This work} & ``\textbf{kitchen sinks}'' &  \bm{$1/\sqrt{N}$} & \bm{$\infty$} & \bm{$\sqrt{N}$} &  \bm{$1/\sqrt{N}$} \\
    \bottomrule
  \end{tabular}
  \vspace{-10pt}
\end{table}

\section{Preliminaries}\label{sec:prelim}
We now set up our vector-valued learning framework by introducing notational conventions, reviewing random features, and formulating the ridge regression problem.

\paragraph{Notation.}
Let $(\Omega,\cF,\P)$ be a sufficiently rich probability space on which all random variables in this paper are defined. Let $\cX$ be the input space, $\cY$ the output space, and $\Theta$ a set. We consistently use $u$ to denote elements of $\cX$ and $\theta$ for RF parameters in $\Theta$. The set of probability measures supported on a set $\cQ$ is denoted by $\scrP(\cQ)$. We write expectation (in the sense of Bochner integration) with respect to $u\sim \nu \in \scrP(\cX)$ as $\E_u[\slot]$ and similarly for $\theta\sim \mu \in \scrP(\Theta)$. Independent and identically distributed (iid) samples $u_1,\ldots, u_N$ from $\nu$ will be denoted by $\U\sim\nu^{\otimes N}$ and similarly for $\Th\sim\mu^{\otimes M}$. We write $a\simeq b$ to mean that there exists a constant $C\geq 1$ such that $C^{-1} b \le a \le C b$ and similarly for the one-sided inequalities $a\lesssim b$ and $a\gtrsim b$. We define $a\mmin b\defeq \min(a,b)$. 

\paragraph{Random features and reproducing kernel Hilbert spaces.}
Random features are defined by a pair $(\phi,\mu)$, where $\phi\colon\cX\times\Theta\to\cY$ and $\mu\in\scrP(\Theta)$. Fixing $\theta\sim\mu$ defines a map $\phi(\slot;\theta)\colon\cX\to\cY$. Considering linear combinations of such maps leads to the following definition.
\begin{definition}[Random feature model]\label{def:prelim_rfm}
The map $\Phi(\slot;\alpha)=\Phi(\slot;\alpha,\Th)\colon\cX\to\cY $ given by
\begin{equation}\label{eq:rf_model}
    u\mapsto \Phi(u;\alpha)\defeq \frac{1}{M}\sum_{m=1}^M\alpha_m\phi(u;\theta_m)
\end{equation}
is a \emph{random feature model} (RFM) with coefficients $\alpha\in\R^M$ and fixed realizations $\Th\sim\mu^{\otimes M}$.
\end{definition}

Associated to the pair $(\phi,\mu)$ is a reproducing kernel Hilbert space (RKHS) $\cH$ of maps from $\cX$ to $\cY$ \cite[Sect. 2.3]{rfm}. Under mild assumptions~(see {\bf SM}~\ref{app:rkhs}) assumed in our main results, it holds that
\begin{equation}\label{eq:prelim_rkhs}
    \cH=\set{\cG \in L^2_\nu(\cX;\cY)}{\cG = \E_{\theta\sim\mu}[\alpha(\theta)\phi(\slot;\theta)] \; \text{and} \; \alpha\in L^2_\mu(\Theta;\R)}
\end{equation}
with RKHS norm $\Vert \cG \Vert_{\cH} = \min_{\alpha} \Vert \alpha\Vert_{L^2_\mu}$,
where $\alpha$ ranges over all decompositions of $\cG$ of the form in \eqref{eq:prelim_rkhs}. A minimizer $\alpha_\cH$ of this problem always exists~\cite[Sect. 2.2]{bach2017equivalence}. We use this fact to identify any $\cG \in \cH$ with its minimizing $\alpha_\cH\in L^2_\mu(\Theta;\R)$ without further comment.

\paragraph{Random feature ridge regression.}
Let $\cP\in\scrP(\cX\times\cY)$ be the \emph{joint data distribution}. The goal of RF-RR is to estimate an underlying operator $\cG\colon \cX \to \cY$ from finitely many iid input-output pairs $\{(u_n,y_n)\}_{n=1}^N\sim\cP^{\otimes N}$, where typically the $y_n$ are noisy transformations of the point values $\cG(u_n)$. To describe RF-RR, we first make some definitions.

\begin{definition}[Empirical risk]\label{def:prelim_emp}
    Writing $Y=\{y_n\}$ for the collection of observed output data and fixing a regularization parameter $\lambda > 0$, the \define{regularized $Y$-empirical risk} of $\al\in\R^M$ is given by
    \begin{align}\label{eq:prelim_reg}
        \scrR_N^\lambda(\al;Y)\defeq \frac1N \sum_{n=1}^N \Vert y_n - \Phi(u_n;\al)\Vert_{\cY}^2 +\lambda \norm{\al}_M^2\,,\qw \norm{\alpha}^2_M\defeq \frac{1}{M}\sum_{m=1}^M\abs{\alpha_m}^2
    \end{align} 
    is a scaled Euclidean norm  on $\R^M$. The \define{regularized $\cG$-empirical risk}, $\scrR_N^\lambda(\al;\cG)$, is defined analogously with $\cG(u_n)$ in place of $y_n$. In the absence of regularization, i.e., $\lambda = 0$, these expressions define the \define{$Y$-empirical risk} and \define{$\cG$-empirical risk}, denoted by $\scrR_N(\al;Y)$ and $\scrR_N(\al;\cG)$, respectively.
\end{definition}

RF-RR is the minimization problem $\min_{\al\in\R^M} \scrR_N^\lambda(\al;Y)$. The minimizer, which we denote by $\alhat$, is referred to as \emph{trained coefficients} and $\Phi(\slot;\alhat)$ the \emph{trained RFM}. For $M$ and $N$ sufficiently large and $\lambda>0$ sufficiently small, we expect the trained RFM to well approximate $\cG$. This intuition is made precise by quantitative error bounds and statistical performance guarantees in the next section.

\section{Main results}\label{sec:main}
The main result of this paper is an abstract bound on the population squared error (Sect.~\ref{sec:main_error_bound}). From this widely applicable theorem, several more specialized results are deduced. These include consistency (Sect.~\ref{sec:main_consis}) and convergence rates (Sect.~\ref{sec:main_rates}) of the RF-RR estimator trained on noisy data. The assumptions under which this theory is developed are provided next in Sect.~\ref{sec:main_assump}.

\subsection{Assumptions}\label{sec:main_assump}
Throughout this paper, we assume that the input space $\cX$ is a Polish space and the output space $\cY$ is a real separable Hilbert space. These are common assumptions in learning theory~\cite{caponnetto2007optimal}. We view $\cX$ and $\cY$ as measurable spaces equipped with their respective Borel $\sigma$-algebras.

Next, we make the following minimal assumptions on the random feature pair $(\phi,\mu)$.
\begin{assumption}[Random feature regularity]\label{ass:rf}
Let $\nu\in\scrP(\cX)$ be the input distribution and $(\Theta, \Sigma, \mu)$ be a probability space. The random feature map $\phi\colon \cX \times \Theta \to \cY$ and the probability measure $\mu\in\scrP(\Theta)$ are such that (i) $\phi$ is measurable; (ii) $\phi$ is uniformly bounded; in fact, $\Vert \phi \Vert_{L^\infty} \defeq \esssup_{(u,\theta)\sim \nu\otimes \mu} \Vert \phi(u;\theta)\Vert_{\cY} \le 1$; and (iii) the RKHS $\cH$ corresponding to $(\phi,\mu)$ is separable.
\end{assumption}
The boundedness assumption on $\phi$ is shared in general theoretical analyses of RF~\cite{li2021towards,RR2008,rudi2017generalization}; the unit bound can always be ensured by a simple rescaling. We work in a general misspecified setting.
\begin{assumption}[Misspecification]\label{ass:misspec}
There exist $\rho\in L^\infty_\nu(\cX;\cY)$ and $\GH\in\cH$ such that the operator $\cG\colon\cX\to\cY$ satisfies the decomposition $\cG=\rho + \GH$.
\end{assumption}
Since Assumption~\ref{ass:rf} implies that $\cH \subset L^\infty_\nu(\cX;\cY)$, any $\cG = \cG_\cH + \rho$ as in Assumption~\ref{ass:misspec} is automatically bounded in the sense that $\cG \in L^\infty_\nu(\cX;\cY)$. Conversely, any $\cG \in L^\infty_\nu(\cX;\cY)$ allows such a decomposition. We interpret $\rho$ as a residual from the operator $\GH$ belonging to the RKHS. It may be prescribed by the problem, as we will see later in the context of discretization errors in operator learning (Ex.~\ref{ex:discretization}), or be arbitrary, as is customary in learning theory when the only information about $\cG$ is that it is bounded.

Our main goal is to recover $\cG$ from iid data $\{(u_n,y_n)\}$ arising from the following statistical model.
\begin{assumption}[Joint data distribution]\label{ass:joint_dist_noise}
    The joint distribution $\cP\in\scrP(\cX\times \cY)$ of the random variable $(u,y)\sim\cP$ is given by $u\sim \nu$ with $\nu\in\scrP(\cX)$ and $y=\cG(u) + \eta$. Here, $\cG$ satisfies Assumption~\ref{ass:misspec}.
    The additive noise $\eta$ is a random variable in $\cY$ that is conditionally centered, $\E[\eta\condbar u]=0$, and is subexponential: $\norm{\eta}_{\psi_1(\cY)}<\infty$; see \eqref{eq:app_def_subexp_norm_vec} for the definition of $\Vert \slot \Vert_{\psi_1(\cY)}$.
\end{assumption}
Assumption \ref{ass:joint_dist_noise} implies that $\cG(u) = \E[y\condbar u]$. In contrast to related work \cite{bach2017equivalence,li2021towards,RR2008}, we allow for unbounded input-dependent noise. In particular, our results also hold for bounded or subgaussian noise, as well as \emph{multiplicative noise} (e.g., $\eta=\xi\cG(u)$ with $\E[\xi\condbar u]=0$ and $\norm{\xi}_{\psi_1(\R)}<\infty$).

\subsection{General error bound}\label{sec:main_error_bound}
For any $\cG$, define the \emph{$\cG$-population risk functional} or \emph{$\cG$-population squared error} by
\begin{equation}\label{eq:risk_pop_truth}
    \scrR(\al;\cG)\defeq \E_{u\sim\nu}\norm{\cG(u)-\Phi(u;\al,\Th)}_{\cY}^2 \qf \al\in\R^M\,.
\end{equation}
The main result of this paper establishes an upper bound for this quantity that holds with high probability, provided that the number of random features and number of data pairs are large enough.

\begin{theorem}[$\cG$-population squared error bound]\label{thm:error_bound_total_G}
    Suppose that $\cG = \rho+\GH $ satisfies Assumption~\ref{ass:misspec}. Fix a failure probability $\delta\in(0,1)$, regularization strength $\lambda\in(0,1)$, and sample size $N$. Let $\Th\sim\mu^{\otimes M}$ be the $M$ random feature parameters and $\{(u_n,y_n)\}\sim\cP^{\otimes N}$ be the data according to Assumption~\ref{ass:joint_dist_noise}. For $\Phi$ the RFM~\eqref{eq:rf_model} satisfying Assumption~\ref{ass:rf}, let $\alhat
    \in \R^M$ be the minimizer of the regularized $Y$-empirical risk $\scrR_N^\lambda(\slot;Y)$ given by \eqref{eq:prelim_reg}. If $M \ge \lambda^{-1}\log(32/\delta)$ and $N\geq \lambda^{-2}\log(16/\delta)$, then
    \begin{equation}\label{eq:error_bound_total_G}
        \E_{u\sim\nu}\norm{\cG(u)-\Phi(u;\alhat,\Th)}_{\cY}^2  \leq 79e^{3/2}\bigl(\norm{\cG}_{L^\infty_\nu}^2 + 2\beta(\rho,\lambda,\GH,\eta)\bigr)\lambda
    \end{equation}
    with probability at least $1-\delta$, where
    \begin{equation}\label{eq:alphabd_noisy_const}
        \beta(\rho,\lambda,\GH,\eta) \defeq 328 \norm{\GH}_\cH^2
            + 2023e^{3}\norm{\eta}_{\psi_1(\cY)}^2 + 8\lambda^{-1}\E_{u\sim\nu}\norm{\rho(u)}_{\cY}^2   + 18\lambda \norm{\rho}_{L^\infty_\nu}^2
    \end{equation}
    is a function of $\rho$, $\lambda$, $\GH$, and the law of the noise variable $\eta$.
\end{theorem}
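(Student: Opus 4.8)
The plan is to bound the population risk $\scrR(\alhat;\cG)$ directly through the optimality of $\alhat$ for $\scrR_N^\lambda(\slot;Y)$, never touching the closed-form ridge solution nor any matrix/operator concentration — an argument in the spirit of Rahimi and Recht \cite{RR2008}. Let $\alpha_\cH\in L^2_\mu(\Theta;\R)$ be the minimal-norm representer of $\GH$, so that $\GH=\E_{\theta\sim\mu}[\alpha_\cH(\theta)\phi(\slot;\theta)]$ and $\norm{\alpha_\cH}_{L^2_\mu}=\norm{\GH}_\cH$, and take a comparison coefficient $\al^\dagger\in\R^M$ built from the Monte Carlo plug-in $\al^\dagger_m\approx\alpha_\cH(\theta_m)$, with the tail of $\alpha_\cH$ handled separately (see below). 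Since $y_n=\cG(u_n)+\eta_n$, expanding the squares in $\scrR_N^\lambda(\alhat;Y)\le\scrR_N^\lambda(\al^\dagger;Y)$, cancelling the common term $\tfrac1N\sum_n\norm{\eta_n}_\cY^2$, and using linearity of $\Phi$ in the coefficients gives
\begin{equation*}
  \scrR_N^\lambda(\alhat;\cG)\;\le\;\scrR_N^\lambda(\al^\dagger;\cG)+\tfrac2N\textstyle\sum_{n=1}^N\bigl\langle\eta_n,\Phi(u_n;\alhat-\al^\dagger)\bigr\rangle_\cY\,.
\end{equation*}
Passing to the population risk then splits the error as $\scrR(\alhat;\cG)\le T_1+T_2+T_3+T_4$: a generalization gap $T_1=\scrR(\alhat;\cG)-\scrR_N(\alhat;\cG)$ at the data-dependent $\alhat$; the noise cross-term $T_2$ above; a generalization gap $T_3=\scrR_N(\al^\dagger;\cG)-\scrR(\al^\dagger;\cG)$ at the data-independent $\al^\dagger$; and $T_4=\scrR^\lambda(\al^\dagger;\cG)$, which decomposes into the Monte Carlo feature-approximation error of $\GH$ plus the residual contributions $\E_u\norm{\rho}_\cY^2$ and $\norm{\rho}_{L^\infty_\nu}^2$.

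For $T_1$ and $T_3$: since $\norm{\phi}_{L^\infty}\le1$, the loss class $\set{u\mapsto\norm{\cG(u)-\Phi(u;\al)}_\cY^2}{\norm{\al}_M\le R}$ is bounded by $(\norm{\cG}_{L^\infty_\nu}+R)^2$, and after composing the squared norm (Lipschitz on bounded sets) with the linear map $\al\mapsto\Phi(\slot;\al)$ its Rademacher complexity is $\lesssim(\norm{\cG}_{L^\infty_\nu}+R)R/\sqrt N$; with a Bernstein deviation bound this yields $T_1,T_3\lesssim(\norm{\cG}_{L^\infty_\nu}^2+R^2)\lambda$ once $N\gtrsim\lambda^{-2}\log(1/\delta)$ (which forces $1/\sqrt N\le\lambda$ and $\sqrt{\log(1/\delta)/N}\le\lambda$), where $R$ denotes an a priori radius for $\norm{\alhat}_M$ and, for $T_3$, the quantity $\norm{\al^\dagger}_M$ plays this role. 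Such a radius comes from the strong convexity of $\scrR_N^\lambda(\slot;Y)$: comparing $\alhat$ with $\al^\dagger$ and solving the resulting quadratic inequality gives $\norm{\alhat}_M^2\lesssim\lambda^{-2}\norm{Z}_M^2+\lambda^{-1}\scrR_N(\al^\dagger;\cG)+\norm{\al^\dagger}_M^2$, where $Z_m=\tfrac1N\sum_n\langle\eta_n,\phi(u_n;\theta_m)\rangle_\cY$. For $T_2$, writing the cross-term as $2\langle\alhat-\al^\dagger,Z\rangle_M$ and applying Cauchy--Schwarz and Young reduces matters to controlling $\norm{Z}_M$, which I would do by a \emph{dimension-free} Bernstein inequality for the Hilbert-space-valued average $\tfrac1N\sum_n\Psi_n$ with $\Psi_n=(\langle\eta_n,\phi(u_n;\theta_m)\rangle_\cY)_{m=1}^M$, whose conditional mean vanishes and for which $\norm{\Psi_n}_M\le\norm{\eta_n}_\cY$; subexponentiality of $\eta$ together with $N\gtrsim\lambda^{-2}\log(1/\delta)$ then give $\norm{Z}_M^2\lesssim\norm{\eta}_{\psi_1(\cY)}^2\lambda^2$, the origin of the $\norm{\eta}_{\psi_1(\cY)}^2$ term in $\beta$. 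Note the infinite-dimensional output space costs nothing here: these are genuine vector/scalar Bernstein bounds, not random-matrix estimates.

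The main obstacle is the feature side — controlling $T_4$ and the terms $\norm{\al^\dagger}_M^2$, $\scrR_N(\al^\dagger;\cG)$ that enter the bound on $\norm{\alhat}_M^2$. One needs, with high probability and free of logarithmic factors, that $\E_u\norm{\GH(u)-\Phi(u;\al^\dagger)}_\cY^2\lesssim\lambda\norm{\GH}_\cH^2$ and $\norm{\al^\dagger}_M^2\lesssim\norm{\GH}_\cH^2$, derived \emph{only} from the elementary variance bound that the $\Th$-average of the squared $L^2_\nu$-error of the $M$-term Monte Carlo plug-in of $\GH$ is at most $\norm{\GH}_\cH^2/M$, and with no operator concentration. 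The catch is that $\alpha_\cH$ is merely square-integrable, so the summands $\alpha_\cH(\theta_m)\phi(\slot;\theta_m)$ are unbounded and a plain Bernstein inequality over the $M$ features fails; the fix is to split $\alpha_\cH$ into a bounded bulk and a tail, apply the dimension-free Hilbert-space Bernstein inequality over the features to the bulk — with the truncation level chosen against $M$, $\lambda$, $\delta$ so that its range term stays of order $\lambda\norm{\GH}_\cH^2$, which is precisely where $M\gtrsim\lambda^{-1}\log(1/\delta)$ is used — and absorb the tail, which has small $L^2_\mu$-norm and hence small $\cH$- and $L^\infty_\nu$-norm, into the misspecification residual $\rho$. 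Combining the $O(1)$-many high-probability events by a union bound, feeding the resulting estimate $\norm{\alhat}_M^2\lesssim\beta(\rho,\lambda,\GH,\eta)$ back into $T_1$, and collecting constants then gives \eqref{eq:error_bound_total_G}; the one genuinely delicate piece of bookkeeping is closing this bootstrap between the a priori control of $\norm{\alhat}_M$ and the final risk estimate.
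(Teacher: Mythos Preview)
Your proposal is correct and follows essentially the same route as the paper: optimality of $\alhat$ against a truncated Monte Carlo surrogate $\al^\dagger$, a quadratic/AM--GM bootstrap to close the a~priori bound on $\norm{\alhat}_M$, dimension-free Bernstein for the noise term (your $\norm{Z}_M$ is exactly the paper's $\sup_{\norm{\al}_M\le 1}|\tfrac1N\sum\langle\eta_n,\Phi(u_n;\al)\rangle|$), and Rademacher/Bernstein for the generalization gap at radius $\beta$. Two small remarks: (i) the paper bounds $\scrR_N^\lambda(\al^\dagger;\GH)$ \emph{empirically} (the truncation lemma holds for any empirical measure), so your detour through $T_3+T_4$ via population is unnecessary, though harmless; (ii) the tail $\alpha_{>T}$ does \emph{not} have small $L^2_\mu$-norm in general --- what is small is its $L^1_\mu$-norm, via $\E_\theta|\alpha_{>T}|\le \E_\theta|\alpha_\cH|^2/T$, and this is what makes the corresponding function small in $L^\infty_\nu$; your ``absorb into $\rho$'' bookkeeping still goes through with the right choice $T^2\simeq\lambda^{-1}\norm{\GH}_\cH^2$.
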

The main elements of the proof of Thm.~\ref{thm:error_bound_total_G} will be explained in Sect.~\ref{sec:proof}.

\begin{remark}[Excess risk]
We note that other work \cite{li2021towards,RR2008,rudi2017generalization} often focuses on bounding the \emph{excess risk} $\hat{\scrE} \defeq \scrE(\Phi(\slot;\alhat)) - \inf_{\GH\in \cH} \scrE(\GH)$, where $\scrE(F) \defeq \E \Vert y - F(u) \Vert_\cY^2 = \E_{u\sim \nu}\Vert \cG(u) - F(u)\Vert_\cY^2 + \E \Vert \eta\Vert_\cY^2$. In particular, this bias-variance decomposition implies that $\hat{\scrE} \le \E_{u\sim\nu}\norm{\cG(u)-\Phi(u;\alhat)}_{\cY}^2$. Thus, Thm. \ref{thm:error_bound_total_G} also gives a corresponding bound on the excess risk $\hat{\scrE}$.
\end{remark}

\begin{remark}[The $\beta$ factor]
    In the well-specified setting, that is, $\cG-\GH=\rho\equiv 0$, the factor $\beta$ in Thm.~\eqref{thm:error_bound_total_G} satisfies the uniform bound
    \begin{equation}\label{eq:beta_well}
        \beta(\rho,\lambda,\GH,\eta)\leq B\defeq 328\norm{\cG}_\cH^2 + 2023e^{3}\norm{\eta}_{\psi_1(\cY)}^2\,.
    \end{equation}
    In particular, the constant $B$ does not depend on $\lambda$ in this case. Otherwise, $\beta$ in general depends on $\lambda$. We can characterize this dependence precisely if it is known that $\cG\in L^\infty_\nu(\cX;\cY)$. In this case, Assumption~\ref{ass:misspec} is satisfied with $\rho\defeq \cG-\GH$ \emph{for any} $\GH\in\cH$. Choosing $\GH=\cG_\vartheta|_{\vartheta=\lambda}$ as in {\bf SM}~\ref{app:rkhs} (which is optimal in the sense described there) and a short calculation deliver the bound
    \begin{equation}\label{eq:beta_misspec}
        \beta(\rho,\lambda,\GH,\eta)\lesssim \lambda^{-1}\lambda^{r\mmin 1} = \lambda^{-(1-r)_{+}}
    \end{equation}
    if $\cG$ additionally satisfies a particular $r$-th order regularity condition (see Lem.~\ref{lem:rkhs_approx_rate} for the details). Here, $a_+\defeq\max(a,0)$ for any $a\in\R$. Thus, $\beta$ is uniformly bounded if $\cG\in\cH$ $(r\geq 1)$  and grows algebraically as a power of $\lambda^{-1}$ otherwise $(0\leq r<1)$.
\end{remark}

\paragraph{Consequences.}
The general error bound \eqref{eq:error_bound_total_G} in Thm.~\ref{thm:error_bound_total_G} has several implications for vector-valued learning with the RFM. First, it immediately implies a rate of convergence if $\cG\in\cH$.
\begin{theorem}[Well-specified]\label{thm:error_bound_total_well}
    Instantiate the hypotheses and notation of Thm.~\ref{thm:error_bound_total_G}. Suppose that $\rho\equiv 0$ so that $\cG\in\cH$~\eqref{eq:prelim_rkhs}. If $M \ge \lambda^{-1}\log(32/\delta)$ and $N\geq \lambda^{-2}\log(16/\delta)$, then
    \begin{equation}\label{eq:error_bound_total_G_well}
        \E_{u\sim\nu}\norm{\cG(u)-\Phi(u;\alhat)}_{\cY}^2 \leq 79e^{3/2}\bigl(\norm{\cG}_{L^\infty_\nu}^2 + 2B\bigr)\lambda\lesssim \lambda
    \end{equation}
    with probability at least $1-\delta$, where the constant $B\geq 0$ is defined by \eqref{eq:beta_well}.
\end{theorem}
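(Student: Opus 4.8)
The plan is to obtain Theorem~\ref{thm:error_bound_total_well} as an immediate specialization of the abstract bound in Theorem~\ref{thm:error_bound_total_G}. Since we now assume $\rho\equiv 0$, the misspecification decomposition $\cG=\rho+\GH$ of Assumption~\ref{ass:misspec} holds with the trivial choice $\GH=\cG\in\cH$ and $\rho\equiv 0\in L^\infty_\nu(\cX;\cY)$. All remaining hypotheses—those on the random feature pair $(\phi,\mu)$ (Assumption~\ref{ass:rf}), on the joint data distribution (Assumption~\ref{ass:joint_dist_noise}), the feature and sample counts $M\ge\lambda^{-1}\log(32/\delta)$ and $N\ge\lambda^{-2}\log(16/\delta)$, and the definition of $\alhat$ as the minimizer of $\scrR_N^\lambda(\slot;Y)$—are inherited verbatim from the statement, so Theorem~\ref{thm:error_bound_total_G} applies directly.

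Next I would evaluate the factor $\beta$ from \eqref{eq:alphabd_noisy_const} at this choice of $(\rho,\GH)$. With $\rho\equiv 0$, the two residual terms $8\lambda^{-1}\E_{u\sim\nu}\norm{\rho(u)}_\cY^2$ and $18\lambda\norm{\rho}_{L^\infty_\nu}^2$ vanish, and since $\GH=\cG$ the leading term is $328\norm{\cG}_\cH^2$. Hence $\beta(0,\lambda,\cG,\eta)=328\norm{\cG}_\cH^2+2023e^{3}\norm{\eta}_{\psi_1(\cY)}^2=B$, the quantity defined in \eqref{eq:beta_well}. Substituting into \eqref{eq:error_bound_total_G} gives $\E_{u\sim\nu}\norm{\cG(u)-\Phi(u;\alhat)}_\cY^2\le 79e^{3/2}\bigl(\norm{\cG}_{L^\infty_\nu}^2+2B\bigr)\lambda$ with probability at least $1-\delta$, which is precisely the claimed inequality.

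Finally, for the asymptotic statement $\lesssim\lambda$, I would note that the prefactor $79e^{3/2}\bigl(\norm{\cG}_{L^\infty_\nu}^2+2B\bigr)$ depends only on $\cG$ (through $\norm{\cG}_{L^\infty_\nu}$ and $\norm{\cG}_\cH$) and on the law of $\eta$ (through $\norm{\eta}_{\psi_1(\cY)}$), and in particular is independent of $\lambda$, $M$, and $N$—a point already recorded in the remark on the $\beta$ factor following Theorem~\ref{thm:error_bound_total_G}. Since $\cG\in\cH\subset L^\infty_\nu(\cX;\cY)$ by Assumption~\ref{ass:rf}, both norms are finite, so the constant is finite and the bound is $O(\lambda)$. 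There is essentially no obstacle here: all of the analytic content—the decomposition of the population risk, the feature-side and data-side concentration estimates, and the control of the trained coefficients—is carried out in the proof of Theorem~\ref{thm:error_bound_total_G}, and the present statement follows purely by setting $\rho\equiv 0$ and bookkeeping the constants.
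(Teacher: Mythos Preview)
Your proposal is correct and matches the paper's approach exactly: the paper treats Theorem~\ref{thm:error_bound_total_well} as an immediate consequence of Theorem~\ref{thm:error_bound_total_G}, obtained by setting $\rho\equiv 0$ so that $\beta(\rho,\lambda,\GH,\eta)$ reduces to the $\lambda$-independent constant $B$ in \eqref{eq:beta_well}. Your bookkeeping of the constants and the observation that $B$ does not depend on $\lambda$ are precisely what is needed.
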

Given a number of samples $N$, Thm.~\ref{thm:error_bound_total_well} shows that RF-RR with regularization $\lambda\simeq 1/\sqrt{N}$ and number of features $M\gtrsim \sqrt{N}$ leads to a population squared error of size $1/\sqrt{N}$ with high probability. This result should be compared to the previous state-of-the-art convergence rates in the literature for RF-RR with iid sampled features~\citep{bach2017equivalence,li2021towards,RR2008,rudi2017generalization}. See Table~\ref{tab:rates}, which indicates that our analysis gives the lowest parameter complexity to date. We emphasize that such a convergence rate rests on the assumption that $\cG \in \cH$. This corresponds to a \emph{compatibility condition} between $\cG$ and the pair $(\phi,\mu)$, i.e., the random feature map $\phi$ and the probability measure $\mu$, which determine the RKHS $\cH$. Designing suitable $\phi$ and $\mu$ for a given operator $\cG$ remains an open problem. For an explanation of the poor parameter complexity in Rahimi and Recht's original paper \cite{RR2008}, see \cite[Appendix E]{sun2018but}.

Thm.~\ref{thm:error_bound_total_G} also implies convergence of $\scrR(\alhat;\cG)$ when $\cG\not\in\cH$, as we will see in Sect.~\ref{sec:main_consis}~and~\ref{sec:main_rates}. But first, the next corollary shows that the same general bound \eqref{eq:error_bound_total_G} also holds for the $\GH$-population squared error $\scrR(\alhat;\GH)$, up to enlarged constant factors. The proof is given in {\bf SM}~\ref{app:proof_main}.

\begin{corollary}[$\GH$-population squared error bound]\label{cor:error_bound_total_GH}
    Instantiate the hypotheses and notation of Thm.~\ref{thm:error_bound_total_G}. If $M \ge \lambda^{-1}\log(32/\delta)$ and $N\geq \lambda^{-2}\log(16/\delta)$, then there exists an absolute constant $C>1$ such that with probability at least $1-\delta$, it holds that
    \begin{equation}\label{eq:error_bound_total_GH}
        \E_{u\sim\nu}\norm{\GH(u)-\Phi(u;\alhat)}_{\cY}^2 \leq C\bigl(\norm{\cG}_{L^\infty_\nu}^2 + 2 \beta(\rho,\lambda,\GH,\eta)\bigr)\lambda\,.
    \end{equation}
\end{corollary}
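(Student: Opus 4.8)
The plan is to deduce the $\GH$-bound directly from the $\cG$-bound of Theorem~\ref{thm:error_bound_total_G} by a single application of the triangle inequality in $L^2_\nu(\cX;\cY)$, using the fact that the misspecification residual $\rho = \cG - \GH$ is already absorbed into the constant $\beta$ of~\eqref{eq:alphabd_noisy_const}.

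First, I would write $\GH(u) - \Phi(u;\alhat) = \bigl(\cG(u) - \Phi(u;\alhat)\bigr) - \rho(u)$ and apply the elementary inequality $\norm{a-b}_\cY^2 \le 2\norm{a}_\cY^2 + 2\norm{b}_\cY^2$ to obtain
\[
\E_{u\sim\nu}\norm{\GH(u)-\Phi(u;\alhat)}_\cY^2 \le 2\,\E_{u\sim\nu}\norm{\cG(u)-\Phi(u;\alhat)}_\cY^2 + 2\,\E_{u\sim\nu}\norm{\rho(u)}_\cY^2\,.
\]
Then I would bound the first term, on the very same event of probability at least $1-\delta$ furnished by Theorem~\ref{thm:error_bound_total_G}, by $79 e^{3/2}\bigl(\norm{\cG}_{L^\infty_\nu}^2 + 2\beta\bigr)\lambda$, so that no union bound and no modification of $\delta$ is required. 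For the second term I would observe that, directly from the definition~\eqref{eq:alphabd_noisy_const}, one has $8\lambda^{-1}\E_{u\sim\nu}\norm{\rho(u)}_\cY^2 \le \beta(\rho,\lambda,\GH,\eta)$, hence $\E_{u\sim\nu}\norm{\rho(u)}_\cY^2 \le \tfrac18\lambda\beta \le \tfrac18\bigl(\norm{\cG}_{L^\infty_\nu}^2 + 2\beta\bigr)\lambda$. Combining the two estimates yields~\eqref{eq:error_bound_total_GH} with, for instance, $C = 158\,e^{3/2} + 1$, which is an absolute constant exceeding $1$.

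There is essentially no obstacle here; the only points requiring a little care are to keep working on the high-probability event of Theorem~\ref{thm:error_bound_total_G} rather than invoking it afresh, and to notice that the overhead $\E_{u\sim\nu}\norm{\rho(u)}_\cY^2$ incurred by the triangle inequality is exactly the quantity that the $8\lambda^{-1}\E_{u\sim\nu}\norm{\rho(u)}_\cY^2$ summand of $\beta$ is built to dominate, so the bound degrades only by an absolute multiplicative constant, uniformly in $\lambda$.
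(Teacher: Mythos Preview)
Your proposal is correct and follows essentially the same approach as the paper: write $\GH-\Phi(\slot;\alhat)=-\rho+[\cG-\Phi(\slot;\alhat)]$, apply $\norm{a+b}^2\le 2\norm{a}^2+2\norm{b}^2$, bound $\scrR(\alhat;\cG)$ via Theorem~\ref{thm:error_bound_total_G} on the same event, and absorb the extra $2\E_{u}\norm{\rho(u)}_\cY^2$ into the $8\lambda^{-1}\E_u\norm{\rho(u)}_\cY^2$ summand of $\beta$. The paper's proof is terser (it just says ``collecting like terms and enlarging constants''), whereas you track the constant explicitly, but the argument is the same.
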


Although our main goal is to learn $\cG$ from noisy data, there are settings instead in which the learning of $\GH\in\cH$ as in Cor.~\ref{cor:error_bound_total_GH} is of primary interest, but only values of some approximation $\cG\in L^\infty_\nu(\cX;\cY)$ are available. The following example illustrates this.
\begin{example}[Numerical discretization error]\label{ex:discretization}
One practically relevant setting to which Cor.~\ref{cor:error_bound_total_GH} applies arises when training a RFM from functional data generated by a numerical approximation $\cG=\cG_\Delta$ of some underlying operator $\GH\in\cH$. Here $\Delta > 0$ represents a numerical parameter, such as the grid resolution when approximating the solution operator of a partial differential equation. In this setting, $\rho = \cG_\Delta-\GH$ is non-zero and it is crucial to include the discretization error in the analysis, which we define as $\eps_{\Delta}\defeq \norm{\rho}_{L^\infty_\nu}^2=\norm{\cG_\Delta - \GH}_{L^\infty_\nu}^2$. Assume $\eta\equiv 0$, so that $\alhat$ minimizes $\scrR^\lambda_N(\slot;Y)=\scrR^\lambda_N(\slot;\cG_\Delta)$. Using Cor.~\ref{cor:error_bound_total_GH}, it follows that for $M$ and $N$ sufficiently large,
\begin{equation}\label{eq:discretization}
    \E_{u\sim\nu}\norm{\GH(u)-\Phi(u;\alhat)}_{\cY}^2\lesssim \lambda\norm{\GH}_\cH^2 + \eps_{\Delta}
\end{equation}
with high probability. Thus, as suggested by intuition, in addition to the error contribution that is present when training on perfect data (the first term on the right-hand side), there is an additional discretization error of size $\epsilon_\Delta$. We also see that the performance of RF-RR is stable with respect to such discretization errors stemming from the training data. Actually obtaining a rate of convergence would require problem-specific information about the particular numerical solver that is used.
\end{example}

\subsection{Statistical consistency}\label{sec:main_consis}
We now return to the objective of recovering $\cG$ from data. In particular, suppose that $\cG\not\in\cH$; the RKHS, viewed as a hypothesis class, is misspecified. Our analysis demonstrates that statistical guarantees for RF-RR are still possible in this setting.

To this end, assume that $\cG\in L^\infty_\nu(\cX;\cY)$. It follows that Assumption~\ref{ass:misspec} is satisfied with $\rho\defeq \cG-\GH$ and $\GH\in\cH$ being \emph{any} element of the RKHS. Applying Thm.~\ref{thm:error_bound_total_G} and minimizing over $\GH$ yields
\begin{equation}\label{eq:main_consis_inf}
    \E_{u\sim\nu}\norm{\cG(u)-\Phi(u;\alhat)}_{\cY}^2\lesssim \lambda + \inf_{\GH\in\cH}\bigl\{\E_{u\sim\nu}\norm{\cG(u)-\GH(u)}^2_\cY + \lambda\norm{\GH}_\cH^2\bigr\}
\end{equation}
with probability at least $1-\delta$ if $M\gtrsim \lambda^{-1}\log(2/\delta)$ and $N\gtrsim \lambda^{-2}\log(2/\delta)$. To obtain \eqref{eq:main_consis_inf}, we enlarged constants and used the bound $\norm{\rho}_{L^\infty_\nu}^2\lesssim \norm{\cG}_{L^\infty_\nu}^2 + \norm{\GH}_{L^\infty_\nu}^2$ in \eqref{eq:alphabd_noisy_const}. 

If $\cG$ is in the $L^2_\nu$-closure of $\cH$, then with high probability, the population squared error on the left hand side of \eqref{eq:main_consis_inf} converges to zero as $\lambda\to 0$ (by application of Lem.~\ref{lem:converge_rkhs} to the second term on the right). This is a statement about the (weak) \emph{statistical consistency} of the trained RF-RR estimator; it can be upgraded to an almost sure statement, as expressed precisely in the next main result.

\begin{theorem}[Strong consistency]
\label{thm:main_consis_strong}
Suppose that $\cG\in L^\infty_\nu(\cX;\cY)$ belongs to the $L^2_\nu(\cX;\cY)$-closure of $\cH$~\eqref{eq:prelim_rkhs}. Let $\{\lambda_k\}_{k\in\N}\subset (0,1)$ be a sequence of positive regularization parameters such that $\sum_{k\in \N} \lambda_k < \infty$. For $\Phi$ the RFM~\eqref{eq:rf_model} satisfying Assumption~\ref{ass:rf} and for each $k$, let $\alhat^{(k)}\in\R^{M_k}$ be the trained RFM coefficients that minimize the regularized $Y$-empirical risk $\scrR_{N_k}^{\lambda_k}(\slot;Y)$ given by \eqref{eq:prelim_reg} with $M_k\simeq \lambda_k^{-1} \log(2/\lambda_k)$ iid random features and $N_k \simeq \lambda_k^{-2} \log(2/\lambda_k)$ iid data pairs under Assumption~\ref{ass:joint_dist_noise}. It holds true that
\begin{equation}\label{eq:main_consis_strong}
    \lim_{k\to \infty} \E_{u\sim\nu}\norm{\cG(u)-\Phi(u;\alhat^{(k)})}_{\cY}^2 = 0\ \ \text{with probability one.}
\end{equation}
\end{theorem}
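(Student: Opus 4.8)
The plan is to upgrade the misspecified high-probability bound~\eqref{eq:main_consis_inf} — itself a consequence of Theorem~\ref{thm:error_bound_total_G} — to an almost-sure statement via a Borel--Cantelli argument run along the stages $k$. The summability hypothesis $\sum_{k}\lambda_k<\infty$ is exactly what makes the per-stage failure probabilities summable, and it also forces $\lambda_k\to0$, which drives the bound to zero.

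First I would, for each $k$, instantiate~\eqref{eq:main_consis_inf} with failure probability $\delta_k\defeq\lambda_k$. Since $\log(2/\delta_k)=\log(2/\lambda_k)$, the prescribed budgets $M_k\simeq\lambda_k^{-1}\log(2/\lambda_k)$ and $N_k\simeq\lambda_k^{-2}\log(2/\lambda_k)$ satisfy the requirements $M_k\gtrsim\lambda_k^{-1}\log(2/\delta_k)$ and $N_k\gtrsim\lambda_k^{-2}\log(2/\delta_k)$ for all $k$ large enough, the numerical constants being absorbed into the implied constant of $\simeq$ (e.g.\ $\log(32/\lambda_k)\le5\log(2/\lambda_k)$ on $(0,1)$ covers the sharper constants appearing in Theorem~\ref{thm:error_bound_total_G} itself, if one prefers to go through it directly). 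Writing $A(\lambda)\defeq\inf_{\GH\in\cH}\{\E_{u\sim\nu}\norm{\cG(u)-\GH(u)}_\cY^2+\lambda\norm{\GH}_\cH^2\}$, the bound~\eqref{eq:main_consis_inf} then reads: there is a constant $C$, independent of $k$ (it depends only on the fixed finite quantities $\norm{\cG}_{L^\infty_\nu}$ and $\norm{\eta}_{\psi_1(\cY)}$), such that the \emph{deterministic} number $b_k\defeq C(\lambda_k+A(\lambda_k))$ obeys $\P(\scrR(\alhat^{(k)};\cG)>b_k)\le\lambda_k$ for $k$ large, where $\scrR(\alhat^{(k)};\cG)=\E_{u\sim\nu}\norm{\cG(u)-\Phi(u;\alhat^{(k)})}_\cY^2$. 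If one wishes to avoid an infimum inside a probability statement, apply Theorem~\ref{thm:error_bound_total_G} at stage $k$ to a fixed near-minimizer $\GH_k\in\cH$ of $A(\lambda_k)$, using Assumption~\ref{ass:rf} through $\norm{\GH_k}_{L^\infty_\nu}\le\norm{\GH_k}_\cH$ to control the $\rho$-dependent terms in~\eqref{eq:alphabd_noisy_const}; since the left-hand side $\scrR(\alhat^{(k)};\cG)$ does not depend on $\GH_k$, nothing is lost.

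It then remains to see that $b_k\to0$ and to conclude. Summability forces $\lambda_k\to0$, and since $\cG$ lies in the $L^2_\nu(\cX;\cY)$-closure of $\cH$, Lemma~\ref{lem:converge_rkhs} gives $A(\lambda)\to0$ as $\lambda\downarrow0$; hence $b_k\to0$. Because $\sum_{k}\P(\scrR(\alhat^{(k)};\cG)>b_k)\le\sum_{k}\lambda_k<\infty$, the Borel--Cantelli lemma yields that, almost surely, there is a random $K$ with $\scrR(\alhat^{(k)};\cG)\le b_k$ for all $k\ge K$. Since $\scrR(\alhat^{(k)};\cG)\ge0$ and $b_k\to0$, this gives $0\le\limsup_{k\to\infty}\scrR(\alhat^{(k)};\cG)\le\lim_{k\to\infty}b_k=0$, i.e.\ $\scrR(\alhat^{(k)};\cG)\to0$ with probability one, which is~\eqref{eq:main_consis_strong}. (That the estimate on $\P(\scrR(\alhat^{(k)};\cG)>b_k)$ may fail for finitely many small $k$ is harmless, as only the tail enters Borel--Cantelli.)

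\textbf{Main obstacle.} Given Theorem~\ref{thm:error_bound_total_G}, there is no conceptual hurdle; the one delicate point is calibrating $\delta_k$. It must shrink fast enough that $\sum_{k}\delta_k<\infty$, so that Borel--Cantelli applies, yet slowly enough that the prescribed feature and sample counts — which carry only a $\log(2/\lambda_k)$ factor, not $\log(1/\delta_k)$ for a faster-decaying $\delta_k$ — still verify the hypotheses $M_k\gtrsim\lambda_k^{-1}\log(2/\delta_k)$, $N_k\gtrsim\lambda_k^{-2}\log(2/\delta_k)$ of the error bound. Taking $\delta_k$ comparable to $\lambda_k$ threads this needle, and it is exactly here that the assumption $\sum_{k}\lambda_k<\infty$ is consumed. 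A minor secondary point, handled above, is making the envelope $b_k$ genuinely deterministic and independent of the auxiliary choice of $\GH$.
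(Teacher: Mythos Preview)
Your proposal is correct and follows essentially the same approach as the paper's proof: instantiate the misspecified bound~\eqref{eq:main_consis_inf} (from Theorem~\ref{thm:error_bound_total_G}) with $\delta_k=\lambda_k$, use Lemma~\ref{lem:converge_rkhs} to show the envelope $b_k\to0$, and apply Borel--Cantelli via the summability $\sum_k\lambda_k<\infty$. Your write-up is in fact more careful than the paper's in spelling out why the constants and the finitely many failing $k$ are harmless.
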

The proof relies on a standard Borel--Cantelli argument. See {\bf SM}~\ref{app:proof_main} for the details.

\begin{remark}[Universal RKHS]
    The assumption that $\cG$ belongs to the $L^2_\nu$-closure of the RKHS $\cH$ is automatically satisfied if $\cH$ is dense in $L^2_\nu(\cX;\cY)$. This is equivalent to its kernel being \emph{universal}~\cite{caponnetto2008universal,carmeli2010vector}. In this case, the trained RFM is a strongly consistent estimator of any $\cG\in L^\infty_\nu$. However, we are unaware of any practical characterizations of universality of the kernel in terms of its corresponding random feature pair $(\phi,\mu)$ for the vector-valued setting studied here.
\end{remark}

\subsection{Convergence rates}\label{sec:main_rates}
The previous subsection establishes convergence guarantees without any rates. We now establish quantitative bounds. Throughout what follows, we denote by $\cK\colon L^2_\nu(\cX;\cY)\to L^2_\nu(\cX;\cY) $ the integral operator \eqref{eq:app_rkhs_cov} corresponding to the operator-valued kernel function of the RKHS $\cH$ (see {\bf SM}~\ref{app:rkhs}).

\begin{theorem}[Slow rates under misspecification]\label{thm:main_rates_slow}
    Suppose that $\cG \in L^\infty_\nu(\cX;\cY) $ and that Assumption~\ref{ass:joint_dist_noise} holds. Additionally, assume that $\cG\in \image(\cK^{r/2})$ for some $r>0$, where $\cK$ is the integral operator corresponding to the kernel of RKHS $\cH$~\eqref{eq:prelim_rkhs}. Fix $\delta\in(0,1)$ and $\lambda\in(0,1)$. For $\Phi$ the RFM~\eqref{eq:rf_model} satisfying Assumption~\ref{ass:rf}, let $\alhat\in \R^M$ minimize $\scrR_N^\lambda(\slot;Y)$ given by \eqref{eq:prelim_reg}. If $M \ge \lambda^{-1}\log(32/\delta)$ and $N\geq \lambda^{-2}\log(16/\delta)$, then with probability at least $1-\delta$ it holds that
    \begin{equation}\label{eq:main_rates_slow}
        \E_{u\sim\nu}\norm{\cG(u)-\Phi(u;\alhat)}_{\cY}^2\lesssim \lambda^{r\mmin 1}\,.
    \end{equation}
    The implied constant in \eqref{eq:main_rates_slow} depends only on $\norm{\cG}_{L^\infty_\nu}$ and $\norm{\eta}_{\psi_1(\cY)}$.
\end{theorem}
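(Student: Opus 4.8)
The plan is to obtain Theorem~\ref{thm:main_rates_slow} as a direct corollary of the general bound in Theorem~\ref{thm:error_bound_total_G} by instantiating the misspecification decomposition of Assumption~\ref{ass:misspec} with a $\lambda$-dependent, Tikhonov-regularized approximant of $\cG$ in $\cH$, and then showing that the source condition $\cG\in\image(\cK^{r/2})$ forces the factor $\beta$ in \eqref{eq:alphabd_noisy_const} to grow no faster than $\lambda^{-(1-r)_{+}}$. When $r\ge 1$ there is nothing to do beyond observing that $\image(\cK^{r/2})\subseteq\image(\cK^{1/2})=\cH$, so $\cG\in\cH$ and Theorem~\ref{thm:error_bound_total_well} already yields the bound $\lesssim\lambda=\lambda^{r\mmin 1}$; hence assume $0<r<1$ henceforth. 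For each $\lambda\in(0,1)$ set
\[
    \cG_\lambda\defeq \cK(\cK+\lambda\Id)^{-1}\cG\in\cH\,,\qquad \rho_\lambda\defeq \cG-\cG_\lambda\,,
\]
so that $\cG=\rho_\lambda+\cG_\lambda$. Since $\cG\in L^\infty_\nu(\cX;\cY)$ by hypothesis and $\cH\embeds L^\infty_\nu(\cX;\cY)$ with $\norm{\slot}_{L^\infty_\nu}\le\norm{\slot}_\cH$ under Assumption~\ref{ass:rf} (because $\norm{\E_{\theta\sim\mu}[\al(\theta)\phi(u;\theta)]}_\cY\le\E_\theta\abs{\al(\theta)}\le\norm{\al}_{L^2_\mu}$), the residual $\rho_\lambda$ lies in $L^\infty_\nu$, so Assumption~\ref{ass:misspec} holds with this pair and Theorem~\ref{thm:error_bound_total_G} is applicable. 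One also recognizes $\cG_\lambda$ as the minimizer over $h\in\cH$ of $\norm{\cG-h}_{L^2_\nu}^2+\lambda\norm{h}_\cH^2$, i.e.\ the quantity denoted $\cG_\vartheta|_{\vartheta=\lambda}$ in the remark following Theorem~\ref{thm:error_bound_total_G}.

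Next I would record the two standard source-condition estimates (this is essentially Lem.~\ref{lem:rkhs_approx_rate}). Writing $\cG=\cK^{r/2}g$ for some $g\in L^2_\nu(\cX;\cY)$ and using the continuous functional calculus for the self-adjoint, positive operator $\cK$ together with $\norm{\cK}_{\mathrm{op}}\le 1$ (which follows from $\norm{\phi}_{L^\infty}\le 1$), one has
\[
    \norm{\rho_\lambda}_{L^2_\nu}^2=\norm{\lambda(\cK+\lambda\Id)^{-1}\cK^{r/2}g}_{L^2_\nu}^2\le\Bigl(\sup_{0\le\sigma\le 1}\tfrac{\lambda^2\sigma^{r}}{(\sigma+\lambda)^2}\Bigr)\norm{g}_{L^2_\nu}^2\le\lambda^{r}\norm{g}_{L^2_\nu}^2
\]
and
\[
    \norm{\cG_\lambda}_\cH^2=\norm{\cK^{1/2}(\cK+\lambda\Id)^{-1}\cK^{r/2}g}_{L^2_\nu}^2\le\Bigl(\sup_{0\le\sigma\le 1}\tfrac{\sigma^{1+r}}{(\sigma+\lambda)^2}\Bigr)\norm{g}_{L^2_\nu}^2\le\lambda^{r-1}\norm{g}_{L^2_\nu}^2\,,
\]
both scalar suprema being elementary for $0<r<1$ (split at $\sigma=\lambda$). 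Substituting these, together with $\norm{\rho_\lambda}_{L^\infty_\nu}^2\le 2\norm{\cG}_{L^\infty_\nu}^2+2\norm{\cG_\lambda}_\cH^2$, into the definition \eqref{eq:alphabd_noisy_const} of $\beta$, and using $\lambda<1$ so that the $O(1)$ and $O(\lambda)$ contributions are dominated by $\lambda^{r-1}\ge 1$, gives
\[
    \beta(\rho_\lambda,\lambda,\cG_\lambda,\eta)\lesssim \lambda^{r-1}=\lambda^{-(1-r)_{+}}\,,
\]
with implied constant depending only on $\norm{\cG}_{L^\infty_\nu}$, $\norm{\eta}_{\psi_1(\cY)}$, $r$, and $\norm{g}_{L^2_\nu}=\norm{\cK^{-r/2}\cG}_{L^2_\nu}$.

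Finally I would feed this into Theorem~\ref{thm:error_bound_total_G}: under $M\ge\lambda^{-1}\log(32/\delta)$ and $N\ge\lambda^{-2}\log(16/\delta)$, with probability at least $1-\delta$,
\[
    \E_{u\sim\nu}\norm{\cG(u)-\Phi(u;\alhat)}_\cY^2\le 79e^{3/2}\bigl(\norm{\cG}_{L^\infty_\nu}^2+2\beta(\rho_\lambda,\lambda,\cG_\lambda,\eta)\bigr)\lambda\lesssim\bigl(1+\lambda^{r-1}\bigr)\lambda\lesssim\lambda^{r}=\lambda^{r\mmin 1}\,,
\]
which is \eqref{eq:main_rates_slow}. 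The only genuine work is the pair of spectral estimates above, and the single point that requires care is that $\cG_\lambda$ genuinely belongs to $\cH$ in the sense of the representation \eqref{eq:prelim_rkhs} — equivalently, that $\cG_\lambda\in\image(\cK^{1/2})$ with $\norm{\cK^{-1/2}\cG_\lambda}_{L^2_\nu}<\infty$ — which is precisely what the second displayed bound guarantees (it is finite for every $r>0$); the embedding $\cH\embeds L^\infty_\nu$ and the positivity and norm bound of $\cK$ are part of the RKHS construction recalled in {\bf SM}~\ref{app:rkhs}.
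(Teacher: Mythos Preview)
Your proof is correct and follows essentially the same route as the paper: the paper invokes Theorem~\ref{thm:error_bound_total_G} to obtain \eqref{eq:main_consis_inf} (by choosing $\GH\in\cH$ arbitrary and bounding $\norm{\rho}_{L^\infty_\nu}^2\lesssim\norm{\cG}_{L^\infty_\nu}^2+\norm{\GH}_\cH^2$), then applies Lemma~\ref{lem:rkhs_approx_rate} to the resulting infimum $\scrA_\cG(\lambda)$, whereas you pick the explicit minimizer $\cG_\lambda=\cK(\cK+\lambda\Id)^{-1}\cG$ from the start and write out the two spectral estimates (which are precisely those in Lemma~\ref{lem:rkhs_approx_rate}) inline. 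The only cosmetic difference is that you split off the case $r\ge 1$ and appeal to Theorem~\ref{thm:error_bound_total_well}, while the paper handles both regimes uniformly via $\scrA_\cG(\lambda)\lesssim\lambda^{r\mmin 1}$.
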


Thm.~\ref{thm:main_rates_slow} provides a quantitative convergence rate as $\lambda \to 0$. For $r\geq 1$, i.e., when $\cG \in \cH$, we recover the linear convergence rate  of order $\lambda$ from Thm.~\ref{thm:error_bound_total_well}. The assumption that $\cG \in \image(\cK^{r/2})$ can be viewed as a ``fractional regularity'' assumption on the underlying operator; indeed, in specific settings it corresponds to a fractional (Sobolev) regularity of the underlying function. In general, it appears difficult to check this condition in practice, which is one limitation of our result.

A quantitative analog to the almost sure statement of Thm.~\ref{thm:main_consis_strong} also holds.
\begin{corollary}[Strong convergence rate]
\label{cor:main_rates_strong}
Instantiate the hypotheses and notation of Thm.~\ref{thm:main_consis_strong}. Assume in addition that $\cG\in \image(\cK^{r/2})$ for some $r>0$. Let $\{\lambda_k\}_{k\in\N}\subset (0,1)$ be a sequence of positive regularization parameters such that $\sum_{k\in \N} \lambda_k < \infty$. For each $k$, let $\alhat^{(k)}\in\R^{M_k}$ be the trained RFM coefficients with $M_k\simeq \lambda_k^{-1} \log(2/\lambda_k)$ and $N_k \simeq \lambda_k^{-2} \log(2/\lambda_k)$. It holds true that
\begin{equation}\label{eq:main_rates_strong}
    \limsup_{k\to \infty} \left(\frac{\E_{u\sim\nu}\norm{\cG(u) - \Phi(u;\alhat^{(k)})}_{\cY}^2}{\lambda_k^{r\mmin 1}}\right) <\infty\ \ \text{with probability one.}
\end{equation}
\end{corollary}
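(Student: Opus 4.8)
The plan is to run the same Borel--Cantelli argument that underlies Thm.~\ref{thm:main_consis_strong}, but now bookkeeping the explicit rate supplied by Thm.~\ref{thm:main_rates_slow} instead of the qualitative bound \eqref{eq:main_consis_inf}. The one structural feature that makes this work is that the implied constant in the rate \eqref{eq:main_rates_slow} depends \emph{only} on $\norm{\cG}_{L^\infty_\nu}$ and $\norm{\eta}_{\psi_1(\cY)}$, hence is uniform in $k$; this is what turns ``the normalized error is $O(1)$ with probability $\to 1$'' into ``the $\limsup$ of the normalized error is almost surely finite''.

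\emph{Step 1 (choice of per-step failure probabilities and verification of the sampling conditions).} Set $\delta_k\defeq\lambda_k$. Then $\sum_k\delta_k=\sum_k\lambda_k<\infty$, and since $\lambda_k\to 0$ we have $\log(2/\lambda_k)\to\infty$, so there is a deterministic $k_0$ with
\[
    \log(32/\delta_k)=\log 16+\log(2/\lambda_k)\le 2\log(2/\lambda_k),\qquad
    \log(16/\delta_k)\le 2\log(2/\lambda_k)\qquad\text{for all }k\ge k_0.
\]
Since the prescribed counts satisfy $M_k\simeq\lambda_k^{-1}\log(2/\lambda_k)$ and $N_k\simeq\lambda_k^{-2}\log(2/\lambda_k)$ (and enlarging $M_k,N_k$ within these orders only helps, so we may take the lower implied constants to be $\ge 2$), it follows that $M_k\ge\lambda_k^{-1}\log(32/\delta_k)$ and $N_k\ge\lambda_k^{-2}\log(16/\delta_k)$ for every $k\ge k_0$, i.e., the hypotheses of Thm.~\ref{thm:main_rates_slow} are met at scale $k$ with failure probability $\delta_k$.

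\emph{Step 2 (per-$k$ rate and Borel--Cantelli).} Apply Thm.~\ref{thm:main_rates_slow} for each $k\ge k_0$: under Assumption~\ref{ass:joint_dist_noise} and $\cG\in\image(\cK^{r/2})$ there is a constant $C\ge 1$ depending only on $\norm{\cG}_{L^\infty_\nu}$ and $\norm{\eta}_{\psi_1(\cY)}$ --- in particular independent of $k$, $\lambda_k$, $\delta_k$, $M_k$, $N_k$ --- such that the bad event
\[
    E_k\defeq\Bigl\{\,\E_{u\sim\nu}\norm{\cG(u)-\Phi(u;\alhat^{(k)})}_{\cY}^2>C\,\lambda_k^{r\mmin 1}\,\Bigr\}
\]
has $\P(E_k)\le\delta_k=\lambda_k$. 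Hence $\sum_{k\ge k_0}\P(E_k)\le\sum_k\lambda_k<\infty$, and the first Borel--Cantelli lemma --- which needs no independence across $k$ --- gives $\P\bigl(\limsup_{k\to\infty}E_k\bigr)=0$. Thus, with probability one there is a (random) index $K$ so that $\E_{u\sim\nu}\norm{\cG(u)-\Phi(u;\alhat^{(k)})}_{\cY}^2\le C\lambda_k^{r\mmin 1}$ for all $k\ge K$; dividing by $\lambda_k^{r\mmin 1}$ and passing to the $\limsup$ yields \eqref{eq:main_rates_strong} with the explicit almost-sure bound $C$.

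I do not expect a genuine obstacle here, as the heavy lifting is already done in Thm.~\ref{thm:main_rates_slow}. The only point requiring a little care is Step 1: the summable sequence $\{\delta_k\}$ must be chosen so that the logarithmic term $\log(1/\delta_k)$ that enters the sampling requirements is still of order $\log(1/\lambda_k)$, so that the prescribed feature and sample counts $M_k\simeq\lambda_k^{-1}\log(2/\lambda_k)$, $N_k\simeq\lambda_k^{-2}\log(2/\lambda_k)$ are enough; taking $\delta_k=\lambda_k$ (or $\delta_k=\lambda_k^{c}$ for any fixed $c\ge 1$, which keeps $\sum_k\delta_k<\infty$) suffices. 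Everything else is routine.
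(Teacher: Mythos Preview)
Your proposal is correct and follows essentially the same approach as the paper: apply Thm.~\ref{thm:main_rates_slow} with $\delta_k=\lambda_k$, then run the Borel--Cantelli argument from the proof of Thm.~\ref{thm:main_consis_strong} with the rate $\lambda_k^{r\mmin 1}$ in place of the qualitative sequence $s_k$. You have simply spelled out in more detail than the paper the verification of the sampling conditions and the uniformity in $k$ of the implied constant; the paper's own proof is a two-line sketch that leaves these points implicit.
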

Short proofs of both Thm.~\ref{thm:main_rates_slow} and Cor.~\ref{cor:main_rates_strong} may be found in {\bf SM}~\ref{app:proof_main}.

\section{Proof outline for the main theorem}\label{sec:proof}

Our main results are all derived from Thm.~\ref{thm:error_bound_total_G}, whose proof, schematically illustrated in Figure~\ref{fig:1}, we now outline. Following \cite{RR2008}, we break the proof into several steps that arise from the error decomposition
\begin{equation}\label{eq:proof_decomp}
    \scrR(\hat{\alpha};\cG)=\scrR_N(\hat{\alpha};\cG) + \bigl[\scrR(\hat{\alpha};\cG)-\scrR_N(\hat{\alpha};\cG)\bigr]\,.
\end{equation}
Sect.~\ref{sec:proof_emp} estimates the first term on the right hand side of~\eqref{eq:proof_decomp} while Sect.~\ref{sec:proof_gen} estimates the second.

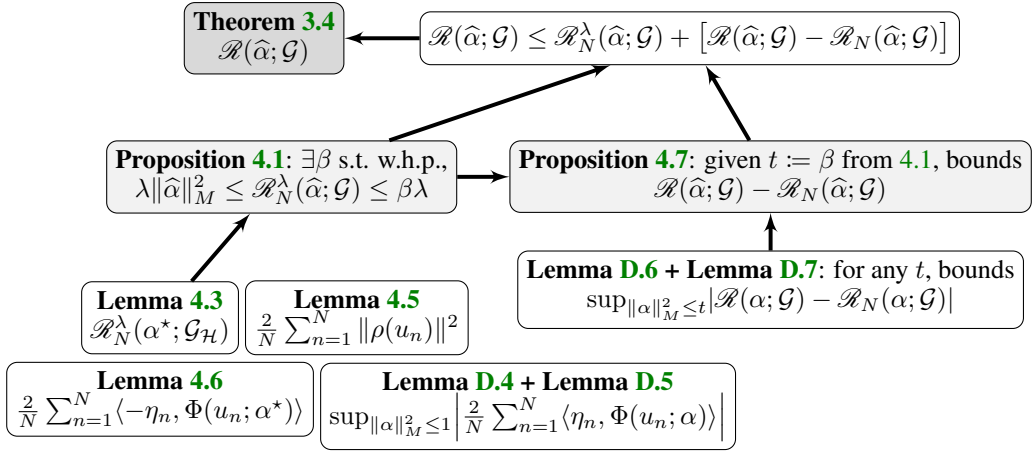
\begin{figure}[htbp] 
\centering
\begin{tikzpicture}[every text node part/.style={align=center}]
\node [box, fill=white!70!gray] (thm) {\textbf{Theorem \ref{thm:error_bound_total_G}} \\ $\scrR(\hat{\alpha};\cG)$};

\node [box, right=1cm of thm] (start) {$\scrR(\hat{\alpha};\cG) \le \scrR^\lambda_N(\hat{\alpha};\cG) + \bigl[\scrR(\hat{\alpha};\cG) -\scrR_N(\hat{\alpha};\cG) \bigr]$};

\node [box, fill=white!90!gray, below left=1cm and -0.5cm of start] (prop41) 
{\textbf{Proposition \ref{prop:train_erm_bound_noisy}}: $\exists \beta$ s.t. w.h.p., \\ $\lambda\norm{\alhat}_M^2\leq \scrR^\lambda_N(\hat{\alpha};\cG) \leq \beta \lambda$};

\node [box, fill=white!90!gray, below right=1cm and -5.99cm of start] (prop47) 
{\textbf{Proposition \ref{prop:est_error_combined}}: given $t\defeq \beta$ from \ref{prop:train_erm_bound_noisy}, bounds \\ $\scrR(\hat{\alpha};\cG) - \scrR_N(\hat{\alpha};\cG)$};

\node [box, below right =0.90cm and -5.0cm of prop41] (lem43) {\textbf{Lemma \ref{lem:alphabd_exist}} \\ $\scrR^\lambda_N(\alst; \cG_\cH)$};
\node [box, right=0.08cm of lem43] (lem45) {\textbf{Lemma \ref{lem:app_misspec_con}}\\$\frac{2}{N} \sum_{n=1}^N \Vert \rho(u_n) \Vert^2$};
\node [box, below=0.1cm of lem43] (lem46) {\textbf{Lemma \ref{lem:app_noise_sup_cross}}\\$\frac{2}{N} \sum_{n=1}^N \langle -\eta_n, \Phi(u_n;\alst) \rangle$};
\node [box, below right =0.1cm and -1.975cm of lem45] (lemD45) {\textbf{Lemma \ref{lem:app_noise_sup_con} + Lemma \ref{lem:app_noise_sup_expect}} \\ $\sup_{\norm{\al}_M^2\leq 1} \left| \frac{2}{N} \sum_{n=1}^N \langle \eta_n, \Phi(u_n;\alpha) \rangle \right|$};

\node [box, below =.5cm of prop47] (lemD6) {\textbf{Lemma \ref{lem:app_loss_sup_con} + \textbf{Lemma \ref{lem:app_loss_sup_expect}}}: for any $t$, bounds\\
$\sup_{\norm{\al}_M^2\leq t} \left|\scrR(\alpha;\cG) - \scrR_N(\alpha;\cG)\right|$};

\path [connector, line width=1.5pt] (start) -- (thm);
\path [connector, line width=1.5pt] (prop41) -- (start);
\path [connector, line width=1.5pt] (prop41) -- (prop47);
\path [connector, line width=1.5pt] (prop47) -- (start);
\path [connector, line width=1.5pt] (lem43) -- (prop41);
\path [connector, line width=1.5pt] (lemD6) -- (prop47);

\end{tikzpicture}
\caption{Flow chart illustrating the proof of Theorem \ref{thm:error_bound_total_G}.}
\label{fig:1}
\vspace{-10pt}
\end{figure}

\subsection{Bounding the regularized empirical risk}\label{sec:proof_emp}
The main technical contribution of this work is a tight bound on the $\cG$-empirical risk $\scrR_N(\hat{\alpha};\cG)$ for the trained RFM. The analysis involves controlling several sources of error and careful truncation arguments to avoid unnecessarily strong assumptions on the problem. The result is the following.

\begin{proposition}[Regularized $\cG$-empirical risk bound]\label{prop:train_erm_bound_noisy}
Let Assumptions \ref{ass:rf} and \ref{ass:joint_dist_noise} hold. Suppose that $\cG = \rho+\GH $ satisfies Assumption~\ref{ass:misspec}. Fix $\delta\in(0,1)$, $\lambda\in(0,1)$, $M\in\N$, and $N\in\N$. Let $\alhat\in \R^M$ be the minimizer of the regularized $Y$-empirical risk $\scrR_N^\lambda(\slot;Y)$ given by \eqref{eq:prelim_reg}. If $M \ge \lambda^{-1}\log(16/\delta)$ and $N\geq \lambda^{-2}\log(8/\delta)$, then
\begin{equation}\label{eq:alphabd_noisy_risk}
    \scrR_N^\lambda(\hat{\alpha};\cG) \leq \lambda \beta(\rho,\lambda, \GH, \eta)
\end{equation}
with probability at least $1-\delta$, where the multiplicative factor $\beta(\rho,\lambda, \GH, \eta)$ is given by \eqref{eq:alphabd_noisy_const}.
\end{proposition}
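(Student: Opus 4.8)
The plan is to follow the ``kitchen sinks'' strategy of~\cite{RR2008}: bound $\scrR_N^\lambda(\alhat;\cG)$ by playing $\alhat$ off against a carefully chosen competitor $\alst\in\R^M$ in the defining inequality $\scrR_N^\lambda(\alhat;Y)\le\scrR_N^\lambda(\alst;Y)$, never touching the closed-form ridge solution. First I would record that, since $y_n=\cG(u_n)+\eta_n$, for every $\al\in\R^M$
\begin{align*}
  \scrR_N^\lambda(\al;Y) ={}& \scrR_N^\lambda(\al;\cG) + \frac2N\sum_{n=1}^N\ip{\eta_n}{\cG(u_n)-\Phi(u_n;\al)} \\
  & {}+ \frac1N\sum_{n=1}^N\norm{\eta_n}_\cY^2 ,
\end{align*}
so the noise-energy term is common to both sides of the optimality inequality and cancels, leaving
\begin{align*}
  \scrR_N^\lambda(\alhat;\cG) \le{}& \scrR_N^\lambda(\alst;\cG) + \frac2N\sum_{n=1}^N\ip{\eta_n}{\Phi(u_n;\alhat)} \\
  & {}- \frac2N\sum_{n=1}^N\ip{\eta_n}{\Phi(u_n;\alst)} .
\end{align*}
Because $\al\mapsto\Phi(\slot;\al)$ is linear and $\norm{\slot}_M$ is homogeneous, the first cross term is at most $\norm{\alhat}_M\,S$ with $S\defeq\sup_{\norm{\al}_M\le1}\bigl|\tfrac2N\sum_n\ip{\eta_n}{\Phi(u_n;\al)}\bigr|$; a Young inequality together with $\lambda\norm{\alhat}_M^2\le\scrR_N^\lambda(\alhat;\cG)$ absorbs $\tfrac12\scrR_N^\lambda(\alhat;\cG)$ into the left side, and $\cG=\GH+\rho$ with $\norm{a+b}^2\le2\norm{a}^2+2\norm{b}^2$ then yields
\begin{align*}
  \scrR_N^\lambda(\alhat;\cG) \le{}& 4\,\scrR_N^\lambda(\alst;\GH) + \frac4N\sum_{n=1}^N\norm{\rho(u_n)}_\cY^2 \\
  & {}+ \frac{S^2}{\lambda} - \frac4N\sum_{n=1}^N\ip{\eta_n}{\Phi(u_n;\alst)} .
\end{align*}
This is exactly the four-term split of the proof flow chart (Fig.~\ref{fig:1}), and it remains to control the four pieces.

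For the competitor I would take the Monte Carlo coefficient vector $\alst_m\defeq\alpha_\lambda(\theta_m)$, where $\alpha_\lambda\in L^2_\mu(\Theta;\R)$ is the minimal-norm representer of a $\lambda$-regularized surrogate $\cG_\lambda\in\cH$ of $\GH$ (cf.\ the construction in {\bf SM}~\ref{app:rkhs}). The structural fact that makes this work --- and the reason the normalization $\norm{\phi}_{L^\infty}\le1$ is imposed --- is that the minimal-norm representer of such a regularized element is of the form $\theta\mapsto\ip{\phi(\slot;\theta)}{v}_{L^2_\nu}$ for a suitable $v\in L^2_\nu$ and is therefore \emph{bounded}, $\norm{\alpha_\lambda}_{L^\infty_\mu}\lesssim\lambda^{-1/2}\norm{\GH}_\cH$, while still $\norm{\cG_\lambda}_\cH\le\norm{\GH}_\cH$ and $\norm{\GH-\cG_\lambda}_{L^2_\nu}^2\lesssim\lambda\norm{\GH}_\cH^2$ (standard regularization estimates). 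This yields Lem.~\ref{lem:alphabd_exist}: split $\scrR_N^\lambda(\alst;\GH)$ into $\tfrac1N\sum_n\norm{\GH(u_n)-\Phi(u_n;\alst)}_\cY^2$ and the penalty $\lambda\norm{\alst}_M^2$; the former splits further into the bias $\norm{\GH-\cG_\lambda}^2=O(\lambda\norm{\GH}_\cH^2)$ and the Monte Carlo fluctuation $\Phi(\slot;\alst)-\cG_\lambda$, whose population mean-square is $\tfrac1M\E_\theta\abs{\alpha_\lambda(\theta)}^2\le\tfrac1M\norm{\GH}_\cH^2$ --- which under $M\ge\lambda^{-1}\log(32/\delta)$ is a factor $\log(1/\delta)$ \emph{smaller} than the target $\lambda\norm{\GH}_\cH^2$. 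A variance-aware bounded-difference (Bernstein-type McDiarmid) inequality over the $M$ independent features then delivers $\norm{\cG_\lambda-\Phi(\slot;\alst)}_{L^2_\nu}^2\lesssim\lambda\norm{\GH}_\cH^2$ with \emph{no surviving logarithm}; a Bernstein bound over the $N$ samples (absorbed analogously into $N\ge\lambda^{-2}\log(16/\delta)$, using $\norm{\alst}_M\le\norm{\alpha_\lambda}_{L^\infty_\mu}$ for the almost-sure bound on the summands) upgrades this to the empirical norm, and the same Bernstein-over-features bound gives $\norm{\alst}_M^2\lesssim\norm{\GH}_\cH^2$. Hence $\scrR_N^\lambda(\alst;\GH)\lesssim\lambda\norm{\GH}_\cH^2$ with high probability.

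The residual term $\tfrac4N\sum_n\norm{\rho(u_n)}_\cY^2$ is a plain Bernstein concentration about $\E_u\norm{\rho(u)}_\cY^2$ with variance $\le\norm{\rho}_{L^\infty_\nu}^2\,\E_u\norm{\rho(u)}_\cY^2$, yielding $\lesssim\E_u\norm{\rho(u)}_\cY^2+\lambda^2\norm{\rho}_{L^\infty_\nu}^2$ under $N\ge\lambda^{-2}\log(8/\delta)$; this is Lem.~\ref{lem:app_misspec_con} and supplies the $\lambda^{-1}\E_u\norm{\rho(u)}_\cY^2$ and $\lambda\norm{\rho}_{L^\infty_\nu}^2$ contributions to $\beta$ in~\eqref{eq:alphabd_noisy_const}. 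For the two noise terms I would use linearity of $\Phi$ to write $\tfrac2N\sum_n\ip{\eta_n}{\Phi(u_n;\al)}=\tfrac2M\sum_m\al_mZ_m$ with $Z_m\defeq\tfrac1N\sum_n\ip{\eta_n}{\phi(u_n;\theta_m)}$, so that $S=2\norm{Z}_M$ and $S^2/\lambda=4\norm{Z}_M^2/\lambda$. Since $\E[\eta_n\condbar u_n]=0$ and $\norm{\phi}_{L^\infty}\le1$, each $Z_m$ is conditionally centered with $\E\norm{Z}_M^2\lesssim N^{-1}\norm{\eta}_{\psi_1(\cY)}^2$ (Lem.~\ref{lem:app_noise_sup_expect}); concentrating $\norm{Z}_M^2$ about this mean --- after truncating the $\eta_n$ at a level calibrated so that the truncated part concentrates by Bernstein and the tail part is controlled by the $\psi_1$-norm (Lem.~\ref{lem:app_noise_sup_con}) --- gives $S^2/\lambda\lesssim(\lambda N)^{-1}\norm{\eta}_{\psi_1(\cY)}^2\lesssim\lambda\norm{\eta}_{\psi_1(\cY)}^2$. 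The last term $-\tfrac4N\sum_n\ip{\eta_n}{\Phi(u_n;\alst)}$ is handled by the same truncated-Bernstein argument applied to a conditionally centered sum (here $\alst$ depends only on $\Th$), together with $\tfrac1N\sum_n\norm{\Phi(u_n;\alst)}_\cY^2\le\norm{\alst}_M^2\lesssim\norm{\GH}_\cH^2$ from Lem.~\ref{lem:alphabd_exist}; this is Lem.~\ref{lem:app_noise_sup_cross} and costs $\lesssim\lambda(\norm{\GH}_\cH^2+\norm{\eta}_{\psi_1(\cY)}^2)$. Re-apportioning $\delta$ over the $O(1)$ concentration events by a union bound, collecting, and tracking constants gives $\scrR_N^\lambda(\alhat;\cG)\lesssim\lambda\bigl(\norm{\GH}_\cH^2+\norm{\eta}_{\psi_1(\cY)}^2+\lambda^{-1}\E_u\norm{\rho(u)}_\cY^2+\lambda\norm{\rho}_{L^\infty_\nu}^2\bigr)=\lambda\,\beta(\rho,\lambda,\GH,\eta)$.

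The main obstacle is Lem.~\ref{lem:alphabd_exist}: obtaining the sharp $O(\lambda\norm{\GH}_\cH^2)$ bound on $\scrR_N^\lambda(\alst;\GH)$ with only $M\gtrsim\lambda^{-1}$ features and no spurious logarithmic factor in $\beta$. This forces both ideas above --- using an $L^\infty_\mu$-bounded regularized representer (whose sup-norm is merely $O(\lambda^{-1/2})$, which is precisely why $\norm{\phi}_{L^\infty}\le1$ is assumed) rather than the possibly unbounded minimal-norm representer of $\GH$ itself, and a variance-aware concentration in which the relevant variances sit a factor $\log(1/\delta)$ below the target accuracy, so the logarithm is absorbed into the hypotheses $M\gtrsim\lambda^{-1}\log(1/\delta)$, $N\gtrsim\lambda^{-2}\log(1/\delta)$ instead of into the constant. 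Maintaining this log-free budget while simultaneously accommodating unbounded, input-dependent subexponential noise in the last two terms --- which is where the truncation arguments enter --- is the other delicate point; everything else is bookkeeping of constants.
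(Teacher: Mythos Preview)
Your overall decomposition and strategy match the paper's exactly: start from $\scrR_N^\lambda(\alhat;Y)\le\scrR_N^\lambda(\alst;Y)$, cancel the noise energy, homogenize the $\alhat$-cross term and absorb it via Young's inequality, then split into the same four pieces controlled by Lemmas~\ref{lem:alphabd_exist}, \ref{lem:app_misspec_con}, \ref{lem:app_noise_sup_cross}, and \ref{lem:app_noise_sup_con}/\ref{lem:app_noise_sup_expect}.

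The one genuinely different ingredient is your construction of the competitor $\alst$ for Lem.~\ref{lem:alphabd_exist}. The paper does \emph{not} use a Tikhonov surrogate $\cG_\lambda$; instead it takes $\alst_m=\alpha_\cH(\theta_m)\one_{\{|\alpha_\cH(\theta_m)|\le T\}}$ with explicit truncation level $T=\sqrt{\lambda^{-1}\E_\theta|\alpha_\cH(\theta)|^2}$, i.e., a hard-cut Monte Carlo sample of the minimal-norm representer of $\GH$ itself. The analysis then splits $\scrR_N^\lambda(\alst;\GH)$ into the truncation bias $\|\GH-\E_\theta[\alpha_{\le T}\phi]\|^2\le(\E_\theta|\alpha_\cH|^2)^2/T^2=\lambda\|\GH\|_\cH^2$, a Bernstein bound over $\theta_m$ \emph{directly in the empirical norm $L^2_{\nu_N}$} for the Monte Carlo fluctuation, and a Bernstein bound on $\|\alst\|_M^2$. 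Your route---taking the untruncated representer $\alpha_\lambda=A^*(\cK+\lambda)^{-1}\GH$ of $\cG_\lambda$, which is automatically $L^\infty_\mu$-bounded by $\tfrac12\lambda^{-1/2}\|\GH\|_\cH$---is correct and reaches the same $O(\lambda\|\GH\|_\cH^2)$ conclusion. The trade-off: the paper's truncation argument is entirely elementary (no spectral calculus, no appeal to {\bf SM}~\ref{app:rkhs}), and it yields a bound that holds with probability over $\Th$ alone, almost surely in $\{u_n\}$; your version needs the integral-operator machinery and, as written, an extra concentration step over the data to pass from $L^2_\nu$ to $L^2_{\nu_N}$ (the paper sidesteps this by applying Bernstein in $L^2_{\nu_N}$ with $\{u_n\}$ frozen, Lem.~\ref{lem:2}). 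For the noise terms, the paper also does not truncate the $\eta_n$: it applies the Banach-space subexponential Bernstein inequality (Cor.~\ref{cor:app_subexp_norm}) directly to $\ip{\eta_n}{\Phi(u_n;\slot)}$ viewed as a $C(\cA_1;\R)$-valued random variable, which is shorter than your truncate-plus-tail split.
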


Since $\lambda\norm{\alhat}_M^2\leq \scrR_N^{\lambda}(\alhat;\cG)$, the next corollary controlling the norm~\eqref{eq:prelim_reg} of $\alhat$ is immediate. It plays a crucial role in developing an upper bound for the second term on the right side of \eqref{eq:proof_decomp}.
\begin{corollary}[Trained RFM norm bound]\label{cor:train_norm_bound_noisy}
    Instantiate the hypotheses  and notation of Prop.~\ref{prop:train_erm_bound_noisy}. Fix $\delta\in(0,1)$ and $\lambda\in(0,1)$. If $M \ge \lambda^{-1}\log(16/\delta)$ and $N\geq \lambda^{-2}\log(8/\delta)$, then
    \begin{equation}\label{eq:alphabd_noisy}
        \alhat \in \cA_{\beta}\defeq \set{\al\in\R^M}{\norm{\al}_M^2\leq \beta}
    \end{equation}
    with probability at least $1-\delta$. The radius $\beta\defeq \beta(\rho,\lambda,\GH,\eta)$ of the norm bound is given by~\eqref{eq:alphabd_noisy_const}.
\end{corollary}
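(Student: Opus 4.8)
The plan is to obtain Corollary~\ref{cor:train_norm_bound_noisy} as a one-line consequence of Proposition~\ref{prop:train_erm_bound_noisy}; this is precisely why the text above calls it immediate. The only ingredient needed beyond Proposition~\ref{prop:train_erm_bound_noisy} is the elementary observation that the penalty term in Definition~\ref{def:prelim_emp} is dominated by the full regularized $\cG$-empirical risk:
\[
  \scrR_N^\lambda(\al;\cG) \;=\; \frac{1}{N}\sum_{n=1}^N \norm{\cG(u_n)-\Phi(u_n;\al)}_\cY^2 \;+\; \lambda\norm{\al}_M^2 \;\ge\; \lambda\norm{\al}_M^2 \qquad\text{for all }\al\in\R^M,
\]
because the data-fit term is a sum of squared $\cY$-norms, hence nonnegative.

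Concretely, the steps I would carry out are: (i) under the hypotheses $M\ge\lambda^{-1}\log(16/\delta)$ and $N\ge\lambda^{-2}\log(8/\delta)$, invoke Proposition~\ref{prop:train_erm_bound_noisy} to obtain an event of probability at least $1-\delta$ on which $\scrR_N^\lambda(\alhat;\cG)\le\lambda\beta(\rho,\lambda,\GH,\eta)$; (ii) on that event, evaluate the display above at $\al=\alhat$ and chain the two bounds, $\lambda\norm{\alhat}_M^2\le\scrR_N^\lambda(\alhat;\cG)\le\lambda\beta(\rho,\lambda,\GH,\eta)$; (iii) divide by $\lambda>0$ to conclude $\norm{\alhat}_M^2\le\beta(\rho,\lambda,\GH,\eta)$, i.e., $\alhat\in\cA_\beta$, on the same event. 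This is exactly the claim \eqref{eq:alphabd_noisy}, with the identical failure probability and the identical feature/sample complexity; no hypotheses on $(\phi,\mu)$ or on the noise beyond those already used by Proposition~\ref{prop:train_erm_bound_noisy} are required.

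At the level of the corollary there is essentially no obstacle: the entire difficulty is already packaged into Proposition~\ref{prop:train_erm_bound_noisy}, whose proof is the one outlined in Sect.~\ref{sec:proof_emp} and Figure~\ref{fig:1}. If I were re-deriving that proposition instead, the crux would be bounding $\scrR_N^\lambda(\alhat;\cG)$ by something that is simultaneously linear in $\lambda$ (rather than merely $O(\sqrt{\lambda})$) and stable under the unbounded, conditionally centered subexponential noise $\eta$ permitted by Assumption~\ref{ass:joint_dist_noise}. The natural handle is the minimality inequality $\scrR_N^\lambda(\alhat;Y)\le\scrR_N^\lambda(\alst;Y)$ for the coefficient $\alst$ of a well-chosen RKHS approximant $\GH$ (whose existence is the content of a preliminary lemma); expanding the squares then reduces matters to controlling, term by term, the regularized $\GH$-empirical risk $\scrR_N^\lambda(\alst;\GH)$, the misspecification residual $\frac{2}{N}\sum_n\norm{\rho(u_n)}_\cY^2$, the cross term $\frac{2}{N}\sum_n\langle -\eta_n,\Phi(u_n;\alst)\rangle$, and the noise supremum $\sup_{\norm{\al}_M^2\le1}\abs{\frac{2}{N}\sum_n\langle \eta_n,\Phi(u_n;\al)\rangle}$, each via a Bernstein-type concentration estimate combined with truncation of $\eta$ at a level calibrated to $\log(1/\delta)$. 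None of this machinery, however, enters the proof of Corollary~\ref{cor:train_norm_bound_noisy} itself.
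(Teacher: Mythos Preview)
Your proposal is correct and matches the paper's own argument exactly: the paper states that the corollary is immediate from Proposition~\ref{prop:train_erm_bound_noisy} via the inequality $\lambda\norm{\alhat}_M^2\le \scrR_N^\lambda(\alhat;\cG)$, which is precisely the observation you use.
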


The core elements of the proof of Prop.~\ref{prop:train_erm_bound_noisy} are provided in the next few subsections, with the full argument in {\bf SM}~\ref{app:proof_proof_emp}. The main idea is to upper bound the $\cG$-empirical risk by its regularized counterpart and then decompose the latter into several (coupled) error contributions.

To do this, first fix any $\alpha\in\R^M$. It holds that
\begin{equation}\label{eq:proof_emp_expand}
    \scrR_N^{\lambda}(\alpha;Y)
    = \scrR_N^{\lambda}(\alpha;\cG) + \frac{2}{N}\sum_{n=1}^N\ip{\eta_n}{\cG(u_n)-\Phi(u_n;\alpha)}_{\cY} + \frac{1}{N}\sum_{n=1}^N\norm{\eta_n}_{\cY}^2
\end{equation}
because $\cY$ is a Hilbert space and $y_n=\cG(u_n)+\eta_n$.
Using this, a short calculation shows that
\begin{align}\label{eq:proof_noisy_coef_decomp}
        \scrR_N^\lambda(\hat{\alpha};\cG) 
        &= \left[\scrR_N^\lambda(\hat{\alpha};Y) - \scrR_N^\lambda(\alpha;Y)\right]
        + \scrR_N^\lambda(\alpha;\cG) + \frac2N \sum_{n=1}^N \langle \eta_n, \Phi(u_n;\hat{\alpha}) - \Phi(u_n;\alpha)\rangle_{\cY} 
        \notag
        \\
        &\le \scrR_N^\lambda(\alpha;\cG) + 
        \frac2N \sum_{n=1}^N \langle -\eta_n, \Phi(u_n;\alpha)\rangle_{\cY}
        + \frac2N \sum_{n=1}^N \langle \eta_n, \Phi(u_n;\hat{\alpha})\rangle_{\cY}\,.
\end{align}
In the second line, we used the fact that $\hat{\alpha}$ minimizes $\scrR_N^\lambda(\slot;Y)$. Since $\al\in\R^M$ is arbitrary, we have the freedom to choose $\alpha$ so that the first term is small (see Sect. \ref{sec:proof_emp_approx}~and~\ref{sec:proof_emp_misspec}). With $\alpha$ fixed, the second term averages to zero by our assumptions on the noise, and hence, we expect it to be small with high probability (see Sect. \ref{sec:proof_emp_noise}).

The third term in \eqref{eq:proof_noisy_coef_decomp} exhibits high correlation between the noise $\eta_n$ and the trained RFM coefficients $\alhat$, making it more difficult to estimate.
To control this last term, we first note that it is homogeneous in $\Vert \hat{\alpha}\Vert_M$, which can be used to derive an upper bound in terms of a supremum over the unit ball with respect to $\Vert \slot \Vert_M$. The resulting expression is then bounded with empirical process techniques (see {\bf SM}~\ref{app:proof_proof_emp_noise}). For the complete details of the required argument we refer the reader to {\bf SM}~\ref{app:proof_proof_emp}. 

In the remainder of this subsection, we estimate the first two terms on the right hand side of~\eqref{eq:proof_noisy_coef_decomp}. Using the fact that $\cG=\rho+\GH$, the first term can be split into two contributions,
\begin{equation}\label{eq:proof_emp_2split}
    \scrR_N^\lambda(\alpha;\cG)\leq 2\scrR_N^\lambda(\alpha;\GH) + \frac{2}{N}\sum_{n=1}^N\norm{\rho(u_n)}_\cY^2\,.
\end{equation}
These contributions to the first term in \eqref{eq:proof_noisy_coef_decomp} are bounded in Sect.~\ref{sec:proof_emp_approx} and \ref{sec:proof_emp_misspec}. The second term in \eqref{eq:proof_noisy_coef_decomp} is controlled in Sect.~\ref{sec:proof_emp_noise}.

\subsubsection{Bounding the approximation error}\label{sec:proof_emp_approx}
We begin with the term $\scrR_N^\lambda(\al;\GH)$, which may be viewed as an empirical \emph{approximation error} due to $\al$ being arbitrary. Its only dependence on the data is through $\U$ in~\eqref{eq:prelim_reg}. Intuitively, this term should behave like its population counterpart. It is then natural to choose a Monte Carlo approximation $\al_m=\al_\cH(\theta_m)$ for $\al$, where $\al_\cH\in L^2_\mu(\Theta;\R)$ is identified with $\GH$ as in \eqref{eq:prelim_rkhs}. However, our intuition that $\lambda\norm{\al}_M^2$ should concentrate around $\lambda\norm{\al_\cH}_{L^2_\mu}^2$ fails because it is generally not possible to control the tail of the random variable $|\al_\cH(\theta)|^2$. We next show that this problem can be overcome by a carefully tuned truncation argument combined with Bernstein's inequality. 

\begin{lemma}[Construction of approximator]\label{lem:alphabd_exist}
Suppose that $\GH\defeq \cG\in \cH$. Fix $\delta\in(0,1)$, $\lambda>0$, $N\in\N$, and $M\in\N$. Let $\Th\sim\mu^{\otimes M}$ and $\U\sim\nu^{\otimes N}$. Define $\alst \in \R^M$ componentwise by
\begin{equation}\label{eq:alstar_exist}
    \alst_m\defeq \al_\cH(\theta_m)\one_{\{\abs{\al_\cH(\theta_m)}\leq T\}}\,,\qw T \defeq \sqrt{\lambda^{-1}\E_{\theta\sim\mu}|\al_\cH(\theta)|^2}
\end{equation}
and
$\GH=\E_{\theta\sim\mu}[\al_\cH(\theta)\phi(\slot;\theta)]$ with $\norm{\GH}_\cH^2=\E_{\theta\sim\mu}\abs{\al_\cH(\theta)}^2$.
If $M \ge \lambda^{-1}\log(4/\delta)$, then with probability at least $ 1-\delta$ in the random feature parameters $\theta_1,\dots, \theta_M$, it holds that
\begin{equation}\label{eq:alphabd_exist}
    \scrR_N^\lambda(\alst;\GH) \le 81 \lambda \Vert \GH \Vert_{\cH}^2.
\end{equation}
\end{lemma}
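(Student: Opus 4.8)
The plan is to bound the two pieces of $\scrR_N^\lambda(\alst;\GH)=\frac1N\sum_{n=1}^N\norm{\GH(u_n)-\Phi(u_n;\alst)}_\cY^2+\lambda\norm{\alst}_M^2$ separately, working conditionally on the data $\U$ so that all probability statements below refer only to $\Th\sim\mu^{\otimes M}$ (by Fubini this yields the stated conclusion for $\nu^{\otimes N}$-almost every realization of $\U$). Write $h(\theta)\defeq\al_\cH(\theta)\one_{\{\abs{\al_\cH(\theta)}\le T\}}$, so that $\alst_m=h(\theta_m)$ and $\abs{h}\le T$, and split the pointwise residual as $\GH(u_n)-\Phi(u_n;\alst)=\beta_n+\gamma_n$ with truncation bias $\beta_n\defeq\E_{\theta\sim\mu}[\al_\cH(\theta)\one_{\{\abs{\al_\cH(\theta)}>T\}}\phi(u_n;\theta)]$ and Monte Carlo error $\gamma_n\defeq\E_{\theta\sim\mu}[h(\theta)\phi(u_n;\theta)]-\frac1M\sum_{m=1}^M h(\theta_m)\phi(u_n;\theta_m)$.

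The bias is handled deterministically: from $\abs{\al_\cH}\one_{\{\abs{\al_\cH}>T\}}\le\abs{\al_\cH}^2/T$ and $\norm{\phi}_{L^\infty}\le1$ one gets $\norm{\beta_n}_\cY\le T^{-1}\E_\theta\abs{\al_\cH(\theta)}^2=T^{-1}\norm{\GH}_\cH^2=\sqrt\lambda\,\norm{\GH}_\cH$ for every $n$ --- this is exactly where the choice $T=\sqrt{\lambda^{-1}\norm{\GH}_\cH^2}$ enters --- hence $\frac1N\sum_n\norm{\beta_n}_\cY^2\le\lambda\norm{\GH}_\cH^2$. For the Monte Carlo term I would regard $\gamma=(\gamma_1,\dots,\gamma_N)$ as an element of the separable Hilbert space $\cY^N$ with inner product $\langle a,b\rangle\defeq\frac1N\sum_n\langle a_n,b_n\rangle_\cY$: conditionally on $\U$ one has $\gamma=-\frac1M\sum_{m=1}^M G_m$, where the $G_m$ defined by $(G_m)_n\defeq h(\theta_m)\phi(u_n;\theta_m)-\E_\theta[h(\theta)\phi(u_n;\theta)]$ are i.i.d.\ and mean zero in $\cY^N$ with $\norm{G_m}\le2T$ and $\E\norm{G_m}^2\le\E_\theta\abs{h(\theta)}^2\le\norm{\GH}_\cH^2$. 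A Hilbert-space Bernstein inequality then gives, with probability at least $1-\delta/2$, $\norm{\gamma}\lesssim T\log(4/\delta)/M+\sqrt{\norm{\GH}_\cH^2\log(4/\delta)/M}$; since $M\ge\lambda^{-1}\log(4/\delta)$ makes $\log(4/\delta)/M\le\lambda$ and $\lambda T=\sqrt\lambda\,\norm{\GH}_\cH$, both terms are $\lesssim\sqrt\lambda\,\norm{\GH}_\cH$, and the triangle inequality in $\cY^N$ yields $\bigl(\frac1N\sum_n\norm{\GH(u_n)-\Phi(u_n;\alst)}_\cY^2\bigr)^{1/2}\le7\sqrt\lambda\,\norm{\GH}_\cH$ on this event (carrying the explicit constants from the Bernstein bound).

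For the regularization term I would apply the scalar Bernstein inequality to the i.i.d.\ variables $\abs{\alst_m}^2=\abs{h(\theta_m)}^2$, which satisfy $0\le\abs{\alst_m}^2\le T^2$, $\E\abs{\alst_m}^2\le\norm{\GH}_\cH^2$, and $\Var(\abs{\alst_m}^2)\le T^2\norm{\GH}_\cH^2$: with probability at least $1-\delta/2$ this gives $\norm{\alst}_M^2\le\norm{\GH}_\cH^2+\sqrt{2T^2\norm{\GH}_\cH^2\log(2/\delta)/M}+\tfrac23 T^2\log(2/\delta)/M$, and once more $\log(2/\delta)/M\le\lambda$ together with $\lambda T^2=\norm{\GH}_\cH^2$ bounds the right-hand side by $4\norm{\GH}_\cH^2$, i.e.\ $\lambda\norm{\alst}_M^2\le4\lambda\norm{\GH}_\cH^2$. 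A union bound over these two events (failure probability $\delta/2$ each, which is why the hypothesis reads $M\ge\lambda^{-1}\log(4/\delta)$) and $\sqrt{a+b}\le\sqrt a+\sqrt b$ then give $\scrR_N^\lambda(\alst;\GH)^{1/2}\le7\sqrt\lambda\,\norm{\GH}_\cH+2\sqrt\lambda\,\norm{\GH}_\cH=9\sqrt\lambda\,\norm{\GH}_\cH$, which is the asserted bound~\eqref{eq:alphabd_exist}.

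The main difficulty --- the point flagged in the paragraph preceding the lemma --- is the choice of the truncation level $T$. The bias contributes $T^{-1}\norm{\GH}_\cH^2$ and so wants $T$ large, while both Bernstein applications involve an almost-sure range that grows with $T$ (of size $\simeq T$ for the vectors $G_m$ and $\simeq T^2$ for the coefficients $\abs{\alst_m}^2$), which makes their correction terms grow with $T$; taking $T=\lambda^{-1/2}\norm{\GH}_\cH$ is the scaling that balances these, making $\lambda T^2\simeq\norm{\GH}_\cH^2\simeq\E_\theta\abs{\al_\cH(\theta)}^2$ and reducing every relevant exponent to order $M\lambda$, so that the feature-count hypothesis $M\gtrsim\lambda^{-1}\log(1/\delta)$ is exactly what is needed. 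The structural feature exploited throughout is that only the second moment $\E_\theta\abs{\al_\cH(\theta)}^2=\norm{\GH}_\cH^2$ ever enters --- no control on higher or heavier-tailed moments of $\abs{\al_\cH(\theta)}$ is required, which is precisely what would fail for the naive Monte Carlo choice $\al_m=\al_\cH(\theta_m)$ without truncation.
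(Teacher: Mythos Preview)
Your proposal is correct and follows essentially the same route as the paper: the same three-way split into truncation bias, Monte Carlo error (treated via a Hilbert-space Bernstein inequality in $L^2_{\nu_N}(\cX;\cY)\cong\cY^N$), and the regularization term (treated via scalar Bernstein on $|\alst_m|^2$), with the same tuning $T=\lambda^{-1/2}\|\GH\|_\cH$ doing the balancing. The only cosmetic difference is that you combine the bias and Monte Carlo pieces via the triangle inequality on the square root, whereas the paper uses $\|a+b\|^2\le 2\|a\|^2+2\|b\|^2$; this shifts the intermediate constants but lands on the same $81\lambda\|\GH\|_\cH^2$.
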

{\bf SM}~\ref{app:proof_proof_emp_approx} provides the proof.

\begin{remark}[Well-specified and noise-free]
    Lem.~\ref{lem:alphabd_exist} gives a $O(\lambda)$ bound on the regularized $\GH$-empirical risk of a RFM trained on well-specified and noise-free iid data $\{u_n, \GH(u_n)\}$.
\end{remark}

\subsubsection{Bounding the misspecification error}\label{sec:proof_emp_misspec}
The second contribution to~\eqref{eq:proof_emp_2split} is easily bounded by Bernstein's inequality because $\rho\in L^\infty_\nu$. We refer the reader to {\bf SM}~\ref{app:proof_proof_emp_misspec} for the detailed proof.
\begin{lemma}[Concentration of misspecification error]\label{lem:app_misspec_con}
    Let $\rho$ be as in Assumption~\ref{ass:misspec}. Fix $\delta\in(0,1)$. With probability at least $1-\delta$, it holds that
    \begin{equation}
        \frac{2}{N}\sum_{n=1}^N\norm{\rho(u_n)}_\cY^2 \leq 4 \E_{u\sim\nu} \norm{\rho(u)}_{\cY}^2 + \frac{9\norm{\rho}_{L^\infty_\nu}^2\log(2/\delta)}{N}\,.
    \end{equation}
\end{lemma}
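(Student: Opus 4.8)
The plan is to apply Bernstein's inequality to the i.i.d.\ nonnegative random variables $Z_n \defeq \norm{\rho(u_n)}_\cY^2$ for $n=1,\dots,N$. Since $\U \sim \nu^{\otimes N}$ these are i.i.d., and since $\rho \in L^\infty_\nu(\cX;\cY)$ by Assumption~\ref{ass:misspec}, each obeys $0 \le Z_n \le \norm{\rho}_{L^\infty_\nu}^2$ almost surely and has mean $\E Z_n = \E_{u\sim\nu}\norm{\rho(u)}_\cY^2 \eqdef \sigma^2$. The one nontrivial input is the localized variance estimate $\Var(Z_n) \le \E Z_n^2 \le \norm{\rho}_{L^\infty_\nu}^2\,\E Z_n = \norm{\rho}_{L^\infty_\nu}^2\,\sigma^2$, which is immediate from the almost sure bound $Z_n \le \norm{\rho}_{L^\infty_\nu}^2$.

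First I would invoke the one-sided Bernstein inequality with this variance proxy and range bound: with probability at least $1-\delta$,
\begin{equation*}
    \frac{1}{N}\sum_{n=1}^N \norm{\rho(u_n)}_\cY^2 \;\le\; \sigma^2 + \norm{\rho}_{L^\infty_\nu}\,\sigma\,\sqrt{\frac{2\log(2/\delta)}{N}} + \frac{C_0\,\norm{\rho}_{L^\infty_\nu}^2\,\log(2/\delta)}{N}
\end{equation*}
for an absolute constant $C_0$. Next I would absorb the cross term via Young's inequality, using $\norm{\rho}_{L^\infty_\nu}\,\sigma\,\sqrt{2\log(2/\delta)/N} \le \sigma^2 + (1/2)\norm{\rho}_{L^\infty_\nu}^2\log(2/\delta)/N$, which collapses the right-hand side to $2\sigma^2 + C_1\,\norm{\rho}_{L^\infty_\nu}^2\log(2/\delta)/N$; multiplying through by $2$ and bounding $2C_1 \le 9$ gives exactly the claimed inequality. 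The generous constant $9$ means the precise version of Bernstein invoked and the exact split in Young's inequality are irrelevant.

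I do not expect any genuine obstacle here: this is the textbook concentration bound for the empirical mean of a bounded nonnegative random variable. The only point worth care is using the $\sigma^2$-localized variance bound rather than the crude bound $\norm{\rho}_{L^\infty_\nu}^2$, since that is precisely what yields the multiplicative factor $4$ in front of $\E_{u\sim\nu}\norm{\rho(u)}_\cY^2$ instead of a larger absolute constant; the degenerate case $\rho \equiv 0$ is trivial.
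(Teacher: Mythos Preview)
Your proposal is correct and follows essentially the same argument as the paper: apply Bernstein's inequality to $Z_n=\norm{\rho(u_n)}_\cY^2$ with the localized variance bound $\Var(Z_n)\le\norm{\rho}_{L^\infty_\nu}^2\,\E Z_n$, then absorb the cross term by AM--GM and multiply through by $2$. The paper's version of Bernstein (Lem.~\ref{lem:bern_vv} with $b=2c$ for uncentered variables) makes the constant in the $1/N$ term equal to $4$, so after AM--GM it becomes $9/2$, and doubling gives exactly the stated $9$.
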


\subsubsection{Bounding the noise error}\label{sec:proof_emp_noise}
The second term on the right hand side of \eqref{eq:proof_noisy_coef_decomp} is a zero mean error contribution due to the noise corrupting the output training data. By the fact that $\eta$ is subexponential (Assumption~\ref{ass:joint_dist_noise}), Bernstein's inequality delivers exponential concentration. The proof details are in {\bf SM}~\ref{app:proof_proof_emp_noise}.
\begin{lemma}[Concentration of noise error cross term]\label{lem:app_noise_sup_cross}
    Let Assumptions \ref{ass:rf} and \ref{ass:joint_dist_noise} hold. Fix $\alpha\in\R^M$, $\Th\sim\mu^{\otimes M}$, and $\delta\in(0,1)$. With probability at least $1-\delta$, it holds that
    \begin{equation}
        \frac{2}{N}\sum_{n=1}^N\ip{-\eta_n}{\Phi(u_n;\alpha)}_\cY\leq 16e^{3/2}\norm{\eta_1}_{\psi_1(\cY)}\norm{\al}_M \sqrt{\frac{\log(2/\delta)}{N}}\,.
    \end{equation}
\end{lemma}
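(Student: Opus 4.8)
The plan is to recognize the left-hand side as a normalized sum of independent, conditionally centered, subexponential real random variables and to apply a Bernstein-type concentration inequality. For the fixed $\alpha$ and $\Th$, set $Z_n\defeq\ip{-\eta_n}{\Phi(u_n;\alpha)}_\cY$ for $n=1,\dots,N$. Since the pairs $(u_n,\eta_n)$ are iid, so are the $Z_n$, and by Assumption~\ref{ass:joint_dist_noise},
\[
\E[Z_n\condbar u_n]=\ip{-\E[\eta_n\condbar u_n]}{\Phi(u_n;\alpha)}_\cY=0\,,
\]
so each $Z_n$ is (unconditionally) centered. The quantity to control is $\frac2N\sum_{n=1}^N Z_n$.

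Next I would bound the random feature model pointwise. Using Cauchy--Schwarz together with the uniform bound $\norm{\phi}_{L^\infty}\le1$ of Assumption~\ref{ass:rf},
\[
\norm{\Phi(u_n;\alpha)}_\cY\le\frac1M\sum_{m=1}^M\abs{\alpha_m}\le\Bigl(\frac1M\sum_{m=1}^M\abs{\alpha_m}^2\Bigr)^{1/2}=\norm{\alpha}_M\,,
\]
so that $\abs{Z_n}\le\norm{\alpha}_M\,\norm{\eta_n}_\cY$ almost surely. Thus $Z_n$ inherits the subexponential tail of $\norm{\eta_n}_\cY$: from the definition of the vector-valued Orlicz norm in \eqref{eq:app_def_subexp_norm_vec} one gets $\norm{Z_n}_{\psi_1(\R)}\le\norm{\alpha}_M\,\norm{\eta_1}_{\psi_1(\cY)}=:K$, the $\eta_n$ being identically distributed.

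Finally I would invoke the scalar Bernstein inequality for sums of independent centered subexponential random variables (as recorded in the {\bf SM}) applied to $\frac2N\sum_n Z_n$ with proxy parameter $K$: with probability at least $1-\delta$ this yields a bound of the form $c\,K\sqrt{\log(2/\delta)/N}+c'\,K\log(2/\delta)/N$. One then folds the lower-order linear term into the sub-root term --- legitimate in the regime $N\gtrsim\log(2/\delta)$ that is relevant in the applications of this lemma --- and tracks the numerical constants through the subexponential moment/MGF estimates, which is where the $e^{3/2}$ enters, to arrive at the stated inequality with constant $16e^{3/2}$.

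The first two steps are routine; the care is concentrated in the last one, namely getting the Bernstein estimate into the advertised clean single-term form with the precise constant. I expect this to be a bookkeeping matter depending on the exact normalization chosen for $\norm{\slot}_{\psi_1(\cY)}$ in \eqref{eq:app_def_subexp_norm_vec} rather than a conceptual obstacle. (The genuinely delicate noise contribution in the broader argument is instead the third term of \eqref{eq:proof_noisy_coef_decomp}, where the coefficient vector is the data-dependent $\alhat$ and a supremum over the unit $\norm{\slot}_M$-ball must be handled; that term is treated separately in {\bf SM}~\ref{app:proof_proof_emp_noise}.)
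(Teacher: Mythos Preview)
Your proposal is correct and follows essentially the same route as the paper's proof: define $Z_n=\ip{-\eta_n}{\Phi(u_n;\alpha)}_\cY$, show $\E Z_n=0$ via conditional centering, bound $|Z_n|\le\norm{\alpha}_M\norm{\phi}_{L^\infty}\norm{\eta_n}_\cY$ by Cauchy--Schwarz to obtain the $\psi_1$ bound, and then apply Bernstein (Prop.~\ref{prop:app_subexp_bern_mom} with Thm.~\ref{thm:bern_vv}) followed by the simplification from the proof of Cor.~\ref{cor:app_subexp_norm} to collapse the two-term bound into the single sub-root term with constant $16e^{3/2}$. You are also right to flag that the folding step uses $N\ge\log(2/\delta)$; the paper's proof does the same via the ``similar argument'' to Cor.~\ref{cor:app_subexp_norm}, and the lemma is only ever invoked under the stronger hypothesis $N\ge\lambda^{-2}\log(8/\delta)$ in Prop.~\ref{prop:train_erm_bound_noisy}.
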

{\bf SM}~\ref{app:proof_proof_emp_noise} also details the techniques used to upper bound the third and final term in \eqref{eq:proof_noisy_coef_decomp}.

\subsection{Bounding the generalization gap}\label{sec:proof_gen}
Having bounded the empirical risk with approximation arguments, it remains to control the estimation error $\scrR(\alhat;\cG)-\scrR_N(\alhat;\cG)$ due to finite data in \eqref{eq:proof_decomp}. We call this the \emph{generalization gap}: the difference between the population test error and its empirical version. If $\alhat$ satisfies $\norm{\alhat}_M^2\leq t$ for some $t>0$, then one can upper bound the generalization gap by its supremum over this set. The main challenge is to show existence of a (sufficiently small) $t$ that satisfies this inequality. This is handled by Cor.~\ref{cor:train_norm_bound_noisy}. As summarized in the following proposition, the resulting supremum of the empirical process defined by the generalization gap is shown to be of size $N^{-1/2}$ with high probability.

\begin{proposition}[Uniform bound on the generalization gap]\label{prop:est_error_combined}
Let Assumption \ref{ass:rf} hold. Suppose $\cG$ satisfies Assumption~\ref{ass:misspec}. Let $\Th\sim\mu^{\otimes M}$ for the RFM $\Phi$ given by \eqref{eq:rf_model}. Fix $\delta\in(0,1)$. For iid input samples $\U\sim\nu^{\otimes N}$, define the random variable
    \begin{equation}
        \cE_\beta\bigl(\U,\Th\bigr)
        \defeq \sup_{\alpha\in\cA_\beta}\abs[\bigg]{\frac{1}{N}\sum_{n=1}^N\norm{\cG(u_n)-\Phi(u_n;\alpha)}_\cY^2 - \E_{u}\norm{\cG(u)-\Phi(u;\alpha)}_\cY^2}\,,
    \end{equation}
    where $\cA_\beta\defeq \{\alpha'\in\R^M\condbar  \norm{\alpha'}_M^2\leq \beta\}$ and the deterministic radius $\beta=\beta(\rho,\lambda,\GH,\eta)$ is given in \eqref{eq:alphabd_noisy_const} with $\cG$ as above. If $N\geq \log(1/\delta)$, then with probability at least $1-\delta$ it holds that
    \begin{equation}\label{eq:app_est_bdd}
        \cE_\beta\bigl(\U,\Th\bigr) \leq 32e^{3/2}\bigl(\norm{\cG}_{L_\nu^\infty}^2 + \beta(\rho,\lambda,\GH,\eta)\bigr)\sqrt{\frac{6\log(2/\delta)}{N}}\,.
    \end{equation}
\end{proposition}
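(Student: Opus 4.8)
The plan is to bound $\cE_\beta$ by controlling its fluctuation around its mean and the mean itself separately, matching the two ingredients (``Lemma\,\ref{lem:app_loss_sup_con}'' and ``Lemma\,\ref{lem:app_loss_sup_expect}'') flagged in Figure~\ref{fig:1}. The starting point is a uniform boundedness estimate: for every $\alpha\in\cA_\beta$ and $\nu$-a.e.\ $u$, Cauchy--Schwarz over $\{1,\dots,M\}$ together with $\norm{\phi}_{L^\infty}\le 1$ (Assumption~\ref{ass:rf}) gives $\norm{\Phi(u;\alpha)}_\cY \le \frac1M\sum_{m=1}^M\abs{\alpha_m} \le \norm{\alpha}_M \le \sqrt{\beta}$, so that $\norm{\cG(u)-\Phi(u;\alpha)}_\cY^2 \le 2\norm{\cG}_{L^\infty_\nu}^2 + 2\beta \defeq G^2$ uniformly in $\alpha\in\cA_\beta$.

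\emph{Concentration.} With $\Th$ fixed, view $\cE_\beta$ as a function of the i.i.d.\ samples $u_1,\dots,u_N$. Replacing one $u_n$ changes each empirical average $\frac1N\sum_k\norm{\cG(u_k)-\Phi(u_k;\alpha)}_\cY^2$ by at most $G^2/N$, hence changes $\cE_\beta$ (a supremum of such quantities) by at most $G^2/N$. McDiarmid's bounded-differences inequality then yields $\cE_\beta \le \E_{\U}\cE_\beta + G^2\sqrt{\log(1/\delta)/(2N)}$ with probability at least $1-\delta$, and this holds for every fixed realization of $\Th$ (hence jointly). A variance-aware refinement (e.g.\ Bousquet's inequality for suprema of empirical processes) sharpens the constants; its lower-order term, of order $G^2\log(1/\delta)/N$, is where the hypothesis $N\ge\log(1/\delta)$ is used, to fold it into the $\sqrt{\cdot}$ term.

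\emph{The mean.} The crucial point, and the reason this stays free of $\log M$ factors, is that the generalization gap is a quadratic polynomial in $\alpha$, so no covering/chaining over $\cA_\beta$ is needed. Expand $\norm{\cG(u)-\Phi(u;\alpha)}_\cY^2 = \norm{\cG(u)}_\cY^2 - 2\ip{\cG(u)}{\Phi(u;\alpha)}_\cY + \norm{\Phi(u;\alpha)}_\cY^2$ and split via the triangle inequality, $\cE_\beta \le \abs{T_1} + \sup_{\cA_\beta}\abs{T_2} + \sup_{\cA_\beta}\abs{T_3}$, where $T_1$ is the ($\alpha$-independent) empirical-minus-population gap of $\norm{\cG(u)}_\cY^2$; $T_2(\alpha) = -\frac{2}{M}\sum_{m}\alpha_m Z_m$ with $Z_m \defeq \frac1N\sum_n\ip{\cG(u_n)}{\phi(u_n;\theta_m)}_\cY - \E_u\ip{\cG(u)}{\phi(u;\theta_m)}_\cY$; and $T_3(\alpha) = \frac1{M^2}\alpha^{\top} W\alpha$ with $W_{mm'} \defeq \frac1N\sum_n\ip{\phi(u_n;\theta_m)}{\phi(u_n;\theta_{m'})}_\cY - \E_u\ip{\phi(u;\theta_m)}{\phi(u;\theta_{m'})}_\cY$. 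Now $\E_{\U}\abs{T_1}\le(\Var_{\U} T_1)^{1/2}\le\norm{\cG}_{L^\infty_\nu}^2/\sqrt N$; Cauchy--Schwarz gives the pointwise bound $\sup_{\cA_\beta}\abs{T_2}\le 2\sqrt\beta\,\norm{Z}_M$, and $\E_{\U}\norm{Z}_M^2 = \frac1M\sum_m\Var_{\U} Z_m \le \norm{\cG}_{L^\infty_\nu}^2/N$ (using $\norm{\phi}_{L^\infty}\le1$), so $\E_{\U}\sup_{\cA_\beta}\abs{T_2}\le 2\sqrt\beta\,\norm{\cG}_{L^\infty_\nu}/\sqrt N$; and since $\norm{\alpha}_M^2\le\beta$ forces $\sup_{\cA_\beta}\abs{T_3}\le \frac{\beta}{M}\norm{W}_{\mathrm{op}} \le \frac{\beta}{M}\norm{W}_{\HS}$ pointwise, with $\E_{\U}\norm{W}_{\HS}^2 = \sum_{m,m'}\Var_{\U} W_{mm'}\le M^2/N$ (each entry is bounded by $1$), we get $\E_{\U}\sup_{\cA_\beta}\abs{T_3}\le \beta/\sqrt N$. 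Combining these and using $2\sqrt\beta\,\norm{\cG}_{L^\infty_\nu} \le \norm{\cG}_{L^\infty_\nu}^2 + \beta$ gives $\E_{\U}\cE_\beta \le 2(\norm{\cG}_{L^\infty_\nu}^2 + \beta)/\sqrt N$.

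\emph{Conclusion and main obstacle.} Substituting the mean bound into the concentration estimate and absorbing the $1/\sqrt N$ term into $\sqrt{\log(2/\delta)/N}$ (possible since $\log(2/\delta)\ge\log 2$), then enlarging the (generous) constants, delivers \eqref{eq:app_est_bdd}. The main obstacle — and precisely the step that keeps the bound logarithm-free — is the quadratic term $T_3$: a naive empirical-process argument over the $M$-dimensional ball $\cA_\beta$ would inject a $\sqrt{\log M}$ factor or require matrix-concentration machinery, whereas recognizing $\sup_{\cA_\beta}\abs{T_3}$ as (a multiple of) the operator norm of the random Gram-difference matrix $W$ and then bounding that crudely by its Hilbert--Schmidt norm — using only $\norm{\phi}_{L^\infty}\le1$ — is elementary and loses nothing essential. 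Everything else reduces to routine second-moment bookkeeping.
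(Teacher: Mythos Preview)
Your proof is correct and achieves the stated bound, but it follows a somewhat different route from the paper's. The paper obtains concentration (Lemma~\ref{lem:app_loss_sup_con}) by embedding the process $\alpha\mapsto \scrR_N(\alpha;\cG)-\scrR(\alpha;\cG)$ into the Banach space $C(\cA_\beta;\R)$ and invoking a Bernstein-type inequality there (Cor.~\ref{cor:app_subexp_norm}); this is where the lower-order term appears and the hypothesis $N\ge\log(1/\delta)$ is genuinely used. Your McDiarmid argument is more elementary and, in fact, does not need that hypothesis at all---the constant $32e^{3/2}\sqrt{6}$ is generous enough that the plain bounded-differences bound already suffices. Your remark about Bousquet being needed ``to use'' the hypothesis is therefore slightly off: with McDiarmid the hypothesis is simply superfluous for this particular statement.

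For the mean (Lemma~\ref{lem:app_loss_sup_expect}), the paper first symmetrizes with Rademacher variables and then performs the same three-term expansion you do, bounding each piece by Cauchy--Schwarz in $\R^M$. You skip symmetrization and work directly with the centered empirical averages $Z_m$ and the Gram-difference matrix $W$; this is entirely valid (the quadratic structure in $\alpha$ makes symmetrization unnecessary for the expectation bound) and even saves a factor of~$2$ in the constant. Your Hilbert--Schmidt bound on $W$ is the direct analogue of the paper's double Cauchy--Schwarz on the bilinear term~\eqref{eq:app_loss_sup_3}. Both arguments exploit exactly the same structural point you highlight: linearity of $\Phi$ in $\alpha$ reduces the supremum to explicit norms, avoiding any chaining or matrix concentration and keeping the bound free of $\log M$.
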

The proof of Prop.~\ref{prop:est_error_combined} is given in {\bf SM}~\ref{app:proof_proof_gen}. The argument is composed of two steps. The first is to show that $\cE_\beta\condbar \Th$ concentrates around its (conditional) expectation (Lem.~\ref{lem:app_loss_sup_con}). This follows easily using the boundedness of the summands. The second step is to upper bound the expectation of $\cE_\beta\condbar \Th$ (Lem.~\ref{lem:app_loss_sup_expect}). This is achieved by exploiting the Hilbert space structure of the $\cY$-square loss and the linearity of the RFM with respect to its coefficients.

\subsection{Combining the bounds}
Since we now have control over the $\cG$-empirical risk and the generalization gap, the $\cG$-population risk is also under control by~\eqref{eq:proof_decomp}. The proof of Thm.~\ref{thm:error_bound_total_G} follows by putting together the pieces ({\bf SM}~\ref{app:proof_main}).

\section{Numerical experiment}\label{sec:numeric}
To study how our theory holds up in practice, we numerically implement the vector-valued RF-RR algorithm on a benchmark operator learning dataset. The data $\{(u_n, \mathcal{G}(u_n))\}_{n=1}^N$ is noise-free, the $\{u_n\}$ are iid Gaussian random fields, and $\mathcal{G}\colon L^2(\mathbb{T};\mathbb{R})\to L^2(\mathbb{T};\mathbb{R}) $ is a nonlinear operator defined as the time one flow map of the viscous Burgers' equation on the torus $\T$. \textbf{SM}~\ref{app:numeric} provides more details about the problem setting and the choice of random feature pair $(\varphi,\mu)$.

\vspace{-4.1mm}
\begin{figure}[htbp]%
\centering
\subfloat[Varying $p, M, \lambda$ for fixed $N=1548$.]{
    \includegraphics[width=0.43\textwidth]{./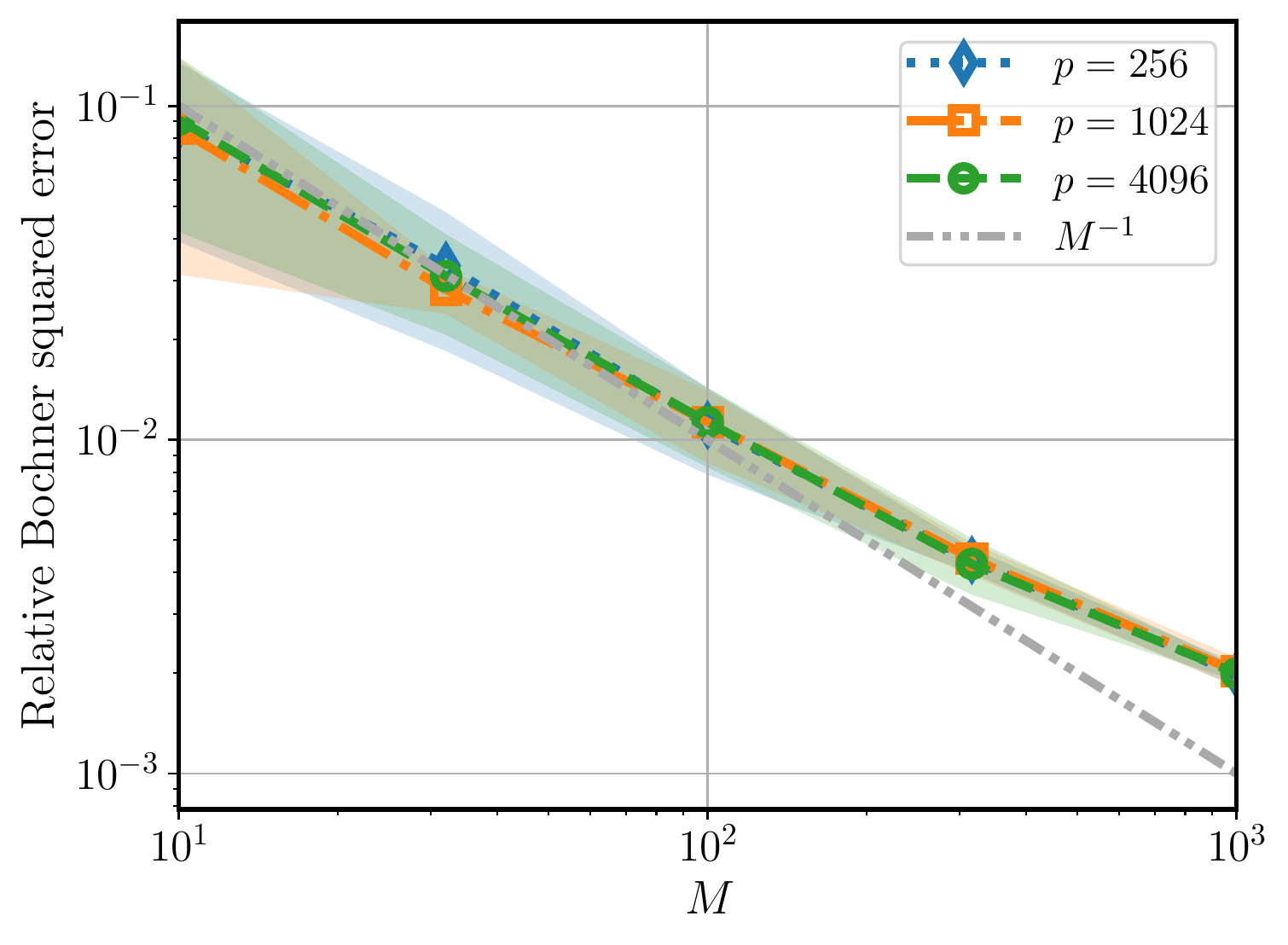}
    \label{fig:M}}
\subfloat[Varying $p, N, \lambda$ for fixed $M=10^4$.]{
    \includegraphics[width=0.40\textwidth]{./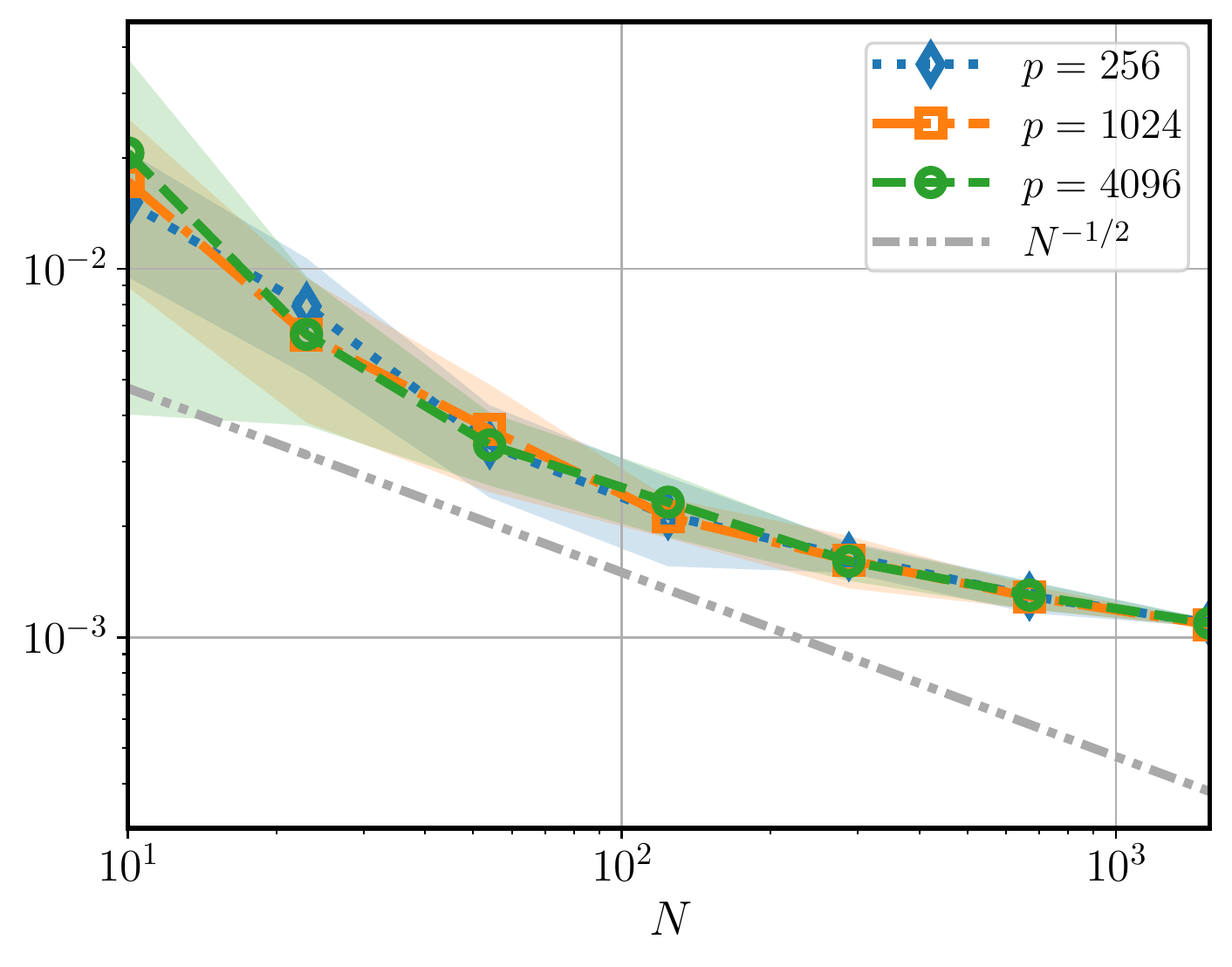}
    \label{fig:N}}
    \vspace{-1mm}
\caption{Squared test error of trained RFM for learning the Burgers' equation solution operator. All shaded bands denote two empirical standard deviations from the empirical mean of the error computed over $10$ different models, each with iid sampling of the features and training data indices.
}
\label{fig:sweep}
\vspace{-1pt}
\end{figure}

Figure~\ref{fig:M} shows the decay of the relative squared test error as $M$ increases (with $\lambda\simeq 1/M$) for fixed $N$. The error closely follows the rate $O(M^{-1})$ until it begins to saturate at larger $M$. This is due to either $\mathcal{G}$ not belonging to the RKHS of $(\varphi,\mu)$ or the finite data error dominating. As implied by our theory, the error does not depend on the discretized output dimension $p<\infty$. Figure~\ref{fig:N} displays similar behavior as $N$ is varied (now with $\lambda\simeq 1/\sqrt{N}$ and fixed $M$). Overall, the observed parameter and sample complexity reasonably validate our theoretical insights.

\section{Conclusion}\label{sec:conclusion}
This paper establishes several fundamental results for learning with infinite-dimensional vector-valued random features; these include strong consistency and explicit convergence rates. When the underlying mapping belongs to the RKHS, the rates obtained in this work match minimax optimal rates in the number of samples $N$, requiring only a number of random features $M\simeq \sqrt{N}$. Despite being derived in a very general setting, to the best of our knowledge, this provides the sharpest parameter complexity in $M$, which is free from logarithmic factors for the first time. 

There are several interesting directions for future work. These include deriving fast rates, relaxing the boundedness assumption on the features and the true mapping, and accommodating heavier-tailed or white noise distributions. Obtaining fast rates would require a sharpening of several estimates, and in particular, replacing the global Rademacher complexity-type estimate, implicit in our work, by its local counterpart. As our approach does not make use of an explicit solution formula, which is only available for a square loss, this might pave the way for improved rates for other loss functions, such as a general $L^p$-loss. We leave such potential extensions of the present approach for future work.

\begin{ack}
The work of SL is supported by Postdoc.Mobility grant P500PT-206737 from the Swiss National Science Foundation. NHN acknowledges support from the National Science Foundation Graduate Research Fellowship Program under award number DGE-1745301 and from the Amazon/Caltech AI4Science Fellowship, and partial support from the Air Force Office
of Scientific Research under MURI award number FA9550-20-1-0358 (Machine Learning and
Physics-Based Modeling and Simulation). SL and NHN are also grateful for partial support from the Department of Defense Vannevar Bush Faculty Fellowship
held by Andrew M. Stuart under Office of Naval Research award number N00014-22-1-2790.
\end{ack}

{
\bibliographystyle{abbrvnat}
\bibliography{references}
}

\newpage
\begin{center}
{\bf Supplementary Material for:} \\ 
\inserttitle \\
\end{center}

\appendix 

\section{Concentration of measure}\label{app:concen}
In this section, we recall two classical results from \cite{pinelis1986remarks} that estimate the difference between empirical averages and true averages of random vectors taking values in Banach spaces. These are then specialized to the setting of subexponential random variables, which play a major role in this paper. To set the notation, we use $\Pr$ to denote probability with respect to the underlying probability space.

The first result is a vector-valued Bernstein concentration inequality with various applications to problems posed in infinite-dimensional Hilbert spaces~\cite{caponnetto2007optimal,maurer2021concentration,rudi2017generalization}. It is used throughout this work.

\begin{theorem}[Vector-valued Bernstein inequality in Hilbert space]\label{thm:bern_vv}
Let $Z$ be an $H$-valued random variable, where $(H,\ip{\cdot}{\cdot},\norm{\slot})$ is a separable Hilbert space. Suppose there exist positive numbers $b>0$ and $\sigma>0$ such that 
\begin{equation}\label{eq:bern_cond}
    \E\norm{Z-\E Z}^p\leq \frac{1}{2}p!\sigma^2 b^{p-2} \quad\text{for all}\quad p\geq 2\,.
\end{equation}
For any $\delta\in(0,1)$ and $N\in\N$, denoting by $\{Z_n\}_{n=1}^N$ a sequence of $N$ iid copies of $Z$, it holds that
\begin{equation}\label{eq:bern_vv}
    \Pr\left\{\norm[\bigg]{\frac{1}{N}\sum_{n=1}^NZ_n - \E Z}\leq \frac{2b\log(2/\delta)}{N} + \sqrt{\frac{2\sigma^2\log(2/\delta)}{N}}\right\}\geq 1-\delta\,.
\end{equation}
\end{theorem}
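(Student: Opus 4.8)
The strategy is to reduce to the classical Bernstein inequality for Hilbert-space-valued sums of Pinelis and Sakhanenko \cite{pinelis1986remarks} and then invert the resulting sub-exponential tail bound to the desired confidence form. First I would center: replacing $Z$ by $Z-\E Z$ leaves both hypothesis \eqref{eq:bern_cond} and the target quantity $\tfrac1N\sum_{n=1}^N Z_n - \E Z$ unchanged, so without loss of generality $\E Z = 0$. Write $S_N \defeq \sum_{n=1}^N Z_n$; the task becomes a tail bound on $\norm{S_N}$.

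The core ingredient is the exponential estimate
\[
    \Pr\bigl\{\norm{S_N} \ge t \bigr\} \;\le\; 2\exp\!\left(-\frac{t^2}{2(N\sigma^2 + bt)}\right) \qfa t>0 .
\]
This follows from the moment control \eqref{eq:bern_cond} by the same mechanism as the scalar Bernstein inequality: one bounds an exponential (or $\cosh$-type) moment of $\norm{S_N}$ via a supermartingale argument, the only place the geometry of $H$ enters being the parallelogram identity $\norm{x+y}^2 = \norm{x}^2 + 2\ip{x}{y} + \norm{y}^2$ (i.e.\ the $2$-smoothness of Hilbert space). Since the $Z_n$ are iid with common $(\sigma,b)$, the ``variance proxy'' accumulates to $\sum_{n=1}^N \sigma^2 = N\sigma^2$ and the ``scale'' stays $b$. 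I expect establishing this exponential bound — rather than merely inverting it — to be the main obstacle, since it is exactly the nontrivial content of the classical result being recalled; in fact \cite{pinelis1986remarks} gives the stronger statement with $\max_{k\le N}\norm{S_k}$ in place of $\norm{S_N}$, which is more than we need here, so I would simply quote that theorem and specialize it.

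It then remains to invert. Fix $\delta\in(0,1)$ and set $L\defeq\log(2/\delta)$. Requiring the right-hand side above to be at most $\delta$ is equivalent to $t^2 - 2Lb\,t - 2LN\sigma^2 \ge 0$, whose positive root is $t_* = Lb + \sqrt{L^2 b^2 + 2LN\sigma^2}$. Using $\sqrt{a+c}\le\sqrt a + \sqrt c$ yields $t_* \le 2Lb + \sqrt{2LN\sigma^2}$, so for $t \defeq 2b\log(2/\delta) + \sqrt{2N\sigma^2\log(2/\delta)}$ we get $\Pr\{\norm{S_N}\ge t\}\le\delta$. Passing to the complementary event and dividing through by $N$, and noting $\tfrac1N\sqrt{2N\sigma^2\log(2/\delta)} = \sqrt{2\sigma^2\log(2/\delta)/N}$, gives precisely \eqref{eq:bern_vv}. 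The only bookkeeping to keep straight is that the exponential bound is a two-sided-in-$\delta$ inversion (the factor $2$ in front of the exponential is what produces $\log(2/\delta)$ rather than $\log(1/\delta)$), and that ``probability $\ge 1-\delta$'' is just the complement of the ``$\le\delta$'' tail event, so no further argument is required.
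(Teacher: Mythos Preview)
Your proposal is correct and matches the paper's proof essentially line for line: cite the Pinelis--Sakhanenko exponential tail bound from \cite{pinelis1986remarks}, then invert by solving the quadratic and applying $\sqrt{a+c}\le\sqrt a+\sqrt c$. The only cosmetic difference is that the paper works with the empirical average rather than the raw sum $S_N$, but the algebra is identical after rescaling.
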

\begin{proof}
    The result is a direct consequence of \citep[Cor. 1, p. 144]{pinelis1986remarks} in the iid Hilbert space setting. In this case, it holds for any $t>0$ that
    \begin{equation*}
        \Pr\bigl\{\norm{S_N-\E S_N}\geq t\bigr\}\leq 2\exp\biggl(-\frac{N^2t^2}{2N\sigma^2+2Nt b}\biggr)=2\exp\biggl(-\frac{Nt^2}{2\sigma^2+2bt}\biggr)\,,
    \end{equation*}
    where $S_n\defeq \frac{1}{N}\sum_{n=1}^N Z_n$. Setting the right hand side equal to $\delta$, solving a quadratic equation for $t=t(\delta)$, and using the inequality $\sqrt{a+b}\leq\sqrt{a} + \sqrt{b}$ to bound $t(\delta)$ from above leads to \eqref{eq:bern_vv}.
\end{proof}
The most common use case of Bernstein's inequality is in the following bounded setting.
\begin{lemma}\label{lem:bern_vv}
    Let $Z$ be a (potentially) uncentered random variable such that
    \begin{equation}\label{eq:lem_bern_vv_bdd}
        \norm{Z}\leq c \ \ \text{almost surely} \ \qa \E\norm{Z-\E Z}^2\leq v^2
    \end{equation}
    for some $c>0$ and $v>0$. Then $Z$ satisfies Bernstein's moment condition~\eqref{eq:bern_cond} with $b=2c$ and $\sigma = v$. If $\E Z=0$, then taking $b=c$ suffices.
\end{lemma}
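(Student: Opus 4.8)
The plan is to verify the two moment conditions in \eqref{eq:bern_cond} directly from the boundedness hypotheses \eqref{eq:lem_bern_vv_bdd}, treating the centered and uncentered cases separately. I would take $\sigma^2 = v^2$ and check that the required inequality $\E\norm{Z-\E Z}^p \le \tfrac12 p!\,\sigma^2 b^{p-2}$ holds for all integers $p\ge 2$, since $p=2$ is immediate from the second part of \eqref{eq:lem_bern_vv_bdd} and the factor $\tfrac12 p! \ge 1$.

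First I would establish a pointwise bound on $\norm{Z-\E Z}$. In the uncentered case, the triangle inequality and Jensen's inequality give $\norm{Z - \E Z} \le \norm{Z} + \norm{\E Z} \le \norm{Z} + \E\norm{Z} \le 2c$ almost surely. In the centered case ($\E Z = 0$), this bound improves to $\norm{Z-\E Z} = \norm{Z} \le c$. So in either case $\norm{Z-\E Z} \le b$ a.s., with $b = 2c$ in general and $b = c$ when $\E Z = 0$. Next I would use this almost-sure bound to control higher moments by interpolating against the second moment: for $p \ge 2$,
\begin{equation*}
    \E\norm{Z-\E Z}^p = \E\bigl(\norm{Z-\E Z}^{p-2}\norm{Z-\E Z}^2\bigr) \le b^{p-2}\,\E\norm{Z-\E Z}^2 \le b^{p-2} v^2\,.
\end{equation*}
Since $\tfrac12 p! \ge 1$ for all $p \ge 2$, this yields $\E\norm{Z - \E Z}^p \le \tfrac12 p!\, v^2 b^{p-2}$, which is exactly \eqref{eq:bern_cond} with $\sigma = v$ and $b$ as claimed.

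There is essentially no obstacle here; this is a routine verification. The only point requiring a moment's care is the bookkeeping of the constant in front of $c$: one must notice that centering a nonnegative-mean-free variable cannot increase its sup norm, so the factor of $2$ can be dropped when $\E Z = 0$. This is precisely the distinction that matters for applications where one wants the sharper constant $b = c$. Everything else follows from monotonicity of $L^p$ norms under an almost-sure bound and the trivial fact $p! \ge 2$ for $p \ge 2$.
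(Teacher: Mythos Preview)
Your proposal is correct and essentially identical to the paper's own proof: both bound $\norm{Z-\E Z}\le 2c$ a.s.\ (or $c$ in the centered case) via the triangle inequality, split $\norm{Z-\E Z}^p = \norm{Z-\E Z}^{p-2}\norm{Z-\E Z}^2$, and then invoke $p!\ge 2$ for $p\ge 2$.
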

\begin{proof}
    It holds that $\norm{Z-\E Z}\leq \norm{Z} + \E\norm{Z}\leq 2c$ almost surely. We compute
    \begin{align*}
        \E\norm{Z-\E Z}^p &= \E[\norm{Z-\E Z}^2 \norm{Z-\E Z}^{p-2}] \leq \E\norm{Z-\E Z}^2 (2c)^{p-2}\\
        &\qquad \leq v^2 (2c)^{p-2}\leq \frac{1}{2}p! v^2 (2c)^{p-2}
    \end{align*}
    because $1 \leq p!/2$ for all $p\geq 2$. The centered improvement is trivial.
\end{proof}

The second classic result we present is a one-sided Bernstein-type tail bound in a general Banach space. We invoke this theorem to control the tails of suprema of empirical processes.
\begin{theorem}[Vector-valued Bernstein inequality in Banach space]\label{thm:bern_vv_banach}
Let $Z$ be a $\cZ$-valued random variable, where $(\cZ,\norm{\slot})$ is a separable Banach space. Suppose there exist positive numbers $b>0$ and $\sigma>0$ such that 
\begin{equation}\label{eq:bern_cond_banach}
    \E\norm{Z-\E Z}^p\leq \frac{1}{2}p!\sigma^2 b^{p-2} \quad\text{for all}\quad p\geq 2\,.
\end{equation}
For any $\delta\in(0,1)$ and $N\in\N$, denoting by $\{Z_n\}_{n=1}^N$ a sequence of $N$ iid copies of $Z$, it holds that
\begin{equation}\label{eq:bern_vv_banach}
    \Pr\left\{\norm[\bigg]{\frac{1}{N}\sum_{n=1}^NZ_n} - \E \norm[\bigg]{\frac{1}{N}\sum_{n=1}^NZ_n}\leq \frac{2b\log(1/\delta)}{N} + \sqrt{\frac{2\sigma^2\log(1/\delta)}{N}}\right\}\geq 1-\delta\,.
\end{equation}
\end{theorem}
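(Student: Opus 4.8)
The plan is to follow the same route as the proof of Theorem~\ref{thm:bern_vv}: first obtain, from the separable-Banach-space counterpart of the Bernstein inequality of \cite{pinelis1986remarks}, an exponential tail bound for the \emph{one-sided} deviation of $\|S_N\|$ above its mean, where $S_N\defeq\frac1N\sum_{n=1}^N Z_n$; then convert that tail bound into the stated high-probability estimate by solving a quadratic inequality in the deviation level $t$.

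For the first step, the moment bound \eqref{eq:bern_cond_banach} with $p=2$ gives $\E\|Z-\E Z\|^2\le\sigma^2<\infty$, so $\E\|S_N\|<\infty$ and the deviation appearing in \eqref{eq:bern_vv_banach} is well-defined. The structural observation I would use is that $x\mapsto\|x\|$ is $1$-Lipschitz and convex and that $S_N=\E Z+\frac1N\sum_{n=1}^N(Z_n-\E Z)$: since the shift $\E Z$ is deterministic, the fluctuation of $\|S_N\|$ about $\E\|S_N\|$ is controlled entirely by the mean-zero sum $\frac1N\sum_n(Z_n-\E Z)$ and by the moments in \eqref{eq:bern_cond_banach}. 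Rescaling, the independent summands $\tfrac1N(Z_n-\E Z)$ satisfy $\E\|\tfrac1N(Z_n-\E Z)\|^p\le\tfrac12 p!\,(\sigma/N)^2(b/N)^{p-2}$, so the $N$-term sum carries aggregated variance parameter $\sigma^2/N$ and scale parameter $b/N$; feeding this into the iid form of the Banach-space Bernstein inequality in \cite{pinelis1986remarks} yields, for every $t>0$,
\begin{equation*}
    \Pr\bigl\{\|S_N\|-\E\|S_N\|\ge t\bigr\}\le\exp\!\left(-\frac{Nt^2}{2\sigma^2+2bt}\right).
\end{equation*}
A single exponential appears here, rather than the factor $2$ in Theorem~\ref{thm:bern_vv}, precisely because the deviation is one-sided; this is why the final estimate carries $\log(1/\delta)$ in place of $\log(2/\delta)$.

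For the second step, equating the right-hand side above to $\delta$ reduces to the quadratic inequality $Nt^2-2b\log(1/\delta)\,t-2\sigma^2\log(1/\delta)\ge0$; bounding its positive root from above via $\sqrt{a+c}\le\sqrt a+\sqrt c$ shows that the choice $t=\frac{2b\log(1/\delta)}{N}+\sqrt{\frac{2\sigma^2\log(1/\delta)}{N}}$ is admissible, and passing to the complementary event gives \eqref{eq:bern_vv_banach}.

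The only genuinely nontrivial ingredient is the Banach-space Bernstein inequality itself. Unlike Theorem~\ref{thm:bern_vv}, where one controls $\|S_N-\E S_N\|$ through an elementary moment-generating-function computation that is special to Hilbert space, in a general separable Banach space one cannot do better than control $\|S_N\|$ around its \emph{expectation} $\E\|S_N\|$ (not around $\|\E S_N\|$), and establishing even this requires the martingale/entropy-type arguments of \cite{pinelis1986remarks}. I therefore expect the main point requiring care to be invoking that inequality in the precise form needed — for iid, possibly uncentered, summands under a moment bound on the centered variable, with exactly the Bernstein denominator $2\sigma^2+2bt$ — after which everything downstream is the same routine algebra as in the proof of Theorem~\ref{thm:bern_vv}.
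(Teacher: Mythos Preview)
Your proposal is correct and follows essentially the same route as the paper: invoke \cite[Cor.~1, p.~144]{pinelis1986remarks} in the iid Banach-space setting to obtain the one-sided exponential tail bound on $\|S_N\|-\E\|S_N\|$, then perform the identical quadratic-inversion algebra used in the proof of Theorem~\ref{thm:bern_vv}. Your remark that the Banach case only controls $\|S_N\|$ about $\E\|S_N\|$ (rather than $\|S_N-\E S_N\|$) matches the paper's parenthetical ``hence only convergence of norms \dots\ instead of strong convergence.''
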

\begin{proof}
    The assertion is \cite[Cor. 1, p. 144]{pinelis1986remarks} in the iid Banach space setting (hence only convergence of norms in \eqref{eq:bern_vv_banach} instead of strong convergence). It is proved the same way as Thm.~\ref{thm:bern_vv}.
\end{proof}
In Theorem~\ref{thm:bern_vv_banach}, a random variable $Z$ satisfying the \emph{Bernstein moment condition}~\eqref{eq:bern_cond_banach} is subexponential in the sense that $\norm{Z-\E Z}$ is subexponential on $\R$, i.e., exhibits exponential tail decay. Recall that for a real-valued random variable $Z$, its subexponential norm may be defined by
\begin{equation}\label{eq:app_def_subexp_subgauss_norm}
    \norm{Z}_{\psi_1}\defeq \sup_{p\in[1,\infty)}\frac{(\E\abs{Z}^p)^{1/p}}{p} \,.
\end{equation}
See \cite[Sect. 2.7]{vershynin2018high} for equivalent definitions. We say that $Z$ is subexponential if its subexponential norm is finite. Following \cite[Sect. 4.3, pp. 19--20]{mollenhauer2022learning}, for a random variable $Z$ with values in a Banach space $ (\cZ,\norm{\slot}_\cZ) $ we define
\begin{equation}\label{eq:app_def_subexp_norm_vec}
    \norm{Z}_{\psi_1(\cZ)}\defeq \norm[\big]{\norm{Z}_\cZ}_{\psi_1}=\sup_{p\in[1,\infty)}\frac{(\E\norm{Z}_{\cZ}^p)^{1/p}}{p}
\end{equation}
as its subexponential norm. It is known that a random variable has finite subexponential norm if and only if it satisfies the Bernstein moment condition~\eqref{eq:bern_cond_banach}~\cite[see, e.g.,][Appendix A.2]{mollenhauer2022learning}. Next, we give explicit constants in the Bernstein moment condition that depend on the subexponential norm.

\begin{proposition}[Subexponential implies Bernstein moment condition]\label{prop:app_subexp_bern_mom}
    Let $Z$ be a $(\cZ,\norm{\slot})$-valued subexponential random variable, that is, $\norm{Z}_{\psi_1(\cZ)}<\infty$. Then $Z$ satisfies
    \begin{equation}
        \E\norm{Z-\E Z}^p\leq \frac{1}{2}p!\sigma^2 b^{p-2} \quad\text{for all}\quad p\geq 2\,,
    \end{equation}
    where
    \begin{equation}\label{eq:prop_app_subexp_bern_mom}
        \sigma^2 \defeq 4e\sqrt{\E\norm{Z-\E Z}^2}\norm{Z}_{\psi_1(\cZ)}\qa b\defeq 4e\norm{Z}_{\psi_1(\cZ)}\,.
    \end{equation}
\end{proposition}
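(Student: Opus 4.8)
The plan is to reduce everything to moment bounds for the nonnegative real random variable $W \defeq \norm{Z - \E Z}_\cZ$. This reduction is legitimate: taking $p = 1$ in \eqref{eq:app_def_subexp_norm_vec} gives $\E\norm{Z}_\cZ \le \norm{Z}_{\psi_1(\cZ)} < \infty$, so the Bochner integral $\E Z$ exists and $W$ is well defined. First I would establish $\norm{W}_{\psi_1} \le 2\norm{Z}_{\psi_1(\cZ)}$: pointwise $W \le \norm{Z}_\cZ + \norm{\E Z}_\cZ$, the map $X \mapsto \norm{X}_{\psi_1}$ is monotone on nonnegative variables and subadditive (by Minkowski's inequality in the numerator of \eqref{eq:app_def_subexp_subgauss_norm}), the $\psi_1$-norm of the deterministic constant $\norm{\E Z}_\cZ$ equals $\norm{\E Z}_\cZ \le \E\norm{Z}_\cZ \le \norm{Z}_{\psi_1(\cZ)}$, and the $\psi_1$-norm of $\norm{Z}_\cZ$ is $\norm{Z}_{\psi_1(\cZ)}$ by definition \eqref{eq:app_def_subexp_norm_vec}. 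Writing $K \defeq \norm{W}_{\psi_1}$, so that $K \le 2\norm{Z}_{\psi_1(\cZ)}$, the definition of the $\psi_1$-norm immediately yields $(\E W^q)^{1/q} \le Kq$, i.e. $\E W^q \le (Kq)^q$, for every $q \ge 1$.

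The key step is an interpolation that extracts an explicit factor of the variance proxy $\E W^2$. For $p \ge 2$, Cauchy--Schwarz gives $\E W^p = \E\bigl[W \cdot W^{p-1}\bigr] \le (\E W^2)^{1/2}(\E W^{2p-2})^{1/2}$, and the moment bound applied with $q = 2p-2 \ge 1$ gives $(\E W^{2p-2})^{1/2} \le \bigl(K(2p-2)\bigr)^{p-1} = (2K)^{p-1}(p-1)^{p-1}$. Using the elementary inequality $n^n \le e^n\, n!$ (equivalently $n! \ge (n/e)^n$, which follows from $e^n \ge n^n/n!$) with $n = p-1$, together with $(p-1)! = p!/p \le p!/2$ for $p \ge 2$, this is at most $(2Ke)^{p-1}(p-1)! \le \tfrac12 p!\,(2Ke)^{p-1}$. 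Hence
\[
    \E\norm{Z - \E Z}_\cZ^p \;\le\; \tfrac12\, p!\,\bigl(2Ke\sqrt{\E W^2}\bigr)\,(2Ke)^{p-2}\,,
\]
which is the Bernstein moment condition with $\sigma^2 = 2Ke\sqrt{\E\norm{Z - \E Z}_\cZ^2}$ and $b = 2Ke$.

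Finally, because the right-hand side $\tfrac12 p!\, \sigma^2 b^{p-2}$ is nondecreasing in $\sigma^2 \ge 0$ and in $b > 0$ for every $p \ge 2$, the inequality persists if these constants are enlarged; substituting $K \le 2\norm{Z}_{\psi_1(\cZ)}$ replaces them by $\sigma^2 = 4e\sqrt{\E\norm{Z - \E Z}_\cZ^2}\,\norm{Z}_{\psi_1(\cZ)}$ and $b = 4e\norm{Z}_{\psi_1(\cZ)}$, which is exactly \eqref{eq:prop_app_subexp_bern_mom}. I do not expect a genuine obstacle here: the argument is bookkeeping of constants, and the only step that is not completely routine is the Cauchy--Schwarz splitting $W \cdot W^{p-1}$, which is what turns the naive bound $\E W^p \le (Kp)^p$ (whose leading constant would scale like $(Ke)^2$, with no dependence on $\E W^2$) into one whose prefactor genuinely contains the variance proxy $\E\norm{Z - \E Z}_\cZ^2$.
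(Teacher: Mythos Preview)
Your proof is correct and follows essentially the same approach as the paper: both use the Cauchy--Schwarz split $\E W^p \le (\E W^2)^{1/2}(\E W^{2p-2})^{1/2}$ to extract the variance factor, bound $\E W^{2p-2}$ via the subexponential moment inequality (you do this after first passing to $K=\norm{W}_{\psi_1}\le 2\norm{Z}_{\psi_1(\cZ)}$, while the paper applies $\abs{a+b}^q\le 2^{q-1}(\abs{a}^q+\abs{b}^q)$ and Jensen directly inside the moment estimate), and then invoke Stirling's bound $(p-1)^{p-1}\le e^{p-1}(p-1)!\le \tfrac12 e^{p-1}p!$ to reach the Bernstein form. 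The only difference is the order in which the factor of $2$ from centering is accounted for, which is purely cosmetic.
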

\begin{proof}
    By the Cauchy--Schwarz inequality,
    \begin{equation*}
          \E\norm{Z-\E Z}^p=\E[\norm{Z-\E Z}\norm{Z-\E Z}^{p-1}]
        \leq \norm{Z-\E Z}_{L^2_\P}(\E \norm{Z-\E Z}^{2p-2})^{1/2}\,.
    \end{equation*}
    The inequality $\abs{a+b}^q\leq 2^{q-1}(\abs{a}^q +\abs{b}^q)$, Jensen's inequality, and $\norm{Z}_{\psi_1(\cZ)}<\infty$ show that
    \begin{equation*}
        \E \norm{Z-\E Z}^{2p-2}\leq 2^{2p-2}\E\norm{Z}^{2p-2}\leq 2^{2p-2}(2p-2)^{2p-2}\norm{Z}_{\psi_1(\cZ)}^{2p-2}\,.
    \end{equation*}
    Next, Stirling's approximation $(q/e)^q\leq q!$ and $q!=q(q-1)!\geq 2(q-1)!$ for $q\geq 2$ yields
    \begin{align*}
        \E\norm{Z-\E Z}^p&\leq 2^{2p-2}\norm{Z-\E Z}_{L^2_\P}(p-1)^{p-1}\norm{Z}_{\psi_1(\cZ)}^{p-1}\\
        &\leq (p!/2)\norm{Z-\E Z}_{L^2_\P}2^{2p-2} e^{p-1} \norm{Z}_{\psi_1(\cZ)}^{p-1}\,.
    \end{align*}
    Rearranging the exponents to fit the Bernstein moment condition form completes the proof.
\end{proof}

This leads to the following corollary, which is useful in the setting that the variance $\E\norm{Z-\E Z}^2_{\cZ}=\norm{Z-\E Z}^2_{L^2_\P(\Omega;\cZ)}$ of random variable $Z$ is not small or hard to compute.
\begin{corollary}[Subexponential tail bound in Banach space]\label{cor:app_subexp_norm}
    Fix $N\in\N$. Let $\{Z_n\}_{n=1}^N$ be iid random variables with values in a separable Banach space $(\cZ,\norm{\slot})$. Suppose that $\norm{Z_1}_{\psi_1(\cZ)}<\infty$.  Let $S_N\defeq \frac{1}{N}\sum_{n=1}^NZ_n$. Fix $\delta\in(0,1)$. With probability at least $1-\delta$, it holds that
    \begin{equation}\label{eq:app_subexp_norm}
        \norm[\big]{S_N}-\E\norm[\big]{S_N}\leq
         \frac{8e\norm{Z_1}_{\psi_1(\cZ)}\log(1/\delta)}{N}
         + \sqrt{\frac{8e\norm{Z_1-\E Z_1}_{L^2_\P(\Omega;\cZ)}\norm{Z_1}_{\psi_1(\cZ)}\log(1/\delta)}{N}}\,.
    \end{equation}
    In particular, if $N\geq \log(1/\delta)$, then with probability at least $1-\delta$ it holds that
    \begin{equation}\label{eq:app_subexp_norm_simple}
        \norm[\big]{S_N}-\E\norm[\big]{S_N}\leq \sqrt{\frac{64e^3\norm{Z_1}^2_{\psi_1(\cZ)}\log(1/\delta)}{N}}\,.
    \end{equation}
\end{corollary}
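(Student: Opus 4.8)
The plan is to combine Proposition~\ref{prop:app_subexp_bern_mom} with the Banach-space Bernstein inequality of Theorem~\ref{thm:bern_vv_banach}, and then to absorb the awkward variance factor $\norm{Z_1-\E Z_1}_{L^2_\P(\Omega;\cZ)}$ into the subexponential norm using a crude moment comparison. Concretely, first I would apply Proposition~\ref{prop:app_subexp_bern_mom} to $Z = Z_1$, which shows that $Z_1$ satisfies the Bernstein moment condition~\eqref{eq:bern_cond_banach} with parameters $b = 4e\norm{Z_1}_{\psi_1(\cZ)}$ and $\sigma^2 = 4e\sqrt{\E\norm{Z_1-\E Z_1}^2}\,\norm{Z_1}_{\psi_1(\cZ)}$. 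Plugging these into Theorem~\ref{thm:bern_vv_banach} immediately yields, with probability at least $1-\delta$,
\begin{equation*}
    \norm{S_N} - \E\norm{S_N} \leq \frac{8e\norm{Z_1}_{\psi_1(\cZ)}\log(1/\delta)}{N} + \sqrt{\frac{8e\sqrt{\E\norm{Z_1-\E Z_1}^2}\,\norm{Z_1}_{\psi_1(\cZ)}\log(1/\delta)}{N}}\,,
\end{equation*}
which is exactly \eqref{eq:app_subexp_norm} once one writes $\sqrt{\E\norm{Z_1-\E Z_1}^2} = \norm{Z_1-\E Z_1}_{L^2_\P(\Omega;\cZ)}$. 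This gives the first claim with essentially no work beyond bookkeeping of constants.

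For the second, simplified bound \eqref{eq:app_subexp_norm_simple}, the key observation is that $\norm{Z_1-\E Z_1}_{L^2_\P(\Omega;\cZ)} \lesssim \norm{Z_1}_{\psi_1(\cZ)}$. Indeed, from the definition~\eqref{eq:app_def_subexp_norm_vec} we have $(\E\norm{Z_1}_\cZ^2)^{1/2} \leq 2\norm{Z_1}_{\psi_1(\cZ)}$, and then by the triangle inequality in $L^2_\P$ together with Jensen, $\norm{Z_1-\E Z_1}_{L^2_\P(\Omega;\cZ)} \leq 2(\E\norm{Z_1}_\cZ^2)^{1/2} \leq 4\norm{Z_1}_{\psi_1(\cZ)}$. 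Substituting this into the display above replaces $\sqrt{\E\norm{Z_1-\E Z_1}^2}\,\norm{Z_1}_{\psi_1(\cZ)}$ by a quantity of order $\norm{Z_1}^2_{\psi_1(\cZ)}$. Next I would use the hypothesis $N \geq \log(1/\delta)$ to dominate the linear-in-$1/N$ term by the square-root term: since $\log(1/\delta)/N \leq \sqrt{\log(1/\delta)/N}$ when $N \geq \log(1/\delta)$, the first summand is also bounded by a constant multiple of $\norm{Z_1}_{\psi_1(\cZ)}\sqrt{\log(1/\delta)/N}$. Collecting both contributions and choosing the numerical constant generously (the $64e^3$ in \eqref{eq:app_subexp_norm_simple} is deliberately loose) finishes the proof.

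I do not anticipate a genuine obstacle here — the statement is a routine specialization. The only point requiring mild care is tracking the numerical constants through Proposition~\ref{prop:app_subexp_bern_mom}, Theorem~\ref{thm:bern_vv_banach}, and the $L^2$-versus-$\psi_1$ comparison, and then verifying that the sum of the two terms really does fit under $\sqrt{64e^3}\,\norm{Z_1}_{\psi_1(\cZ)}\sqrt{\log(1/\delta)/N}$; because the target constant is so generous, this is a safe over-estimate and no sharp bookkeeping is needed. One should also double-check the edge behaviour: the bound is only claimed for $\delta \in (0,1)$ so $\log(1/\delta) > 0$, and the $N \geq \log(1/\delta)$ assumption is what makes the combination of the two Bernstein terms collapse into a single $N^{-1/2}$ term, so that hypothesis must be invoked explicitly.
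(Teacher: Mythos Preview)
Your proposal is correct and follows essentially the same route as the paper: apply Proposition~\ref{prop:app_subexp_bern_mom} to feed the Bernstein constants into Theorem~\ref{thm:bern_vv_banach} for \eqref{eq:app_subexp_norm}, then use the $L^2$-versus-$\psi_1$ comparison $\norm{Z_1-\E Z_1}_{L^2_\P}\le 4\norm{Z_1}_{\psi_1(\cZ)}$ together with $\log(1/\delta)/N \le \sqrt{\log(1/\delta)/N}$ to collapse both terms into \eqref{eq:app_subexp_norm_simple}. The paper finishes the constant check via $\sqrt{a}+\sqrt{b}\le\sqrt{2(a+b)}$ and $2e^2+e\le e^3$, which your ``generous constant'' remark correctly anticipates.
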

\begin{proof}
    Apply Prop.~\ref{prop:app_subexp_bern_mom} to Thm.~\ref{thm:bern_vv_banach} to obtain the first assertion. For the second assertion, first note that $\E\norm{Z_1-\E Z_1}^2\leq 4\E\norm{Z_1}^2$ (by triangle inequality and using $(a+b)^2\leq 2(a^2+b^2)$) and $\norm{Z_1}_{\psi_1(\cZ)}^2\geq \E\norm{Z_1}^2/4$ (by \eqref{eq:app_def_subexp_norm_vec}). Since $N\geq \log(1/\delta)$, we have $\log(1/\delta)/N\leq \sqrt{\log(1/\delta)/N}$. Combining these facts, it follows that the right hand side of \eqref{eq:app_subexp_norm} is bounded above by
    \begin{align*}
        \sqrt{\frac{64e^2\norm{Z_1}^2_{\psi_1(\cZ)}\log(1/\delta)}{N}} + \sqrt{\frac{32e\norm{Z_1}^2_{\psi_1(\cZ)}\log(1/\delta)}{N}}\leq \sqrt{\frac{64(2e^2 + e)\norm{Z_1}^2_{\psi_1(\cZ)}\log(1/\delta)}{N}}.
    \end{align*}
    We used $\sqrt{a}+\sqrt{b}\leq \sqrt{2(a+b)}$ on the right. Noting that $2e^2 + e\leq e^3$ completes the proof.
\end{proof}
Comparing this result to a similar result~\cite[Prop. 7(i), pp. 4--5, the iid case]{maurer2021concentration}, we note that \eqref{eq:app_subexp_norm} in Cor.~\ref{cor:app_subexp_norm} is sharper in the sense that the constant in the $N^{-1/2}$ term depends on the variance of the summands instead of just its subexponential norm (which can be much larger than the variance).

\section{Misspecification error with RKHS methods}
\label{app:rkhs}
Let $K\colon\cX\times\cX\to\cL(\cY)$ be an operator-valued kernel~\cite{rfm} corresponding to a separable\footnote{The \emph{assumption} of separability of the RKHS could be removed if additional conditions are placed on its kernel that would \emph{imply} separability, the most relevant being continuity-type assumptions. In the case $\cY=\R$, it is known that existence of a Borel measurable feature map for the kernel suffices for separability of its RKHS~\cite{owhadi2017separability}, which is much weaker than continuity. However, we are unaware of similar results for general $\cY$.} RKHS $\cH$ of functions mapping $\cX$ to $\cY$. Here, $\cL(\cY)$ denotes the set of bounded linear operators from $\cY$ into itself. In this section, we present a general analysis based on~\cite{smale2005shannon} for the approximation of elements in $L^2_\nu(\cX;\cY)$ by elements in the RKHS $\cH$. This problem is relevant to our learning theory framework whenever the true data-generating map does not belong to the RKHS~\eqref{eq:prelim_rkhs} associated to the given random feature pair $(\phi,\mu)$. Specializing to this setting, suppose that $(\phi,\mu)$ satisfy Assumption~\ref{ass:rf}. Let $K\colon (u,u')\mapsto \E_{\theta\sim\mu}[\phi(u;\theta)\otimes_{\cY}\phi(u';\theta)]$ be the corresponding limit random feature kernel. It holds that $K(u,u)$ is trace-class for each $u\in\cX$ $\nu$-almost surely because
\begin{equation}
\tr{K(u,u)}=\E_{\theta}\tr{\phi(u;\theta)\otimes_{\cY}\phi(u;\theta)}=\E_\theta\norm{\phi(u;\theta)}_{\cY}^2\leq\norm{\phi}_{L^\infty}^2<\infty
\end{equation}
by the Fubini--Tonelli theorem. It follows from \cite[Prop. 4.8, p. 394]{carmeli2006vector} that $\cH$ is compactly embedded into $L^2_\nu(\cX;\cY)$ because
\begin{equation}
    \E_{u\sim\nu}\norm{K(u,u)}_{\cL(\cY)}\leq \E_{u\sim\nu}\tr{K(u,u)}\leq \norm{\phi}_{L^\infty}^2<\infty\,.
\end{equation}
We denote the canonical embedding by $\iota\colon \cH\hookrightarrow L^2_\nu(\cX;\cY)$. Now let $\cG\in L^2_\nu(\cX;\cY)$ be an arbitrary operator. We consider an approximation $\cG_\vartheta$ to $\cG$ defined by
\begin{equation}\label{eq:rkhs_approx}
    \cG_\vartheta \defeq \argmin_{F\in\cH}\Bigl\{\norm{\cG - \iota F}_{L^2_\nu}^2 + \vartheta \norm{F}_{\cH}^2\Bigr\}\, .
\end{equation}
This operator has a simple representation.

\begin{lemma}
\label{lem:rkhs_approx_soln}
    There exists a unique solution $\cG_\vartheta$ to \eqref{eq:rkhs_approx} given by
    \begin{equation}\label{eq:rkhs_approx_soln}
        \cG_\vartheta = (\iota^*\iota + \vartheta \Id)^{-1}\iota^*\cG\,.
    \end{equation}
\end{lemma}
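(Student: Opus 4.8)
The plan is to recognize \eqref{eq:rkhs_approx} as a Tikhonov-regularized least-squares problem in the Hilbert space $\cH$ and solve it by the standard variational argument. First I would observe that the embedding $\iota\colon \cH \hookrightarrow L^2_\nu(\cX;\cY)$ is bounded (indeed compact), so its Hilbert-space adjoint $\iota^*\colon L^2_\nu(\cX;\cY) \to \cH$ exists and is bounded, and the operator $\iota^*\iota \in \cL(\cH)$ is self-adjoint and positive semidefinite. Consequently $\iota^*\iota + \vartheta\Id$ is self-adjoint with spectrum bounded below by $\vartheta > 0$, hence boundedly invertible; in particular the right-hand side of \eqref{eq:rkhs_approx_soln} is a well-defined element of $\cH$.

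Next I would establish that the functional $J(F) \defeq \norm{\cG - \iota F}_{L^2_\nu}^2 + \vartheta\norm{F}_\cH^2$ is strictly convex and coercive on $\cH$: the first term is convex (a composition of a norm-squared with an affine map) and the second is $\vartheta$-strongly convex, so $J$ is strongly convex, which gives both existence and uniqueness of the minimizer. Then I would compute the Fréchet derivative (or simply expand $J(F + h)$ in powers of $h$): the first-order term in $h$ is $2\ip{\iota^*(\iota F - \cG) + \vartheta F}{h}_\cH$, so the first-order optimality condition $DJ(\cG_\vartheta) = 0$ reads
\begin{equation*}
    (\iota^*\iota + \vartheta\Id)\cG_\vartheta = \iota^*\cG\,,
\end{equation*}
which, upon inverting the (boundedly invertible, by the first paragraph) operator $\iota^*\iota + \vartheta\Id$, yields exactly \eqref{eq:rkhs_approx_soln}. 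Since $J$ is strongly convex, this stationary point is the unique global minimizer, completing the proof.

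I do not expect any genuine obstacle here: this is the textbook Tikhonov-regularization argument (cf.\ \cite{smale2005shannon}), and everything rests on $\iota$ being bounded, which has already been verified in the surrounding text ($\cH$ is compactly embedded into $L^2_\nu(\cX;\cY)$). The only point requiring a modicum of care is to phrase the adjoint and the optimality condition in the correct inner products — the normal equation lives in $\cH$, not in $L^2_\nu$ — and to confirm invertibility of $\iota^*\iota + \vartheta\Id$ via its lower spectral bound $\vartheta$ rather than appealing to compactness alone. Neither is a real difficulty, so the proof is short.
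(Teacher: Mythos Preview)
Your proposal is correct and follows exactly the approach the paper has in mind: the paper's proof consists of a single sentence stating that the result ``is a consequence of convex optimization on Hilbert spaces and the fact that $\iota$ is a bounded linear operator.'' Your write-up simply fills in the standard details (strong convexity, first-order optimality, invertibility via the $\vartheta$ lower bound) that the paper leaves implicit.
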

\begin{proof}
    The result is a consequence of convex optimization on Hilbert spaces and the fact that $\iota$ is a bounded linear operator.
\end{proof}

The adjoint of the inclusion map, $\iota^*\colon L^2_\nu(\cX;\cY)\to \cH$, is given by the vector-valued reproducing kernel property as the (Bochner) integral operator
\begin{equation}\label{eq:app_rkhs_cov}
    F\mapsto \iota^*F=\E_{u\sim\nu}[K(\slot,u)F(u)]\,.
\end{equation}
Since $\iota$ is compact, so is $\cK\defeq \iota\iota^*\in\cL(L^2_\nu(\cX;\cY))$. The action of $\cK$ is the same as that of the integral operator $\iota^*$ above. Since $\cK$ is also symmetric, the spectral theorem yields the operator norm convergent expansion $\cK=\sum_j\lambda_je_j\otimes_{L^2(\nu)}e_j$. The sequence $\{\lambda_j\}\subset \R_{\geq 0}$ is a non-increasing rearrangement of the eigenvalues of $\cK$ and $\{e_j\}$ is its corresponding eigenbasis. 

Now define the regularized RKHS approximation error
\begin{equation}\label{eq:scrA_rkhs}
    \scrA_\cG(\vartheta)\defeq \inf_{F\in\cH}\left\{\norm{\cG - \iota F}_{L^2_\nu}^2 + \vartheta \norm{F}_{\cH}^2\right\}
\end{equation}
which is parametrized by $\cG\in L^2_\nu(\cX;\cY)$. We have the following convergence result for this error.

\begin{lemma}[Convergence of regularized RKHS approximation error]\label{lem:converge_rkhs}
    Suppose that $\cG$ is in the $L^2_\nu$-closure of $\cH$. Under the prevailing assumptions of this section,
    \begin{equation}\label{eq:scrA_to_zero}
        \lim_{\vartheta\to 0}\scrA_\cG(\vartheta)=0\,.
    \end{equation}
\end{lemma}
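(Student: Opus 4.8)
The plan is to exploit the spectral decomposition $\cK = \iota\iota^* = \sum_j \lambda_j\, e_j \otimes_{L^2(\nu)} e_j$ established above, together with the explicit formula for the regularized approximant $\cG_\vartheta = (\iota^*\iota + \vartheta\Id)^{-1}\iota^*\cG$ from Lem.~\ref{lem:rkhs_approx_soln}, to get a closed-form expression for $\scrA_\cG(\vartheta)$ as a function of the spectral coefficients of $\cG$, and then send $\vartheta \to 0$ by dominated convergence. Concretely, first I would observe that since $\scrA_\cG(\vartheta) = \norm{\cG - \iota\cG_\vartheta}_{L^2_\nu}^2 + \vartheta\norm{\cG_\vartheta}_\cH^2$ and $\iota\cG_\vartheta = \cK(\cK + \vartheta\Id)^{-1}\cG$ (using $\iota(\iota^*\iota + \vartheta\Id)^{-1}\iota^* = \iota\iota^*(\iota\iota^* + \vartheta\Id)^{-1} = \cK(\cK+\vartheta\Id)^{-1}$, the standard push-through identity), one has $\cG - \iota\cG_\vartheta = \vartheta(\cK + \vartheta\Id)^{-1}\cG$. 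Writing $\cG = \sum_j c_j e_j + \cG_0$ where $c_j = \ip{\cG}{e_j}_{L^2_\nu}$ and $\cG_0 \in \Ker(\cK) = \image(\iota)^\perp$ (the orthogonal complement of the closure of $\cH$ in $L^2_\nu$), a short computation then gives
\begin{equation}\label{eq:scrA_spectral}
    \scrA_\cG(\vartheta) = \norm{\cG_0}_{L^2_\nu}^2 + \sum_j \frac{\vartheta^2}{(\lambda_j + \vartheta)^2}|c_j|^2 + \sum_j \frac{\vartheta \lambda_j}{(\lambda_j + \vartheta)^2}|c_j|^2 = \norm{\cG_0}_{L^2_\nu}^2 + \sum_j \frac{\vartheta}{\lambda_j + \vartheta}|c_j|^2 .
\end{equation}
(The two sums collapse because $\vartheta^2/(\lambda_j+\vartheta)^2 + \vartheta\lambda_j/(\lambda_j+\vartheta)^2 = \vartheta/(\lambda_j+\vartheta)$; I would verify this identity carefully as it is the crux of the simplification, though it is entirely routine.)

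Next I would use the hypothesis: $\cG$ lies in the $L^2_\nu$-closure of $\cH = \image(\iota)$. The closure of $\image(\iota)$ equals $\overline{\image(\cK^{1/2})} = \overline{\image(\cK)} = \Ker(\cK)^\perp = \overline{\Span\{e_j\}}$ (using $\iota\iota^* = \cK$ and standard range/kernel relations for compact self-adjoint operators, namely $\overline{\image(\iota)} = \overline{\image(\iota\iota^*)}$). Hence $\cG_0 = 0$, and \eqref{eq:scrA_spectral} reduces to $\scrA_\cG(\vartheta) = \sum_j \frac{\vartheta}{\lambda_j + \vartheta}|c_j|^2$ with $\sum_j |c_j|^2 = \norm{\cG}_{L^2_\nu}^2 < \infty$.

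Finally, pass to the limit: for each fixed $j$ with $\lambda_j > 0$, $\frac{\vartheta}{\lambda_j + \vartheta}|c_j|^2 \to 0$ as $\vartheta \to 0$; each summand is bounded by the summable sequence $|c_j|^2$ uniformly in $\vartheta \in (0,1)$; and any index with $\lambda_j = 0$ forces $c_j = 0$ since $\cG \in \Ker(\cK)^\perp$. Dominated convergence for sums then yields $\lim_{\vartheta \to 0}\scrA_\cG(\vartheta) = 0$.

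The only genuine subtlety — the ``hard part'' such as it is — is the functional-analytic bookkeeping: justifying the push-through identity $\iota(\iota^*\iota + \vartheta)^{-1}\iota^* = \cK(\cK+\vartheta)^{-1}$, and correctly identifying $\overline{\image(\iota)}$ with $\overline{\Span\{e_j\}}$ so that the closure hypothesis kills the orthogonal-complement term $\cG_0$. Everything after that is a one-line dominated convergence argument. An alternative, essentially equivalent, route avoids the spectral calculus entirely: given $\eps > 0$, pick $F \in \cH$ with $\norm{\cG - \iota F}_{L^2_\nu}^2 \le \eps/2$ (possible by the closure hypothesis), then choose $\vartheta$ small enough that $\vartheta\norm{F}_\cH^2 \le \eps/2$; since $\scrA_\cG(\vartheta) \le \norm{\cG - \iota F}_{L^2_\nu}^2 + \vartheta\norm{F}_\cH^2 \le \eps$ and $\scrA_\cG$ is nonnegative and nondecreasing in $\vartheta$, this gives the claim immediately. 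I would likely present this second argument as the primary proof for its brevity, relegating the spectral formula \eqref{eq:scrA_spectral} to a remark if a quantitative rate is wanted later (indeed Lem.~\ref{lem:rkhs_approx_rate} presumably does exactly this under the source condition $\cG \in \image(\cK^{r/2})$).
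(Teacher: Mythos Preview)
Your spectral approach is essentially identical to the paper's proof: the paper invokes the Woodbury (push-through) identity to obtain $\iota\cG_\vartheta = \cK(\cK+\vartheta\Id)^{-1}\cG$, expands in the eigenbasis of $\cK$, collapses the two sums into $\scrA_\cG(\vartheta) = \sum_j \frac{\vartheta}{\lambda_j+\vartheta}\ip{e_j}{\cG}_{L^2_\nu}^2$, uses the closure hypothesis to restrict to indices with $\lambda_j\neq 0$, and finishes by dominated convergence---exactly as you outline. Your handling of the $\cG_0 \in \Ker(\cK)$ component is slightly more explicit than the paper's, but the content is the same.

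Your alternative direct argument (choose $F\in\cH$ with $\norm{\cG-\iota F}_{L^2_\nu}^2\le\eps/2$, then take $\vartheta$ small) is \emph{not} in the paper and is genuinely simpler: it needs neither the explicit minimizer formula nor any spectral theory. The trade-off is that the spectral formula is reused immediately in Lem.~\ref{lem:rkhs_approx_rate} for quantitative rates under the source condition, so the paper's choice to develop it here is economical for what follows; your instinct to relegate the spectral derivation to a remark and lead with the two-line argument is sound if clarity of this particular lemma is the priority.
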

\begin{proof}
    By Lem.~\ref{lem:rkhs_approx_soln} and the Woodbury identity~\cite[Thm. 1]{ogawa1988operator}, it holds that
    \begin{equation*}
        \iota\cG_\vartheta = \iota(\iota^*\iota^{\phantom{*}} + \vartheta \Id_{\cH})^{-1}\iota^*\cG = \cK(\cK+\vartheta\Id_{L^2_\nu})^{-1}\cG = (\cK+\vartheta\Id_{L^2_\nu})^{-1} \cK\cG\,.
    \end{equation*}
    The second equality holds by simultaneous diagonalization. Writing $\cG$ in the eigenbasis of $\cK$ yields
    \begin{equation*}
        \cG - \iota\cG_\vartheta = \big[\Id_{L^2_\nu} - (\cK+\vartheta\Id_{L^2_\nu})^{-1} \cK\big]\cG=\sum_{j\in\N}\frac{\vartheta}{\lambda_j + \vartheta} \ip{e_j}{\cG}_{L^2_\nu}\, e_j\,. 
    \end{equation*}
    Similarly, using the norm isometry between $L^2_\nu$ and the RKHS \cite[see, e.g.,][pp. 403--404]{carmeli2006vector} we obtain
    \begin{equation*}
        \norm{\cG_\vartheta}_\cH^2
        =\norm{\cK^{-1/2}\iota \cG_\vartheta}_{L^2_\nu}^2
        =\norm{\cK^{1/2}(\cK+\vartheta\Id_{L^2_\nu})^{-1}\cG}_{L^2_\nu}^2
       =\sum_{j\in\N}\left(\frac{\sqrt{\lambda_j}}{\lambda_j+\vartheta}\right)^2\ip{e_j}{\cG}_{L^2_\nu}^2\,.
    \end{equation*}
    Since the infimum in \eqref{eq:scrA_rkhs} is attained at $\cG_\vartheta$~\eqref{eq:rkhs_approx}, we deduce that
    \begin{align*}
        \scrA_\cG(\vartheta)&=\norm{\cG - \iota \cG_\vartheta}_{L^2_\nu}^2 + \vartheta \norm{\cG_\vartheta}_{\cH}^2\\
        &=\sum_{j\in\N}\frac{\vartheta^2}{(\lambda_j + \vartheta)^2} \ip{e_j}{\cG}_{L^2_\nu}^2 + \sum_{j\in\N}\frac{\vartheta\lambda_j}{(\lambda_j + \vartheta)^2}\ip{e_j}{\cG}_{L^2_\nu}^2\\
        &=\sum_{j\in\N}\Bigl(\frac{\vartheta}{\lambda_j + \vartheta}\Bigr)\ip{e_j}{\cG}_{L^2_\nu}^2\,.
    \end{align*}
    Using $\vartheta/(\lambda_j+\vartheta)\leq 1$ for each $j\in\N$ and $\cG\in \overline{\cH}^{L^2_\nu}$ (the $L^2_\nu$-closure of $\cH$), it follows that
    \begin{equation}
        \scrA_\cG(\vartheta) = \sum_{\{j\in\N \,|\, \lambda_j\neq 0\}}\Bigl(\frac{\vartheta}{\lambda_j + \vartheta}\Bigr)\ip{e_j}{\cG}_{L^2_\nu}^2 \to 0 \qas \vartheta\to 0
    \end{equation}
    by dominated convergence.
\end{proof}

The \emph{rate} of convergence of $\scrA_\cG$ to zero can be quantified under an additional regularity assumption.
\begin{lemma}[Convergence rate under H\"older source condition]\label{lem:rkhs_approx_rate}
    Suppose $\cG\in\image(\cK^{r/2})$ for some $r\geq 0$. Then for any $\vartheta>0$, it holds under the prevailing assumptions of this section that
    \begin{equation}
    \scrA_\cG(\vartheta)
    \leq 
    \norm{\cK^{-r/2}\cG}^2_{L^2_{\nu}(\cX;\cY)}\times
    \begin{cases}
        \vartheta^{r},\quad &\text{if }\ r\in[0,1]\,,\\
        \vartheta\norm{\cK}_{\cL(L^2_\nu)}^{r-1},\quad &\text{if }\ r>1\,.
    \end{cases}
    \end{equation}
\end{lemma}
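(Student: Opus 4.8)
\textit{Proof proposal.} The plan is to read off the value of the regularized RKHS approximation error from the spectral computation already carried out in the proof of Lemma~\ref{lem:converge_rkhs}. There it is shown that, for $\cG$ in the $L^2_\nu$-closure of $\cH$,
\[
\scrA_\cG(\vartheta) \;=\; \sum_{j\in\N}\Bigl(\frac{\vartheta}{\lambda_j+\vartheta}\Bigr)\,\ip{e_j}{\cG}_{L^2_\nu}^2,
\]
where $\cK = \sum_j \lambda_j\, e_j\otimes_{L^2(\nu)} e_j$ is the spectral decomposition of the (compact, self-adjoint, nonnegative) operator $\cK$. For $r>0$ the source condition $\cG\in\image(\cK^{r/2})$ in particular forces $\cG\in(\Ker\cK)^\perp=\overline{\image\cK}$, so this identity applies; moreover it lets me write $\cG = \cK^{r/2}h$ with $h\defeq\cK^{-r/2}\cG\in L^2_\nu(\cX;\cY)$ (the minimal-norm preimage), whence $\ip{e_j}{\cG}_{L^2_\nu}=\lambda_j^{r/2}\ip{e_j}{h}_{L^2_\nu}$ for every $j$ (both sides vanish when $\lambda_j=0$ since $r>0$).

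Substituting and pulling out a uniform factor, the proof then reduces to a scalar estimate:
\[
\scrA_\cG(\vartheta)=\sum_{j\in\N}\frac{\vartheta\,\lambda_j^{\,r}}{\lambda_j+\vartheta}\,\ip{e_j}{h}_{L^2_\nu}^2
\;\le\;\Bigl(\sup_{\lambda\in[0,\,\norm{\cK}_{\cL(L^2_\nu)}]}\frac{\vartheta\,\lambda^{\,r}}{\lambda+\vartheta}\Bigr)\sum_{j\in\N}\ip{e_j}{h}_{L^2_\nu}^2,
\]
and $\sum_{j}\ip{e_j}{h}_{L^2_\nu}^2\le\norm{h}_{L^2_\nu}^2=\norm{\cK^{-r/2}\cG}_{L^2_\nu}^2$. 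So it only remains to bound $g_r(\lambda)\defeq \vartheta\lambda^{\,r}/(\lambda+\vartheta)$ over $\lambda\in[0,\norm{\cK}_{\cL(L^2_\nu)}]$. For $r>1$ I would write $g_r(\lambda)=\vartheta\lambda^{\,r-1}\cdot\tfrac{\lambda}{\lambda+\vartheta}\le\vartheta\lambda^{\,r-1}\le\vartheta\norm{\cK}_{\cL(L^2_\nu)}^{\,r-1}$, using $\tfrac{\lambda}{\lambda+\vartheta}\le1$ and monotonicity of $\lambda\mapsto\lambda^{r-1}$. For $r\in[0,1]$ the endpoints $r=0,1$ are trivial, and for $r\in(0,1)$ I would write $g_r(\lambda)=\vartheta^{\,r}\cdot\tfrac{\vartheta^{1-r}\lambda^{\,r}}{\lambda+\vartheta}$ and invoke weighted arithmetic--geometric mean, $\lambda^{\,r}\vartheta^{1-r}\le r\lambda+(1-r)\vartheta\le\lambda+\vartheta$, to get $g_r(\lambda)\le\vartheta^{\,r}$. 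Combining the two cases with the displayed inequality gives exactly the claimed bound; the case $r=0$ is anyway immediate from $\scrA_\cG(\vartheta)\le\norm{\cG}_{L^2_\nu}^2$ (take $F=0$ in \eqref{eq:scrA_rkhs}).

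The argument is essentially routine once the spectral identity of Lemma~\ref{lem:converge_rkhs} is in hand; the only genuinely non-trivial ingredient is the elementary inequality $\lambda^{r}\vartheta^{1-r}\le r\lambda+(1-r)\vartheta$ for $r\in(0,1)$ (equivalently Young's inequality). The only bookkeeping to be careful about is the meaning of $\cK^{-r/2}$ on $\image(\cK^{r/2})$ and the fact that the zero-eigenvalue part of the spectrum contributes nothing, which holds because $\cK^{r/2}$ and $\cK$ have the same kernel for $r>0$.
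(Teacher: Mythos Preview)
Your proposal is correct and follows essentially the same route as the paper: both substitute the source condition $\cG=\cK^{r/2}F_\cG$ into the spectral identity from Lemma~\ref{lem:converge_rkhs} and then reduce to bounding the scalar function $g_r(\lambda)=\vartheta\lambda^r/(\lambda+\vartheta)$. The only cosmetic difference is that for $r\in(0,1)$ the paper verifies $g_r(\lambda)\le\vartheta^r$ via the factorization $g_r(\lambda)=\vartheta^r\bigl(\tfrac{\lambda}{\lambda+\vartheta}\bigr)^r\bigl(\tfrac{\vartheta}{\lambda+\vartheta}\bigr)^{1-r}$ rather than your weighted AM--GM, which is an equivalent elementary step.
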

\begin{proof}
    The proof closely follows the argument of~\citet[Thm. 4, p. 295]{smale2005shannon}. By hypothesis, there exists $F_{\cG}\in L^2_\nu(\cX;\cY)$ such that $\cG = \cK^{r/2}F_\cG$. Then
    \begin{equation*}
     \scrA_\cG(\vartheta) = \sum_{j\in\N}\frac{\vartheta\lambda_j^r}{\lambda_j + \vartheta} \ip{e_j}{F_\cG}_{L^2_\nu}^2
     \leq
     \biggl(\sup_{j\in\N} \frac{\vartheta\lambda_j^r}{\lambda_j + \vartheta}\biggr)\norm{F_\cG}_{L^2_\nu}^2\,.
    \end{equation*}
    For any $j\in\N$, the argument of the supremum equals
    \begin{equation*}
    \frac{\vartheta\lambda_j^r}{\lambda_j + \vartheta}=\Bigl(\frac{\lambda_j}{\lambda_j + \vartheta}\Bigr)\Bigl(\frac{\lambda_j}{\lambda_j + \vartheta}\Bigr)^{r-1}
    \frac{\vartheta}{(\lambda_j + \vartheta)^{1-r}}= \vartheta^r \Bigl(\frac{\lambda_j}{\lambda_j + \vartheta}\Bigr)^r\Bigl(\frac{\vartheta}{\lambda_j + \vartheta}\Bigr)^{1-r}.
    \end{equation*}
    This is bounded above by $\vartheta^r$ for all $j\in\N$ if $r\geq0$ and $r\leq 1$. Otherwise, $r>1$ and
    \begin{equation*}
        \frac{\vartheta\lambda_j^r}{\lambda_j + \vartheta}=\vartheta \lambda_j^{r-1}\Bigl(\frac{\lambda_j}{\lambda_j + \vartheta}\Bigr)\leq \vartheta \lambda_j^{r-1}\,.
    \end{equation*}
    Taking the supremum over $j\in\N$ completes the proof.
\end{proof}
In Lem.~\ref{lem:rkhs_approx_rate}, $\cG$ satisfies $\cG\in\cH$ if $r\geq 1$, and the rate of convergence of $\scrA_\cG(\vartheta)$ is at least as fast as $O(\vartheta)$ as $\vartheta\to 0$. When $r\in[0,1)$, then $\cG\notin\cH$ and the rate becomes slower than linear in $\vartheta$.

\section{Proofs for section~\ref{sec:main}}\label{app:proof_main}
In this section, we prove Thm.~\ref{thm:error_bound_total_G} and its main consequences.

\begin{proof}[Proof of Theorem~\ref{thm:error_bound_total_G}]
Under the hypotheses, \eqref{eq:alphabd_noisy_risk} in Prop.~\ref{prop:train_erm_bound_noisy} holds with probability at least $1-\delta$ provided that $M\geq \lambda^{-1}\log(16/\delta)$ and $N\geq \lambda^{-2}\log(8/\delta)$. That is, $\scrR_N(\hat{\alpha};\cG)\leq \scrR^{\lambda}_N(\hat{\alpha};\cG)\leq \beta\lambda$, where $\beta=\beta(\rho,\lambda,\GH,\eta)$ is given by \eqref{eq:alphabd_noisy_const}. In particular, $\alhat\in \cA_\beta$~\eqref{eq:alphabd_noisy} on the same event (by Cor.~\ref{cor:train_norm_bound_noisy}). It follows that, on this event, 
\begin{equation*}
    \scrR(\hat{\alpha};\cG)-\scrR_N(\hat{\alpha};\cG)\leq \sup_{\al\in\cA_\beta}\abs[\big]{\scrR_N(\al;\cG) - \scrR(\al;\cG)}\,.
\end{equation*}
Application of Prop.~\ref{prop:est_error_combined} shows that the right hand side of the above display is bounded above by
\begin{equation*}
    32\sqrt{6}e^{3/2}\bigl(\norm{\cG}_{L_\nu^\infty}^2 + \norm{\phi}_{L^\infty}^2 \beta(\rho,\lambda,\GH,\eta)\bigr)\lambda\leq 79e^{3/2}\bigl(\norm{\cG}_{L_\nu^\infty}^2 + \norm{\phi}_{L^\infty}^2 \beta(\rho,\lambda,\GH,\eta)\bigr)\lambda
\end{equation*}
with probability at least $1-\delta$ because $N\geq\lambda^{-2}\log(8/\delta)\geq \log(2/\delta)$. Recalling \eqref{eq:proof_decomp}, using $1\leq 79e^{3/2}$, and applying a union bound completes the proof.
\end{proof}

\begin{proof}[Proof of Corollary~\ref{cor:error_bound_total_GH}]
    Since $\cG=\rho+\GH$, we compute
    \begin{align*}
        \scrR(\alhat;\GH)=\norm[\big]{\GH-\Phi(\slot;\alhat)}_{L^2_\nu}^2&=\norm[\big]{-\rho + [\cG-\Phi(\slot;\alhat)]}_{L^2_\nu}^2\\
        &\leq 2\norm{\rho}_{L^2_\nu}^2 + 2\norm{\cG-\Phi(\slot;\alhat)}_{L^2_\nu}^2\\
        & = 2\E_{u\sim\nu}\norm{\rho(u)}_{\cY}^2 + 2\scrR(\alhat;\cG)\,.
    \end{align*}
    By \eqref{eq:error_bound_total_G} and \eqref{eq:alphabd_noisy_const}, we see that the term $\E\norm{\rho(u)}_{\cY}^2$ also appears in the upper bound for $\scrR(\alhat;\cG)$. Collecting like terms and enlarging constants proves the assertion.
\end{proof}

\begin{proof}[Proof of Theorem~\ref{thm:main_consis_strong}]
    For $k\in\N$ and $\delta\in (0,1)$, the trained RFM satisfies $\norm{\cG - \Phi(\slot;\alhat^{(k)})}_{L^2_\nu}\leq c s_k$ for some deterministic constant $c>0$, where the sequence $s_n\to 0$ as $n\to\infty$ is given by the right hand side of \eqref{eq:main_consis_inf} with $\lambda=\lambda_n$ for each $n\in\N$ (by Lem.~\ref{lem:converge_rkhs}). This inequality holds with probability at least $1-\delta$ if $M\gtrsim \lambda_k^{-1}\log(2/\delta)$ and $N\gtrsim \lambda_k^{-2}\log(2/\delta)$ by Thm.~\ref{thm:error_bound_total_G}. Now choose $\delta = \lambda_k$. Then
    \begin{equation*}
        \sum_{k\in\N}\Pr\bigl\{ \norm{\cG - \Phi(\slot;\alhat^{(k)})}_{L^2_\nu} > c s_k \bigr\}\leq \sum_{k\in\N}\lambda_k<\infty\,.
    \end{equation*}
    The first Borel--Cantelli lemma establishes that there exists an $\N$-valued random variable $k_0$ such that $\norm{\cG - \Phi(\slot;\alhat^{(k)})}_{L^2_\nu}\leq cs_k$ for all $k>k_0$ almost surely. This implies the desired result.
\end{proof}

\begin{proof}[Proof of Theorem~\ref{thm:main_rates_slow}]
    Application of Thm.~\ref{thm:error_bound_total_G} leads to the high probability bound~\eqref{eq:main_consis_inf} by the same argument from Sect.~\ref{sec:main_consis}. Using that $\lambda\lesssim 1$ and Lem.~\ref{lem:rkhs_approx_rate} proves the assertion.
\end{proof}

\begin{proof}[Proof of Corollary~\ref{cor:main_rates_strong}]
    Apply Thm.~\ref{thm:main_rates_slow} to get a high probability bound. Choose $\lambda=\lambda_k=\delta$. Then the proof follows that of Thm.~\ref{thm:main_consis_strong} except with $\{s_k\}$ replaced by $\{\lambda_k^{(r\mmin 1)/2}\}$.
\end{proof}

\section{Proofs for section~\ref{sec:proof}}\label{app:proof_proof}
This section provides proofs of the error bounds for the regularized empirical risk~({\bf SM}~\ref{app:proof_proof_emp}) and the generalization gap~({\bf SM}~\ref{app:proof_proof_gen}).

\subsection{Proofs for subsection~\ref{sec:proof_emp}:~\nameref*{sec:proof_emp}}\label{app:proof_proof_emp}
We now prove Prop.~\ref{prop:train_erm_bound_noisy}. Supporting results used in the proof appear after the argument in the subsequent subsections~({\bf SM}~\ref{app:proof_proof_emp_approx},~\ref{app:proof_proof_emp_misspec},~and~\ref{app:proof_proof_emp_noise}).

\begin{proof}[Proof of Proposition~\ref{prop:train_erm_bound_noisy}]\label{proof:app_proof_emp}
    Our starting point is \eqref{eq:proof_noisy_coef_decomp}. We first note by linearity that
    \begin{align*}
    \frac2N \sum_{n=1}^N \langle \eta_n, \Phi(u_n;\alhat)\rangle_{\cY}
    &=
    \norm{\alhat}_M \left(\frac2N \sum_{n=1}^N \ip[\big]{\eta_n}{\Phi(u_n;\hat{\alpha}/\norm{\alhat}_M)}_{\cY}\right)
    \\
    &\le 
    \norm{\alhat}_M \left(2\sup_{\al' \in \cA_1} \abs[\Bigg]{\frac1N \sum_{n=1}^N \langle \eta_n, \Phi(u_n;\al') \rangle_{\cY}}\right)\,,
    \end{align*}
    where $\cA_1=\set{\al'\in\R^M}{\norm{\al'}_M^2\leq 1}$, provided that $\norm{\alhat}_M>0$. Otherwise, the inequality in the above display holds trivially. Next, we define
    \begin{align}
    t &\defeq \frac1M \sum_{m=1}^M |\hat{\alpha}_m|^2 = \norm{\alhat}_M^2\,,\label{eq:app_t_alphahat}\\
    A_{N,M}^\lambda &\defeq \scrR_N^\lambda(\alpha;\cG) + \frac2N \sum_{n=1}^N \langle -\eta_n, \Phi(u_n;\alpha)\rangle_{\cY}\,,\qa\label{eq:app_anm}
    \\
    c_{N,M} &\defeq \left(2\sup_{\al' \in \cA_1} \abs[\Bigg]{\frac1N \sum_{n=1}^N \langle \eta_n, \Phi(u_n;\al') \rangle_{\cY}}\right)^2.\label{eq:app_cnm}
    \end{align}
    The inequality in \eqref{eq:proof_noisy_coef_decomp} and the arithmetic-mean--geometric-mean inequality imply that
    \begin{equation}\label{eq:app_quad_ineq}
        \lambda t \le \scrR_N^\lambda(\hat{\alpha};\cG) \le A_{N,M}^\lambda + \sqrt{c_{N,M} t}
        \le A_{N,M}^\lambda + \frac12 \lambda^{-1} c_{N,M} +  \frac12 \lambda t\,.
    \end{equation}
    Subtracting $\lambda t/2$ from both sides and multiplying through by $2\lambda^{-1}$ yields
    \begin{equation}\label{eq:app_quad_ineq_soln}
        t \le 2\lambda^{-1} A_{N,M}^\lambda + \lambda^{-2} c_{N,M}\,.
    \end{equation}
    Substituting \eqref{eq:app_quad_ineq_soln} back into the right-most side of \eqref{eq:app_quad_ineq} gives
    \begin{equation}\label{eq:app_quad_ineq_risk}
        \scrR_N^\lambda(\hat{\alpha};\cG) \le 2A_{N,M}^\lambda + \lambda^{-1} c_{N,M}\,.
    \end{equation}
    All of the above calculations hold with probability one. To complete our estimate of $\scrR_N^\lambda(\hat{\alpha};\cG)$, it remains to upper bound the $A_{N,M}^\lambda$ \eqref{eq:app_anm} and $c_{N,M}$ \eqref{eq:app_cnm} terms. We begin with the latter.
    
    Lemmas~\ref{lem:app_noise_sup_con}~and~\ref{lem:app_noise_sup_expect} (with $t=1$) and \eqref{eq:app_def_subexp_norm_vec} show that
    \begin{align}\label{eq:app_cnm_bound}
        \begin{split}
        \sqrt{c_{N,M}}&\leq \frac{4\norm{\eta}_{\psi_1(\cY)}\norm{\phi}_{L^\infty}}{\sqrt{N}}
        + 16e^{3/2}\norm{\eta}_{\psi_1(\cY)}\norm{\phi}_{L^\infty}\sqrt{\frac{\log(1/\delta)}{N}}\\
        &\leq 16e^{3/2}\norm{\eta}_{\psi_1(\cY)}\norm{\phi}_{L^\infty}\sqrt{\frac{6\log(2/\delta)}{N}}
        \end{split}
    \end{align}
    with probability at least $1-\delta$ if $N\geq \log(2/\delta)\geq \log(1/\delta)$. We used the inequalities $4\leq 16e^{3/2}$, $\sqrt{a}+\sqrt{b}\leq \sqrt{2(a+b)}$, and $1\leq 2\log(2/\delta)$ to get to the last line.

    Continuing, we bound $A_{N,M}^\lambda$. It has several error contributions originating from two terms. The first term in \eqref{eq:app_anm} is $\scrR_N^\lambda(\alpha;\cG)$, where $\al\in\R^M$ is arbitrary. By Assumption~\ref{ass:misspec}, $\cG=\rho+\GH$ so that
    \begin{equation}\label{eq:approx_decomp}
        \scrR_N^\lambda(\alpha;\cG)\leq \lambda\norm{\al}_M^2 + 2\scrR_N(\alpha;\GH) + \frac{2}{N}\sum_{n=1}^N\norm{\rho(u_n)}_\cY^2\leq 2\scrR_N^\lambda(\alpha;\GH) + \frac{2}{N}\sum_{n=1}^N\norm{\rho(u_n)}_\cY^2\,.
    \end{equation}
    By Lem.~\ref{lem:app_misspec_con}, it holds with probability at least $1-\delta$ that
    \begin{equation}\label{eq:app_misspec_con}
        \frac{2}{N}\sum_{n=1}^N\norm{\rho(u_n)}_\cY^2 \leq 4 \E_u \norm{\rho(u)}_{\cY}^2 + \frac{9\norm{\rho}_{L^\infty_\nu}^2\log(2/\delta)}{N}\,.
    \end{equation}

    Next, we bound the first term on the right hand side in \eqref{eq:approx_decomp}. Since $\GH\in\cH$, there exists $\al_\cH\in L^2_\mu(\Theta;\R)$ such that
    \begin{equation*}
            \GH=\E_{\theta\sim\mu}\bigl[\al_\cH(\theta)\phi(\slot;\theta)\bigr]\qa \norm{\GH}_\cH^2=\E_{\theta\sim\mu}\abs{\al_\cH(\theta)}^2\,.
    \end{equation*}
    With $\al_\cH$ as in the above display, choose once and for all $\al\equiv \alst\in\R^M$ as in \eqref{eq:alstar_exist}. By Lem.~\ref{lem:alphabd_exist},
    \begin{equation}\label{eq:app_risk_repeat_bound}
        2\scrR_N^\lambda(\alst;\GH)\leq 162 \lambda \norm{\GH}_\cH^2
    \end{equation}
    with probability at least $1-\delta$ if $M\geq \lambda^{-1}\log(4/\delta)$. On the same event,
    \begin{equation*}
    \norm{\alst}^2_M\leq \E_\theta |\al_\cH(\theta)|^2 
    \left(
    1 + \frac{2\log(2/\delta)}{\lambda M} + \sqrt{\frac{2\log(2/\delta)}{\lambda M}}
    \right)
    \leq 5\E_\theta |\al_\cH(\theta)|^2=5\norm{\GH}_\cH^2
    \end{equation*}
    by Lem.~\ref{lem:3}. This fact and Lem.~\ref{lem:app_noise_sup_cross} (with $\al\equiv \alst$ as in \eqref{eq:alstar_exist}) shows that the second and final term in $A_{N,M}^\lambda$~\eqref{eq:app_anm} satisfies with probability at least $1-\delta$ the upper bound
    \begin{equation}\label{eq:app_noise_cross_temp}
          \frac{2}{N}\sum_{n=1}^N\ip{-\eta_n}{\Phi(u_n;\alst)}_\cY
          \leq 40e^{3/2}{\norm{\GH}_{\cH}}\norm{\eta}_{\psi_1(\cY)}\norm{\phi}_{L^\infty}\sqrt{\frac{\log(2/\delta)}{N}}\,.
    \end{equation}
     
    Combining the estimates \eqref{eq:app_cnm_bound}, \eqref{eq:approx_decomp}, \eqref{eq:app_misspec_con}, \eqref{eq:app_risk_repeat_bound}, and \eqref{eq:app_noise_cross_temp}, recalling \eqref{eq:app_quad_ineq_risk}, and invoking the union bound, we deduce that if $N\geq \lambda^{-2}\log(2/\delta)$, then
    \begin{equation*}
        \scrR_N^\lambda(\hat{\alpha};\cG) \leq C_0 \lambda + 8 \E_{u\sim\nu}\norm{\rho(u)}_{\cY}^2 + 18\norm{\rho}_{L^\infty_\nu}^2\lambda^2
    \end{equation*}
    with probability at least $1-4\delta$, where
    \begin{align*}
        C_0 &\defeq 324\norm{\GH}_\cH^2 +
        80e^{3/2}\norm{\eta}_{\psi_1(\cY)}\norm{\phi}_{L^\infty}\norm{\GH}_\cH
        + 1536e^{3}\norm{\eta}_{\psi_1(\cY)}^2\norm{\phi}_{L^\infty}^2\\
        &\leq (324 + 4)\norm{\GH}_\cH^2
        + 1936e^{3}\norm{\eta}_{\psi_1(\cY)}^2\norm{\phi}_{L^\infty}^2\,.
    \end{align*}
    In the last line, we used Young's inequality with $\eps=1/8$, that is, $ab\leq \eps a^2/2+b^2/(2\eps)$ with $a=80e^{3/2}\norm{\eta}_{\psi_1(\cY)}\norm{\phi}_{L^\infty}$ and $b=\norm{\GH}_\cH$. This proves the asserted upper bound.
\end{proof}

\subsubsection{Proofs for subsection~\ref{sec:proof_emp_approx}:~\nameref*{sec:proof_emp_approx}}\label{app:proof_proof_emp_approx}
Given a function $\al \in L^2_\mu(\Theta;\R)$, we denote its cut-off at level $T> 0$ by
\begin{equation}\label{eq:app_def_cutoff}
    \theta\mapsto \al_{\le T}(\theta) \defeq \al(\theta) \one_{\{|\al(\theta)|\le T\}}\,.
\end{equation}
This subsection is devoted to the proof of Lem.~\ref{lem:alphabd_exist}, which is based on the following three lemmas. 
\begin{lemma}
\label{lem:1}
Suppose $\cG \in \cH$ belongs to the RKHS $\cH$. Let $\al\in L^2_\mu(\Theta;\R)$ be such that $\cG = \E_\theta[ \al(\theta) \phi(\slot; \theta)]$. 
Let $u_1,\dots, u_N \sim \nu$ be iid samples and $\nu_N = \frac{1}{N} \sum_{n=1}^N \delta_{u_n}$ be the corresponding empirical measure. Then almost surely,
\begin{equation}\label{eq:app_lem_trunc}
    \norm[\big]{\cG - \E_\theta\left[ \alpha_{\le T}(\theta) \phi(\slot;\theta) \right]}^2_{L^2_{\nu_N}}
    \le 
    \frac{\Vert \phi \Vert_{L^\infty}^2 (\E_\theta|\al(\theta)|^2)^2}{T^2} \qfa T>0\,.
\end{equation}
\end{lemma}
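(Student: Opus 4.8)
The plan is to reduce the bound to a deterministic pointwise estimate that is valid at $\nu$-almost every input, and then average it over the sample points defining $\nu_N$. The first step is to rewrite the difference using the definition~\eqref{eq:app_def_cutoff} of the cutoff together with linearity of the Bochner integral $\E_\theta[\,\slot\,\phi(u;\theta)]$: since $\alpha(\theta)-\alpha_{\le T}(\theta)=\alpha(\theta)\one_{\{|\alpha(\theta)|>T\}}$, one has
\[
\cG(u)-\E_\theta\bigl[\alpha_{\le T}(\theta)\phi(u;\theta)\bigr]=\E_\theta\bigl[\alpha(\theta)\one_{\{|\alpha(\theta)|>T\}}\,\phi(u;\theta)\bigr]
\]
for $\nu$-a.e.\ $u$. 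The integrands are Bochner integrable because $\alpha\in L^2_\mu(\Theta;\R)\subset L^1_\mu$ and $\norm{\phi(u;\theta)}_\cY\le\norm{\phi}_{L^\infty}$ for $\mu$-a.e.\ $\theta$, which holds for $\nu$-a.e.\ $u$ since $\norm{\phi}_{L^\infty}$ is an essential supremum with respect to $\nu\otimes\mu$.

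For any such $u$ I would then bound the $\cY$-norm of the right-hand side by the triangle inequality for Bochner integrals followed by the uniform bound on $\phi$,
\[
\norm[\big]{\cG(u)-\E_\theta[\alpha_{\le T}(\theta)\phi(u;\theta)]}_\cY\le\norm{\phi}_{L^\infty}\,\E_\theta\bigl[|\alpha(\theta)|\,\one_{\{|\alpha(\theta)|>T\}}\bigr],
\]
and control the remaining scalar term by the elementary truncation estimate $|\alpha(\theta)|\le|\alpha(\theta)|^2/T$ on the event $\{|\alpha(\theta)|>T\}$, which gives $\E_\theta[|\alpha(\theta)|\one_{\{|\alpha(\theta)|>T\}}]\le T^{-1}\E_\theta|\alpha(\theta)|^2$. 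Squaring yields
\[
\norm[\big]{\cG(u)-\E_\theta[\alpha_{\le T}(\theta)\phi(u;\theta)]}_\cY^2\le\frac{\norm{\phi}_{L^\infty}^2\,(\E_\theta|\alpha(\theta)|^2)^2}{T^2}
\]
for $\nu$-a.e.\ $u$ and every $T>0$ (the exceptional $\nu$-null set is independent of $T$). Finally, I would apply this inequality at $u=u_1,\dots,u_N$ — valid simultaneously with probability one, being a finite union of $\nu$-null exceptional events — and average against $\nu_N=\tfrac1N\sum_{n=1}^N\delta_{u_n}$ to obtain~\eqref{eq:app_lem_trunc}.

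I do not anticipate a genuine obstacle here: the lemma is a one-sided truncation bound with a few-line proof, and there is no concentration or randomness beyond the trivial averaging. The only point requiring mild care is the bookkeeping around $\norm{\phi}_{L^\infty}$ being an essential supremum over the product measure rather than a true pointwise supremum, which is exactly why the conclusion is stated to hold almost surely in the draw of $\{u_n\}$ (so that $\norm{\phi(u_n;\theta)}_\cY\le\norm{\phi}_{L^\infty}$ is available for $\mu$-a.e.\ $\theta$ at each sample point) rather than surely for every fixed input.
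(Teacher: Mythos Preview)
Your proof is correct and follows essentially the same approach as the paper: rewrite the difference as $\E_\theta[\alpha_{>T}(\theta)\phi(u;\theta)]$, bound the $\cY$-norm by $\norm{\phi}_{L^\infty}\E_\theta|\alpha_{>T}(\theta)|$, and use $\E_\theta|\alpha_{>T}(\theta)|\le T^{-1}\E_\theta|\alpha(\theta)|^2$. You additionally spell out the Bochner integrability and the essential-supremum bookkeeping behind the ``almost surely'' qualifier, which the paper leaves implicit.
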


\begin{proof}
Fix $u\in \cX$ and define $\al_{>T} \defeq \al - \al_{\le T}$. 
The claim follows from
\begin{align*}
\Vert \cG(u) - \E_\theta[\al_{\le T}(\theta) \phi(u;\theta)]\Vert_{\cY}^2
= \Vert \E_\theta[\al_{> T}(\theta) \phi(u;\theta)] \Vert_{\cY}^2
\le \Vert \phi \Vert_{L^\infty}^2 (\E_\theta|\al_{>T}(\theta)|)^2
\end{align*}
and the observation that $\E_\theta|\al_{>T}(\theta)| \le \E_\theta|\alpha(\theta)|^2/T$.
\end{proof}

The previous lemma controls the error incurred by truncating the coefficient function of elements in the RKHS. The next lemma provides a bound on sample average approximations of these truncations.
\begin{lemma}
\label{lem:2}
Let $u_1,\dots, u_N \sim \nu$ be iid samples and let $\nu_N = \frac{1}{N} \sum_{n=1}^N \delta_{u_n}$ denote the corresponding empirical measure. For $\al\in L^2_\mu(\Theta;\R)$, let $Z=Z(\theta)$ be the $L^2_{\nu_N}(\cX;\cY)$-valued random variable defined for $\theta\sim\mu$ by
\begin{equation}
    Z = \al_{\le T}(\theta) \phi(\slot;\theta)\,.
\end{equation}
If $Z_1,\dots, Z_M$ are $M$ iid copies of $Z$, then it holds with probability at least $1-\delta$ that
\begin{equation}
\left\Vert
 \frac{1}{M} \sum_{m=1}^M Z_m - \E Z
 \right\Vert^2_{L^2_{\nu_N}}
\le
 \frac{32 T^2 \Vert \phi \Vert_{L^\infty}^2 \log^2(2/\delta)}{M^2} + \frac{4\Vert \phi \Vert_{L^\infty}^2\log(2/\delta) \E_\theta |\al(\theta)|^2}{M}\,.
\end{equation}
\end{lemma}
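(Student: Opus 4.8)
The plan is to apply the vector-valued Bernstein inequality, Thm.~\ref{thm:bern_vv}, in the separable Hilbert space $H\defeq L^2_{\nu_N}(\cX;\cY)$, treating the data sample $\U$ (equivalently the empirical measure $\nu_N$) as \emph{fixed} and exploiting only the randomness in the iid feature parameters $\theta_1,\dots,\theta_M\sim\mu$. For fixed $\U$, the space $L^2_{\nu_N}(\cX;\cY)$ is isometric to a finite (weighted) product of copies of $\cY$, hence separable, and $Z=\al_{\le T}(\theta)\phi(\slot;\theta)$ is a well-defined $H$-valued random variable by the measurability of $\phi$ (Assumption~\ref{ass:rf}). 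Because the bound obtained below is uniform over all realizations of $\U$, the stated probability estimate then holds over $\Th$, and indeed jointly in $(\U,\Th)$.

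Next I would check the hypotheses of the bounded-case Bernstein Lemma~\ref{lem:bern_vv} for $Z$, with $c\defeq T\norm{\phi}_{L^\infty}$ and $v\defeq\norm{\phi}_{L^\infty}\sqrt{\E_\theta\abs{\al(\theta)}^2}$. Since $\abs{\al_{\le T}(\theta)}\le T$ and $\norm{\phi(\slot;\theta)}_{L^2_{\nu_N}}^2=\frac1N\sum_{n=1}^N\norm{\phi(u_n;\theta)}_\cY^2\le\norm{\phi}_{L^\infty}^2$, we have $\norm{Z}_H=\abs{\al_{\le T}(\theta)}\norm{\phi(\slot;\theta)}_{L^2_{\nu_N}}\le c$ almost surely. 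Moreover $\E\norm{Z-\E Z}_H^2\le\E\norm{Z}_H^2=\E_\theta[\abs{\al_{\le T}(\theta)}^2\norm{\phi(\slot;\theta)}_{L^2_{\nu_N}}^2]\le\norm{\phi}_{L^\infty}^2\E_\theta\abs{\al(\theta)}^2=v^2$, where the last step simply discards the truncation. Lemma~\ref{lem:bern_vv} then certifies Bernstein's moment condition~\eqref{eq:bern_cond} for $Z$ with $b=2c$ and $\sigma=v$.

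Applying Thm.~\ref{thm:bern_vv} with $b=2c$, $\sigma=v$, and with the sample size there taken to be $M$, it follows that with probability at least $1-\delta$,
\[
\norm[\Big]{\frac1M\sum_{m=1}^M Z_m-\E Z}_{L^2_{\nu_N}}\le\frac{4T\norm{\phi}_{L^\infty}\log(2/\delta)}{M}+\sqrt{\frac{2\norm{\phi}_{L^\infty}^2\E_\theta\abs{\al(\theta)}^2\log(2/\delta)}{M}}\,.
\]
Squaring this and using $(a+b)^2\le 2a^2+2b^2$ reproduces exactly the claimed bound. The only point needing genuine care is the conditioning on $\U$: the Hilbert space $H$ in which Bernstein is invoked is itself random, so one must phrase the estimate for each fixed realization of $\U$ rather than attempt a concentration inequality directly in the fixed space $L^2_\nu$; beyond that the argument is just a substitution of constants.
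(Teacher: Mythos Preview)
Your proposal is correct and follows essentially the same approach as the paper: bound $\norm{Z}_{L^2_{\nu_N}}\le T\norm{\phi}_{L^\infty}$ and $\E\norm{Z-\E Z}^2_{L^2_{\nu_N}}\le\norm{\phi}_{L^\infty}^2\E_\theta\abs{\al(\theta)}^2$, invoke Lem.~\ref{lem:bern_vv} and Thm.~\ref{thm:bern_vv}, then square using $(a+b)^2\le 2a^2+2b^2$. Your additional remarks on conditioning on $\U$ and the separability of $L^2_{\nu_N}(\cX;\cY)$ make explicit points the paper leaves implicit, but the substance of the argument is the same.
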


\begin{proof}
By boundedness of $|\al_{\le T}|\le T$, we have the trivial uniform upper bound $\Vert Z_m \Vert_{L^2_{\nu_N}} \le T\Vert \phi \Vert_{L^\infty}$ for each $m$. The variance is bounded above as
\begin{equation*}
    \sigma^2 \defeq \E \Vert Z - \E Z \Vert_{L^2_{\nu_N}}^2
\le
\E \Vert Z \Vert_{L^2_{\nu_N}}^2
\le
\Vert \phi \Vert_{L^\infty}^2 \E_\theta|\al(\theta)|^2\,.
\end{equation*}
By Lem.~\ref{lem:bern_vv} and Thm.~\ref{thm:bern_vv}, it holds with probability at least $1-\delta$ that
\begin{equation*}
    \left\Vert \frac{1}M \sum_{m=1}^M Z_m - \E Z \right \Vert_{L^2_{\nu_N}} 
\le 
\frac{4T\Vert \phi \Vert_{L^\infty} \log(2/\delta)}{M} + \sqrt{\frac{2\sigma^2 \log(2/\delta)}{M}}\,.
\end{equation*}
Squaring both sides and substitution of the above bound on $\sigma^2$ yields the claimed estimate.
\end{proof}

The third lemma below develops a high probability bound on the empirical approximation of the RKHS norm of truncated elements in the RKHS.
\begin{lemma}
\label{lem:3}
Let $\al\in L^2_\mu(\Theta;\R)$ and $\Th\sim\mu^{\otimes M}$. With probability at least $1-\delta$, it holds that
\begin{equation}
    \frac{1}{M} \sum_{m=1}^M |\al_{\le T}(\theta_m)|^2 
    \le 
    \E_\theta|\al(\theta)|^2 + \frac{4T^2 \log(2/\delta)}{M} + \sqrt{\frac{2T^2 \E_\theta|\al(\theta)|^2 \log(2/\delta)}{M}}\,.
\end{equation}
\end{lemma}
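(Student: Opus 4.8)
The plan is to recognize the left-hand side as an empirical mean of $M$ iid bounded nonnegative scalars and apply the scalar case ($H=\R$) of the Bernstein inequality from Theorem~\ref{thm:bern_vv}, with its moment hypothesis verified through Lemma~\ref{lem:bern_vv}. This is the scalar, bounded-variable analogue of the concentration argument already used for $\rho$ in Lemma~\ref{lem:app_misspec_con}.

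First I would set $W_m \defeq \abs{\al_{\le T}(\theta_m)}^2$ for $m=1,\dots,M$. Since $\Th\sim\mu^{\otimes M}$ and by the definition of the cut-off in \eqref{eq:app_def_cutoff}, these are iid copies of $W \defeq \abs{\al_{\le T}(\theta)}^2$ with $\theta\sim\mu$, and $0 \le W \le T^2$ almost surely. I would then record the needed moments: using $\abs{\al_{\le T}(\theta)}^2 = \abs{\al(\theta)}^2\one_{\{\abs{\al(\theta)}\le T\}}$,
\begin{equation*}
    \E W = \E_\theta\bigl[\abs{\al(\theta)}^2\one_{\{\abs{\al(\theta)}\le T\}}\bigr] \le \E_\theta\abs{\al(\theta)}^2\,,
\end{equation*}
and, bounding one factor $\abs{\al(\theta)}^2$ by $T^2$ on the event $\{\abs{\al(\theta)}\le T\}$,
\begin{equation*}
    \Var(W) \le \E W^2 = \E_\theta\bigl[\abs{\al(\theta)}^4\one_{\{\abs{\al(\theta)}\le T\}}\bigr] \le T^2\,\E_\theta\abs{\al(\theta)}^2 \eqdef v^2\,.
\end{equation*}

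Next I would apply Lemma~\ref{lem:bern_vv} with $c = T^2$ and variance bound $v^2$, which gives that $W$ satisfies the Bernstein moment condition \eqref{eq:bern_cond} with $b = 2T^2$ and $\sigma^2 = v^2$, and feed this into Theorem~\ref{thm:bern_vv} (scalar case, with $N$ there equal to $M$). This yields, with probability at least $1-\delta$,
\begin{equation*}
    \abs[\bigg]{\frac{1}{M}\sum_{m=1}^M W_m - \E W} \le \frac{4T^2\log(2/\delta)}{M} + \sqrt{\frac{2T^2\,\E_\theta\abs{\al(\theta)}^2\,\log(2/\delta)}{M}}\,.
\end{equation*}
Discarding the lower tail and replacing $\E W$ by its upper bound $\E_\theta\abs{\al(\theta)}^2$ on the right-hand side delivers the claim.

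I do not expect a genuine obstacle here. The only points requiring a moment's care are that the Bernstein bound of Theorem~\ref{thm:bern_vv} is monotone nondecreasing in $\sigma^2$ and in the centering level, so that replacing $\Var(W)$ by $v^2$ and $\E W$ by $\E_\theta\abs{\al(\theta)}^2$ is harmless; and that $T$ is a fixed deterministic number by hypothesis, so that the $W_m$ are truly iid. No tools beyond those in {\bf SM}~\ref{app:concen} are needed.
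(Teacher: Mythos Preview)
Your proposal is correct and follows essentially the same approach as the paper's proof: both define the bounded scalar variable $|\al_{\le T}(\theta)|^2$, bound its variance by $T^2\E_\theta|\al(\theta)|^2$, invoke Lemma~\ref{lem:bern_vv} with $c=T^2$ (hence $b=2T^2$), and then apply Theorem~\ref{thm:bern_vv} before replacing $\E_\theta|\al_{\le T}(\theta)|^2$ by its upper bound $\E_\theta|\al(\theta)|^2$.
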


\begin{proof}
We apply Bernstein's inequality~\eqref{eq:bern_vv} to the random variable $Z(\theta) \defeq |\alpha_{\le T}(\theta)|^2$ with $\theta\sim \mu$ and $M\in\N$ iid copies $Z_1,\dots, Z_M$ of $Z$ defined by $Z_m = Z(\theta_m)$ for each $m$. We note that $|Z|\le T^2$ by definition. The variance of $Z$ satisfies the upper bound
\begin{equation*}
    \sigma^2\defeq \E\abs{Z-\E Z}^2
\le \E\abs{Z}^2 
=\E_\theta|\al_{\le T}(\theta)|^4
\le T^2 \E_\theta|\al(\theta)|^2\,.
\end{equation*}
It follows from Lem.~\ref{lem:bern_vv} and Thm.~\ref{thm:bern_vv} that
\begin{equation*}
    \frac{1}{m} \sum_{m=1}^M Z_m 
\le 
\E Z + \frac{4T^2 \log(2/\delta)}{M} + \sqrt{\frac{2\sigma^2 \log(2/\delta)}{M}}
\end{equation*}
with probability at least $1-\delta$. This in turn implies that
\begin{equation*}
    \frac{1}{M} \sum_{m=1}^M |\alpha_{\le T}(\theta_m)|^2 
\le 
\E_\theta|\al(\theta)|^2] + \frac{4T^2 \log(2/\delta)}{M} + \sqrt{\frac{2T^2 \E|\al(\theta)|^2 \log(2/\delta)}{M}}
\end{equation*}
with at least the same probability. This is the claim.
\end{proof}

We are now in a position to prove Lem.~\ref{lem:alphabd_exist}.
\begin{proof}[Proof of Lemma~\ref{lem:alphabd_exist}]
Write $T\defeq T(\lambda)$ and $\al\defeq\al_\cH$. Next, define $\al_{\le T}\in L^2_\mu(\Theta;\R)$ by
\begin{equation*}
    \theta\mapsto \alpha_{\le T}(\theta)\defeq \al(\theta)\one_{\{\abs{\al(\theta)}\leq T\}}  = 
    \begin{cases}
    \alpha(\theta), & \text{if }|\alpha(\theta)| \le T\,, \\
    0, & \text{otherwise}\,.
    \end{cases}
\end{equation*}
We define $\alpha_{>T}\defeq \al\one_{\{\abs{\al}>T\}}$ similarly, so that $\alpha \equiv \alpha_{\le T} + \al_{>T}$ holds true. Then for $\theta_1,\dots, \theta_M$, we have $\alst \in \R^M$ given by $\alst_m=\al_{\le T}(\theta_m)$ for each $m\in\{1,\ldots, M\}$. 
We claim that $\scrR_N^\lambda(\alst;\cG) \le (74 \Vert \phi \Vert_{L^\infty}^2 + 7)\lambda \E_\theta|\alpha_\cH(\theta)|^2$ with high probability, which implies the asserted bound \eqref{eq:alphabd_exist}. To see this, we make the error decomposition
\begin{align}
\scrR_N^\lambda(\alst;\cG) 
&=
\frac{1}{N} \sum_{n=1}^N \left\Vert
 \cG(u_n) - \frac{1}{M} \sum_{m=1}^M \al_{\le T}(\theta_m) \phi(u_n;\theta_m)
 \right\Vert^2_{\cY}
 +
 \frac{\lambda}{M} \sum_{m=1}^M |\al_{\le T}(\theta_m)|^2
 \\
&\le
\frac{2}{N} \sum_{n=1}^N \left\Vert
 \cG(u_n) - \E_\theta\left[ \alpha_{\le T}(\theta) \phi(u_n;\theta) \right]
 \right\Vert^2_{\cY}\tag{I}\label{eq:alphabd_1}
 \\
 &\qquad
 + \frac{2}{N} \sum_{n=1}^N \left\Vert
 \frac{1}{M} \sum_{m=1}^M \alpha_{\le T}(\theta_m) \phi(u_n;\theta_m) - \E_\theta\left[ \alpha_{\le T}(\theta) \phi(u_n;\theta) \right]
 \right\Vert^2_{\cY}\tag{II}\label{eq:alphabd_2}
 \\
 &\qquad\qquad
 + \frac{\lambda}{M} \sum_{m=1}^M |\alpha_{\le T}(\theta_m)|^2\,.
 \tag{III}\label{eq:alphabd_3}
\end{align}
Each of the three terms \eqref{eq:alphabd_1}--\eqref{eq:alphabd_3} is estimated as follows.

By Lem.~\ref{lem:1}, we can bound
\begin{equation*}
    \eqref{eq:alphabd_1}
\le \frac{2\Vert \phi \Vert_{L^\infty}^2 (\E_\theta|\al(\theta)|^2)^2}{T^2}
= 2 \lambda \Vert \phi \Vert_{L^\infty}^2 \E_\theta|\al(\theta)|^2\,.
\end{equation*}

Lem.~\ref{lem:2} delivers the bound
\begin{align*}
\eqref{eq:alphabd_2}
&\le  \frac{64 T^2 \Vert \phi \Vert_{L^\infty}^2 \log(2/\delta)^2}{M^2} + \frac{8\Vert \phi \Vert_{L^\infty}^2\log(2/\delta) \E_\theta|\al(\theta)|^2}{M}
\\
&= 
\lambda \E_\theta|\alpha(\theta)|^2
\left(
\frac{64 \Vert \phi \Vert_{L^\infty}^2 \log(2/\delta)^2}{\lambda^2 M^2} + \frac{8\Vert \phi \Vert_{L^\infty}^2\log(2/\delta) }{\lambda M}
\right)
\end{align*}
with probability at least $ 1-\delta$.

Last, Lem.~\ref{lem:3} yields
\begin{align*}
\eqref{eq:alphabd_3}
&\le
\lambda \E|\al(\theta)|^2
+ \frac{4\lambda T^2 \log(2/\delta)}{M} + \lambda\sqrt{\frac{2T^2 \E|\al(\theta)|^2\log(2/\delta)}{M}}
\\
&=
\lambda \E_\theta |\al(\theta)|^2 
\left(
1 + \frac{4\log(2/\delta)}{\lambda M} + \sqrt{\frac{2\log(2/\delta)}{\lambda M}}
\right)
\end{align*}
with probability at least $ 1-\delta$.

Combining the three estimates, it follows that if 
\begin{equation*}
    \frac{\log(2/\delta)}{\lambda M} \le 1\,,
\end{equation*}
then
\begin{equation*}
    \scrR_N^\lambda(\alst;\GH)=\scrR_N^\lambda(\alst;\cG) \le  (74 \Vert \phi \Vert_{L^\infty}^2 + 7) \lambda\E_\theta|\al_\cH(\theta)|^2
\end{equation*}
with probability at least $ 1-2\delta$. We used the fact that $\sqrt{2}\leq 2$. This is the claimed upper bound.
\end{proof}

\subsubsection{Proof for subsection~\ref{sec:proof_emp_misspec}:~\nameref*{sec:proof_emp_misspec}}\label{app:proof_proof_emp_misspec}

Recall that $\rho\in L^\infty_\nu$ under Assumption~\ref{ass:misspec}. We now prove Lem.~\ref{lem:app_misspec_con}.
\begin{proof}[Proof of Lemma~\ref{lem:app_misspec_con}]
    Let $Z_1=\norm{\rho(u_1)}_{\cY}^2$, which is uncentered. Almost surely, $Z_1\leq \norm{\rho}_{L^\infty_\nu}^2$ and
    \begin{equation*}
            \E\abs{Z_1-\E Z_1}^2\leq \E Z_1^2=\E_{u\sim\nu}\norm{\rho(u)}_{\cY}^4\leq \norm{\rho}_{L^\infty_\nu}^2\E_{u\sim\nu}\norm{\rho(u)}_{\cY}^2\,.
    \end{equation*}
    Thus with probability at least $1-\delta$, Cor.~\ref{lem:bern_vv} and Thm.~\ref{thm:bern_vv} provide the bound
    \begin{align*}
        \frac1N\sum_{n=1}^N\norm{\rho(u_n)}_{\cY}^2&\leq \E_u \norm{\rho(u)}_{\cY}^2 + \frac{4\norm{\rho}_{L^\infty_\nu}^2\log(2/\delta)}{N} + \sqrt{\frac{2\E_u \norm{\rho(u)}_{\cY}^2 \norm{\rho}_{L^\infty_\nu}^2\log(2/\delta)}{N}}\\
        &\leq 2\E_u \norm{\rho(u)}_{\cY}^2 + \frac{\frac{9}{2}\norm{\rho}_{L^\infty_\nu}^2\log(2/\delta)}{N}\,.
    \end{align*}
    To get the last inequality, we used the arithmetic-mean--geometric-mean inequality $\sqrt{ab} \le (a + b)/2$ to obtain
    \begin{equation*}
        \sqrt{\bigl(2\E_u \norm{\rho(u)}_{\cY}^2\bigr)\bigl(\norm{\rho}_{L^\infty_\nu}^2\log(2/\delta)/N\bigr)}
        \le
        \E_u \norm{\rho(u)}_{\cY}^2 + \frac{\frac{1}{2}\norm{\rho}_{L^\infty_\nu}^2\log(2/\delta)}{N}\,.
    \end{equation*}
    Multiplying the penultimate chain of inequalities through by two completes the proof.
\end{proof}

\subsubsection{Proofs for subsection~\ref{sec:proof_emp_noise}:~\nameref*{sec:proof_emp_noise}}\label{app:proof_proof_emp_noise}

This subsection provides proofs for the lemmas used to control the error stemming from iid noise corrupting the output data as in Assumption~\ref{ass:joint_dist_noise}. The estimates themselves could be improved by using \eqref{eq:app_subexp_norm} instead of \eqref{eq:app_subexp_norm_simple} and by tracking the noise variance $\E\norm{\eta}^2_\cY$ instead of bounding it above by $4\norm{\eta}_{\psi_1(\cY)}^2$. This would be relevant in settings where the noise is small or tends to zero with the sample size. We defer such considerations to future work.

\begin{proof}[Proof of Lemma~\ref{lem:app_noise_sup_cross}]
    Define $Z_n(\al)\defeq \ip{-\eta_n}{\Phi(u_n;\alpha)}_\cY$ for each $n$. Conditioned on $\Th$, it holds that $Z_n$ is an iid copy of $Z_1$. By the assumption $\E[\eta_1 \condbar u_1] = 0$, we have
    \begin{align}
    \begin{aligned}
    \label{eq:Z1eq}
    \E Z_1(\alpha) 
    &= \E_{(u_1,\eta_1)}\left[\ip{-\eta_1}{\Phi(u_1;\alpha)}_\cY\right]
        \\
    &=\E_{u_1\sim \nu}\left[\E\left[\ip{-\eta_1}{\Phi(u_1;\alpha)}_\cY\condbar u_1\right]\right]
    \\
    &=\E_{u_1\sim \nu}\left[\ip{-\E[\eta_1\condbar u_1]}{\Phi(u_1;\alpha)}_\cY\right]
    \\
    &=0\,.
    \end{aligned}
    \end{align}
    
    Next,
    \begin{equation*}
        \abs{Z_1(\al)}\leq \norm{\eta_1}_{\cY}\norm{\Phi(u_1;\al)}_\cY\leq \norm{\eta_1}_{\cY}\norm{\al}_M\norm{\phi}_{L^\infty}
    \end{equation*}
    by two applications of the Cauchy--Schwarz inequality, one in $\cY$ and the other in $\R^M$. We deduce that $\norm{Z_1(\al)}_{\psi_1}\leq \norm{\eta_1}_{\psi_1(\cY)}\norm{\al}_M\norm{\phi}_{L^\infty}$, conditioned on $\Th$. Prop.~\ref{prop:app_subexp_bern_mom}, Thm.~\ref{thm:bern_vv} (Bernstein's inequality), and a similar argument to that in the proof of Cor.~\ref{cor:app_subexp_norm} deliver the asserted bound.
\end{proof}

The next two lemmas are used in the \hyperref[proof:app_proof_emp]{proof of Prop.~\ref*{prop:train_erm_bound_noisy}} to control the third and final term in \eqref{eq:proof_noisy_coef_decomp}.
\begin{lemma}[Linear empirical process: Concentration]\label{lem:app_noise_sup_con}
    Fix $t>0$ and $\delta\in(0,1)$. Define
    \begin{equation}
        Z_t\defeq \sup_{\alpha\in\cA_t}\abs[\bigg]{\frac{1}{N}\sum_{n=1}^N\ip{\eta_n}{\Phi(u_n;\alpha)}_{\cY}}\,,\qw \cA_t\defeq \set{\alpha\in\R^M}{\norm{\alpha}_M^2\leq t}\,.
    \end{equation}
    If $N\geq\log(1/\delta)$, then conditioned on the realizations $\Th$ in the family $\Phi$ it holds that
    \begin{equation}
        Z_t\leq \E_{\{(u_n,\eta_n)\}} [Z_t] + 8e^{3/2}\norm{\eta_1}_{\psi_1(\cY)}\norm{\phi}_{L^\infty}\sqrt{\frac{\log(1/\delta)}{N}}\sqrt{t}
    \end{equation}
    with probability at least $1-\delta$.
\end{lemma}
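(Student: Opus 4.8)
The plan is to recognize $Z_t$ as a deterministic multiple of the $\norm{\slot}_M$-norm of an empirical average of iid Banach-space-valued random variables and then to invoke the subexponential tail bound of Cor.~\ref{cor:app_subexp_norm}. First I would exploit the linearity of the RFM~\eqref{eq:rf_model} in its coefficients. For each $n$, let $v_n\in\R^M$ be the vector with components $(v_n)_m\defeq \ip{\eta_n}{\phi(u_n;\theta_m)}_\cY$, and write $\ip{a}{b}_M\defeq \frac1M\sum_{m=1}^M a_m b_m$ for the inner product associated with $\norm{\slot}_M$. Then $\frac1N\sum_{n=1}^N\ip{\eta_n}{\Phi(u_n;\alpha)}_\cY = \ip{\alpha}{\bar{v}_N}_M$, where $\bar{v}_N\defeq \frac1N\sum_{n=1}^N v_n$. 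Since $(\R^M,\norm{\slot}_M)$ is a Hilbert space, the supremum of the linear functional $\alpha\mapsto \ip{\alpha}{\bar{v}_N}_M$ over $\cA_t=\{\norm{\alpha}_M^2\le t\}$ equals $\sqrt t\,\norm{\bar{v}_N}_M$ by Cauchy--Schwarz (with equality at $\alpha=\sqrt t\,\bar{v}_N/\norm{\bar{v}_N}_M$ when $\bar{v}_N\neq 0$, and trivially otherwise). Hence $Z_t = \sqrt t\,\norm{\bar{v}_N}_M$ identically, which in particular settles the measurability of $Z_t$.

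Next I would record the almost sure bound $\norm{v_n}_M^2 = \frac1M\sum_{m=1}^M\abs{\ip{\eta_n}{\phi(u_n;\theta_m)}_\cY}^2 \le \norm{\eta_n}_\cY^2\norm{\phi}_{L^\infty}^2$, which follows from Cauchy--Schwarz in $\cY$ together with Assumption~\ref{ass:rf}(ii). By the definition~\eqref{eq:app_def_subexp_norm_vec} of the subexponential norm and Assumption~\ref{ass:joint_dist_noise}, this yields $\norm{v_1}_{\psi_1(\cZ)} \le \norm{\phi}_{L^\infty}\norm{\eta_1}_{\psi_1(\cY)} < \infty$ for the separable Hilbert (hence Banach) space $\cZ\defeq(\R^M,\norm{\slot}_M)$. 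Moreover, conditioned on $\Th$, the pairs $(u_n,\eta_n)$ remain iid and $\Th$ is frozen, so the $\cZ$-valued random variables $v_n$ are iid conditionally on $\Th$.

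Finally I would apply Cor.~\ref{cor:app_subexp_norm} conditionally on $\Th$, in the space $\cZ$, with $Z_n\equiv v_n$ and $S_N\equiv\bar{v}_N$: because $N\ge\log(1/\delta)$, its estimate~\eqref{eq:app_subexp_norm_simple} gives, with (conditional) probability at least $1-\delta$, the inequality $\norm{\bar{v}_N}_M - \E\norm{\bar{v}_N}_M \le \sqrt{64e^3\norm{v_1}_{\psi_1(\cZ)}^2\log(1/\delta)/N} = 8e^{3/2}\norm{v_1}_{\psi_1(\cZ)}\sqrt{\log(1/\delta)/N}$. Multiplying through by $\sqrt t$, using $Z_t=\sqrt t\,\norm{\bar{v}_N}_M$ (so that the conditional mean $\E_{\{(u_n,\eta_n)\}}[Z_t] = \sqrt t\,\E\norm{\bar{v}_N}_M$ matches), and bounding $\norm{v_1}_{\psi_1(\cZ)}\le\norm{\phi}_{L^\infty}\norm{\eta_1}_{\psi_1(\cY)}$ delivers the claimed bound. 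The only genuinely substantive step is the reduction carried out in the first paragraph; everything afterwards is a direct citation of the Banach-space Bernstein machinery already assembled in {\bf SM}~\ref{app:concen}, so I do not anticipate any obstacle beyond bookkeeping of constants.
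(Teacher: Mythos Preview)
Your proposal is correct, and the overall strategy---Cauchy--Schwarz to control the subexponential norm of the summands, followed by the Banach-space Bernstein bound of Cor.~\ref{cor:app_subexp_norm}---matches the paper's. The one genuine difference is the choice of ambient space. The paper views each summand $\alpha\mapsto\ip{\eta_n}{\Phi(u_n;\alpha)}_\cY$ as an element of the Banach space $C(\cA_t;\R)$ equipped with the supremum norm, so that $Z_t$ is literally the norm of the empirical average; it then bounds $\norm{\ip{\eta_1}{\Phi(u_1;\slot)}}_{\psi_1(C(\cA_t;\R))}\le\sqrt{t}\,\norm{\eta_1}_{\psi_1(\cY)}\norm{\phi}_{L^\infty}$ and applies Cor.~\ref{cor:app_subexp_norm} there. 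Your reduction $Z_t=\sqrt{t}\,\norm{\bar v_N}_M$ instead places the summands in the finite-dimensional Hilbert space $(\R^M,\norm{\slot}_M)$ and pulls the factor $\sqrt{t}$ outside. This is slightly more elementary (no function space needed, and the supremum is resolved exactly rather than bounded), while the paper's framing has the advantage of reusing verbatim the template of the analogous quadratic-process lemma (Lem.~\ref{lem:app_loss_sup_con}), where no such Hilbert-space reduction is available. The final constants and hypotheses are identical.
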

\begin{proof}
    For any $\alpha\in\cA_t$, we compute
    \begin{align*}
        \abs{\ip{\eta_1}{\Phi(u_1;\alpha)}_{\cY}}&=\abs[\bigg]{\frac{1}{M}\sum_{m=1}^M\alpha_m\ip{\eta_1}{\phi(u_1;\theta_m)}_{\cY}}\\
        &\leq \biggl(\frac{1}{M}\sum_{m=1}^M\abs{\alpha_m}^2\biggr)^{1/2}\biggl(\frac{1}{M}\sum_{m=1}^M\ip{\eta_1}{\phi(u_1;\theta_m)}^2_{\cY}\biggr)^{1/2}\\
        &\leq \sqrt{t}\biggl(\norm{\eta_1}_{\cY}^2\frac{1}{M}\sum_{m=1}^M\norm{\phi(u_1;\theta_m)}^2_{\cY}\biggr)^{1/2}\, .
    \end{align*}
    We used the Cauchy--Schwarz inequality twice. By the boundedness of $\phi$, the above display gives
    \begin{equation*}
            \norm[\big]{\ip{\eta_1}{\Phi(u_1;\slot)}}_{\psi_1(C(\cA_t;\R))}=\norm[\Big]{\sup_{\alpha\in\cA_t}\abs[\big]{\ip{\eta_1}{\Phi(u_1;\alpha)}_{\cY}}}_{\psi_1}\leq \norm{\eta_1}_{\psi_1(\cY)}\norm{\phi}_{L^\infty}\sqrt{t}\, .
    \end{equation*}
    The iid random variables $\ip{\eta_n}{\Phi(u_n;\slot)}_{\cY}\colon \alpha\mapsto \ip{\eta_n}{\Phi(u_n;\alpha)}_{\cY}$ (conditional on $\Th$) are linear and hence continuous. Application of \eqref{eq:app_subexp_norm_simple} in Cor.~\ref{cor:app_subexp_norm} to $\ip{\eta_n}{\Phi(u_n;\slot)}_{\cY}$ taking value in the separable Banach space $C(\cA_t;\R)$ of continuous functions from compact set $\cA_t\subset\R^M$ into $\R$, equipped with the supremum norm, completes the proof.
\end{proof}

The previous lemma gives a concentration bound for the linear empirical process and the next lemma estimates its expectation.
\begin{lemma}[Linear empirical process: Expectation]\label{lem:app_noise_sup_expect}
    Fix $t>0$. Define $\cA_t$ as in Lem.~\ref{lem:app_noise_sup_con}. Then
    \begin{equation}\label{eq:app_noise_sup_expect}
        \E_{\{(u_n,\eta_n)\}} \sup_{\alpha\in\cA_t}\abs[\bigg]{\frac{1}{N}\sum_{n=1}^N\ip{\eta_n}{\Phi(u_n;\alpha)}_{\cY}} \leq \frac{\norm{\eta_1}_{L^2_\P(\Omega;\cY)}\norm{\phi}_{L^\infty}}{\sqrt{N}}\sqrt{t}\,.
    \end{equation}
\end{lemma}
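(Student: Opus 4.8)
The whole point is that the random feature model is \emph{linear} in its coefficients, so the supremum over the ball $\cA_t$ can be computed exactly rather than controlled by empirical process machinery. First I would expand $\Phi$ using~\eqref{eq:rf_model} and interchange the two finite sums to write, for every $\alpha\in\R^M$,
\begin{equation*}
    \frac{1}{N}\sum_{n=1}^N\ip{\eta_n}{\Phi(u_n;\alpha)}_{\cY}
    = \frac{1}{M}\sum_{m=1}^M \alpha_m\, v_m
    \qw v_m \defeq \frac{1}{N}\sum_{n=1}^N \ip{\eta_n}{\phi(u_n;\theta_m)}_{\cY}\,.
\end{equation*}
Thus the quantity inside the supremum is the scaled Euclidean inner product of $\alpha$ with the random vector $v=(v_1,\dots,v_M)\in\R^M$, and by the Cauchy--Schwarz inequality (which is tight), $\sup_{\alpha\in\cA_t}\abs{\frac1M\sum_m \alpha_m v_m} = \sqrt{t}\,\norm{v}_M$ with $\norm{v}_M^2 = \frac1M\sum_m \abs{v_m}^2$. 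Hence the left-hand side of~\eqref{eq:app_noise_sup_expect} equals $\sqrt{t}\,\E\norm{v}_M$, and by Jensen's inequality it is at most $\sqrt{t}\,(\E\norm{v}_M^2)^{1/2}$.

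Next I would bound $\E\norm{v}_M^2 = \frac1M\sum_m \E\abs{v_m}^2$. Fixing $m$ and using that the pairs $(u_n,\eta_n)$ are iid together with the conditional centering $\E[\eta_n\condbar u_n]=0$ from Assumption~\ref{ass:joint_dist_noise}, the cross terms $n\neq n'$ in $\E v_m^2$ vanish (conditioning on $u_n$ and using that $\phi(u_n;\theta_m)$ and $\eta_{n'}$, $u_{n'}$ are handled by independence), leaving
\begin{equation*}
    \E\abs{v_m}^2 = \frac{1}{N}\,\E\ip{\eta_1}{\phi(u_1;\theta_m)}_{\cY}^2
    \le \frac{1}{N}\,\E\!\left[\norm{\eta_1}_{\cY}^2\,\norm{\phi(u_1;\theta_m)}_{\cY}^2\right]
    \le \frac{\norm{\phi}_{L^\infty}^2}{N}\,\E\norm{\eta_1}_{\cY}^2\,,
\end{equation*}
using Cauchy--Schwarz in $\cY$ and the uniform bound on $\phi$ from Assumption~\ref{ass:rf}. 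Summing over $m$ and recalling $\norm{\eta_1}_{L^2_\P(\Omega;\cY)}^2 = \E\norm{\eta_1}_{\cY}^2$ gives $\E\norm{v}_M^2 \le N^{-1}\norm{\phi}_{L^\infty}^2\,\norm{\eta_1}_{L^2_\P(\Omega;\cY)}^2$. Combining this with the previous paragraph yields exactly the claimed bound $\E\sup_{\alpha\in\cA_t}\abs{\slot} \le N^{-1/2}\norm{\eta_1}_{L^2_\P(\Omega;\cY)}\norm{\phi}_{L^\infty}\sqrt{t}$.

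There is essentially no serious obstacle here; the only points requiring a little care are (i) making sure the supremum is measurable and the expectation well defined, which is immediate because the supremum is the \emph{explicit} random variable $\sqrt{t}\,\norm{v}_M$, and (ii) correctly applying Fubini and the tower property to kill the $n\neq n'$ cross terms, for which it suffices to condition on $(u_1,\dots,u_N)$ and use $\E[\eta_n\condbar u_1,\dots,u_N]=\E[\eta_n\condbar u_n]=0$ together with independence of $\eta_n$ and $\eta_{n'}$ given the inputs. The argument is carried out conditionally on the realizations $\Th$, which enter only through the (bounded) quantities $\phi(u_n;\theta_m)$, so the final bound is uniform in $\Th$.
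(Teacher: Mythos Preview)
Your proposal is correct and follows essentially the same approach as the paper: expand $\Phi$ linearly in $\alpha$, apply Cauchy--Schwarz in $\R^M$ to reduce the supremum to $\sqrt{t}\,\norm{v}_M$, use Jensen to pass to $(\E\norm{v}_M^2)^{1/2}$, kill the $n\neq n'$ cross terms via independence and $\E[\eta_n\condbar u_n]=0$, and finish with Cauchy--Schwarz in $\cY$ and the bound on $\phi$. The only cosmetic difference is that you observe the supremum is attained (so it \emph{equals} $\sqrt{t}\,\norm{v}_M$), whereas the paper works with the one-sided Cauchy--Schwarz inequality throughout.
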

\begin{proof}
    For any $\alpha\in\cA_t$, the Cauchy--Schwarz inequality in $\R^M$ delivers the bound
    \begin{align*}
        \abs[\bigg]{\frac{1}{N}\sum_{n=1}^N\ip{\eta_n}{\Phi(u_n;\alpha)}_{\cY}}&=\abs[\bigg]{\frac{1}{M}\sum_{m=1}^M\alpha_m\frac{1}{N}\sum_{n=1}^N\ip{\eta_n}{\phi(u_n;\theta_m)}_{\cY}}\\
        &\leq \biggl(\frac{1}{M}\sum_{m=1}^M\abs{\alpha_m}^2\biggr)^{1/2}
        \left(\frac{1}{M}\sum_{m=1}^M\biggl[\frac{1}{N}\sum_{n=1}^N\ip{\eta_n}{\phi(u_n;\theta_m)}_{\cY}\biggr]^2\right)^{1/2}.
    \end{align*}
    Let the left hand side of \eqref{eq:app_noise_sup_expect} be denoted by $\Xi_t$. We next note that
    \begin{align*}
    \E_{\{(u_n,\eta_n)\}}\bigl[\ip{\eta_n}{\phi(u_n;\theta_m)}_{\cY}\bigr]
    &=
    \E_{u_n\sim \nu}\left[
    \E\bigl[\ip{\eta_n}{\phi(u_n;\theta_m)}_{\cY} \condbar u_n\bigr]
    \right]
    \\
    &= 
    \E_{u_n\sim \nu}\left[
    \ip{\E[\eta_n\condbar  u_n]}{\phi(u_n;\theta_m)}_{\cY} 
    \right]
    \\
    &=0\,.
    \end{align*}
    Using the independence of $(u_n,\eta_n)$ and $(u_{n'},\eta_{n'})$ for any two indices $n\ne n'$, together with the above observation, we thus obtain
    \begin{align*}
    &\E_{\{(u_n,\eta_n)\}}\bigl[\ip{\eta_n}{\phi(u_n;\theta_m)}_{\cY}\,\ip{\eta_{n'}}{\phi(u_{n'};\theta_m)}_{\cY}\bigr]
    \\
   & \hspace{2cm} = 
    \E_{(u_{n},\eta_{n})} \bigl[\ip{\eta_n}{\phi(u_n;\theta_m)}_{\cY}\bigr]\,\E_{(u_{n'},\eta_{n'})}\bigl[\ip{\eta_{n'}}{\phi(u_{n'};\theta_m)}_{\cY}\bigr]
    \\
    &\hspace{2cm} = 
    0\,.
    \end{align*}
    
    This implies that
    \begin{align*}
        \Xi_t&\leq\frac{\sqrt{t}}{N}\E_{\{(u_n,\eta_n)\}}\sqrt{\frac{1}{M}\sum_{m=1}^M\sum_{n,n'=1}^N\ip{\eta_n}{\phi(u_n;\theta_m)}_{\cY}\,\ip{\eta_{n'}}{\phi(u_{n'};\theta_m)}_{\cY}}\\
        &\leq \frac{\sqrt{t}}{N}\sqrt{\frac{1}{M}\sum_{m=1}^M\sum_{n,n'=1}^N\E_{\{(u_n,\eta_n)\}}\bigl[\ip{\eta_n}{\phi(u_n;\theta_m)}_{\cY}\,\ip{\eta_{n'}}{\phi(u_{n'};\theta_m)}_{\cY}\bigr]}\\
        &= \frac{\sqrt{t}}{\sqrt{N}}\sqrt{\frac{1}{M}\sum_{m=1}^M\E_{(u_1,\eta_1)}\ip[\big]{\eta_1}{\phi(u_1;\theta_m)}_{\cY}^2}\\
        &\leq \frac{\sqrt{t}}{\sqrt{N}}\norm{\eta_1}_{L^2_\P(\Omega;\cY)}\norm{\phi}_{L^\infty}\,.
    \end{align*}
    We used Jensen's inequality in the second line, independence and the zero-mean property of the summands in the third line, and the Cauchy--Schwarz inequality in $\cY$ in the final line.
\end{proof}

\subsection{Proofs for subsection~\ref{sec:proof_gen}:~\nameref*{sec:proof_gen}}\label{app:proof_proof_gen}
This subsection upper bounds the generalization gap with suprema techniques. We begin with the following empirical process concentration inequality. It gives uniform control on the difference between the empirical and population risk functionals. The process, as a function of its index $\alpha$, is quadratic because the RFM $\Phi(\slot;\alpha)$ is linear in $\alpha$.
\begin{lemma}[Quadratic empirical process: Concentration]\label{lem:app_loss_sup_con}
    Fix $t>0$ and $\delta\in(0,1)$. Define
    \begin{align}
        Z_t&\defeq \sup_{\alpha\in\cA_t}\abs[\big]{\scrR_N(\alpha;\cG)-\scrR(\alpha;\cG)}\\
        &= \sup_{\alpha\in\cA_t}\abs[\bigg]{\frac{1}{N}\sum_{n=1}^N\norm{\cG(u_n)-\Phi(u_n;\alpha)}_\cY^2 - \E_{u\sim\nu}\norm{\cG(u)-\Phi(u;\alpha)}_\cY^2}\,,
    \end{align}
    where
    \begin{equation}
        \cA_t\defeq \set{\alpha\in\R^M}{\norm{\alpha}_M^2\leq t}\,.
    \end{equation}
    If $N\geq\log(1/\delta)$, then conditioned on the realizations $\Th$ in the family $\Phi$, it holds that
    \begin{equation}
        Z_t\leq \E_{\U} [Z_t] + 32e^{3/2}(\norm{\cG}_{L_\nu^\infty}^2 + \norm{\phi}_{L^\infty}^2 t)\sqrt{\frac{\log(1/\delta)}{N}}
    \end{equation}
    with probability at least $1-\delta$.
\end{lemma}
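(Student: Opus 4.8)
The plan is to recognize $Z_t$ as the norm of a Banach-space-valued empirical average and then invoke the subexponential concentration inequality of Corollary~\ref{cor:app_subexp_norm}. First I would work conditionally on the feature parameters $\Th$, so that the only remaining randomness is in the iid inputs $\U\sim\nu^{\otimes N}$. Since $\cA_t$ is a compact subset of $\R^M$ and $\alpha\mapsto\norm{\cG(u)-\Phi(u;\alpha)}_\cY^2$ is a polynomial in $\alpha$, hence continuous, the space $\cZ\defeq C(\cA_t;\R)$ equipped with the supremum norm is a separable Banach space. I would then introduce the $\cZ$-valued random variables $Z_n$ defined by $Z_n(\alpha)\defeq\norm{\cG(u_n)-\Phi(u_n;\alpha)}_\cY^2-\E_{u\sim\nu}\norm{\cG(u)-\Phi(u;\alpha)}_\cY^2$; conditionally on $\Th$ these are iid copies of $Z_1$, they are bounded and a.s.\ continuous in $\alpha$ (hence Bochner integrable), and the average $S_N\defeq\frac1N\sum_{n=1}^N Z_n$ satisfies $S_N(\alpha)=\scrR_N(\alpha;\cG)-\scrR(\alpha;\cG)$, so that $\norm{S_N}_\cZ=Z_t$ and $\E_\U\norm{S_N}_\cZ=\E_\U[Z_t]$.

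The second step is the uniform boundedness estimate that controls the subexponential norm $\norm{Z_1}_{\psi_1(\cZ)}$. From the Cauchy--Schwarz inequality in $\R^M$ together with Assumption~\ref{ass:rf}, one has $\norm{\Phi(u;\alpha)}_\cY\le\norm{\phi}_{L^\infty}\norm{\alpha}_M\le\norm{\phi}_{L^\infty}\sqrt{t}$ for every $\alpha\in\cA_t$ and $u\in\cX$; combined with $\norm{\cG}_{L^\infty_\nu}<\infty$ this gives $\norm{\cG(u)-\Phi(u;\alpha)}_\cY^2\le 2(\norm{\cG}_{L^\infty_\nu}^2+\norm{\phi}_{L^\infty}^2 t)$ for $\nu$-a.e.\ $u$, uniformly in $\alpha\in\cA_t$. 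Bounding the empirical term and the population term separately then yields $\norm{Z_1}_\cZ\le 4(\norm{\cG}_{L^\infty_\nu}^2+\norm{\phi}_{L^\infty}^2 t)$ almost surely, and hence, directly from the definition~\eqref{eq:app_def_subexp_norm_vec} of the subexponential norm, $\norm{Z_1}_{\psi_1(\cZ)}\le 4(\norm{\cG}_{L^\infty_\nu}^2+\norm{\phi}_{L^\infty}^2 t)$.

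Finally, I would apply the simplified subexponential tail bound~\eqref{eq:app_subexp_norm_simple} of Corollary~\ref{cor:app_subexp_norm} to $S_N$, which is legitimate because $N\ge\log(1/\delta)$: conditionally on $\Th$, with probability at least $1-\delta$,
\begin{equation*}
    Z_t=\norm{S_N}_\cZ\le\E_\U[Z_t]+\sqrt{\frac{64e^3\norm{Z_1}_{\psi_1(\cZ)}^2\log(1/\delta)}{N}}\le\E_\U[Z_t]+8e^{3/2}\cdot 4\bigl(\norm{\cG}_{L^\infty_\nu}^2+\norm{\phi}_{L^\infty}^2 t\bigr)\sqrt{\frac{\log(1/\delta)}{N}},
\end{equation*}
which is exactly the claimed estimate. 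There is no serious obstacle in this argument — it is essentially a direct application of an already-established Banach-space concentration inequality. The only points requiring care are the measurability/continuity bookkeeping needed to view the quadratic process legitimately as a $C(\cA_t;\R)$-valued random element, and tracking the numerical constants so that the crude almost-sure bound on $\norm{Z_1}_{\psi_1(\cZ)}$ produces precisely the factor $32e^{3/2}$.
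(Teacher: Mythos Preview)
Your proposal is correct and follows essentially the same approach as the paper: define the $C(\cA_t;\R)$-valued centered random elements, obtain the almost-sure bound $\norm{Z_1}_{\cZ}\le 4(\norm{\cG}_{L^\infty_\nu}^2+\norm{\phi}_{L^\infty}^2 t)$, and apply~\eqref{eq:app_subexp_norm_simple} of Corollary~\ref{cor:app_subexp_norm}. The only cosmetic difference is that the paper derives the factor~$4$ via $|X_1(t,\alpha)|\le 2\norm{\cG(u_1)}_\cY^2+2\E_u\norm{\cG(u)}_\cY^2+2\norm{\Phi(u_1;\alpha)}_\cY^2+2\E_u\norm{\Phi(u;\alpha)}_\cY^2$, whereas you first bound $\norm{\cG(u)-\Phi(u;\alpha)}_\cY^2\le 2(\norm{\cG}_{L^\infty_\nu}^2+\norm{\phi}_{L^\infty}^2 t)$ and then sum empirical and population parts; both yield the same constant.
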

\begin{proof}
    For any $\alpha\in\cA_t$ and $n\in\{1,\ldots, N\}$, let
    \begin{equation*}
            X_n(t,\alpha)\defeq \norm{\cG(u_n)-\Phi(u_n;\alpha)}_\cY^2 - \E_{u\sim\nu}\norm{\cG(u)-\Phi(u;\alpha)}_\cY^2\,.
    \end{equation*}
    We compute
    \begin{align*}
        \abs{X_1(t,\alpha)}&\leq 2\norm{\cG(u_1)}_\cY^2 + 2\E_{u\sim\nu}\norm{\cG(u)}_\cY^2 + 2\norm{\Phi(u_1;\alpha)}_{\cY}^2 + 2\E_{u\sim\nu}\norm{\Phi(u;\alpha)}_{\cY}^2\\
        &\leq 4\norm{\cG}_{L^\infty_\nu}^2 + 4\norm{\phi}_{L^\infty}^2t\,.
    \end{align*}
    We used the fact that for any $u\in\cX$ $\nu$-almost surely, $\norm{\Phi(u;\alpha)}_{\cY}^2\leq t\norm{\phi}_{L^\infty}^2$ on the set $\cA_t$ (by the Cauchy--Schwarz inequality). This implies that
    \begin{equation*}
            \norm[\big]{X_1(t,\slot)}_{\psi_1(C(\cA_t;\R))}=\norm[\Big]{\sup_{\alpha\in\cA_t}\abs{X_1(t,\alpha)}}_{\psi_1}\leq 4\norm{\cG}_{L^\infty_\nu}^2 + 4\norm{\phi}_{L^\infty}^2t\, .
    \end{equation*}
    The $X_n(t,\slot)$ do indeed belong to $C(\cA_t;\R)$ almost surely, as they can be written as a sum of affine and quadratic forms on $\R^M$ in the $\alpha$ variable. Application of \eqref{eq:app_subexp_norm_simple} in Cor.~\ref{cor:app_subexp_norm} (taking the separable Banach space to be $C(\cA_t;\R)$ equipped with the supremum norm) completes the proof.
\end{proof}

Since the supremum concentrates around its mean, it remains to show that its mean is small as a function of the sample size. The next lemma does this with Rademacher symmetrization.
\begin{lemma}[Quadratic empirical process: Expectation]\label{lem:app_loss_sup_expect}
    Fix $t>0$. Define $Z_t$ as in Lem.~\ref{lem:app_loss_sup_con}. Conditioned on the realizations $\Th$ in the family $\Phi$, it holds that
    \begin{equation}
        \E_{\U} [Z_t] \leq \frac{4\norm{\cG}_{L_\nu^\infty}^2}{\sqrt{N}}
        + \frac{4\norm{\phi}_{L^\infty}^2}{\sqrt{N}}t\,.
    \end{equation}
\end{lemma}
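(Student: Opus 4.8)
The plan is to couple the standard symmetrization inequality with the linearity of $\Phi$ in its coefficients, so that the quadratic empirical process indexed by $\cA_t$ decouples into an $\alpha$-independent piece, a piece linear in $\alpha$, and a piece quadratic in $\alpha$, each of which is controlled by a one-line second-moment estimate. Everything below is understood conditionally on the feature realizations $\Th$, which we suppress from the notation. First I would apply the symmetrization lemma to the family of maps $\alpha\mapsto\norm{\cG(u)-\Phi(u;\alpha)}_\cY^2$, which are continuous on the compact set $\cA_t$ and uniformly bounded there by $(\norm{\cG}_{L^\infty_\nu}+\sqrt t\,\norm{\phi}_{L^\infty})^2$, since on $\cA_t$ one has $\norm{\Phi(u;\alpha)}_\cY\le\norm{\alpha}_M\norm{\phi}_{L^\infty}\le\sqrt t\,\norm{\phi}_{L^\infty}$ for $\nu$-almost every $u$ by Cauchy--Schwarz. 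Introducing iid Rademacher signs $\epsilon_1,\dots,\epsilon_N$ independent of $\U$, symmetrization gives
\[
\E_{\U}[Z_t]\le 2\,\E_{\U,\epsilon}\sup_{\alpha\in\cA_t}\abs[\bigg]{\frac1N\sum_{n=1}^N\epsilon_n\norm{\cG(u_n)-\Phi(u_n;\alpha)}_\cY^2}\,.
\]
Expanding $\norm{\cG(u_n)-\Phi(u_n;\alpha)}_\cY^2=\norm{\cG(u_n)}_\cY^2-2\ip{\cG(u_n)}{\Phi(u_n;\alpha)}_\cY+\norm{\Phi(u_n;\alpha)}_\cY^2$ and applying the triangle inequality underneath the supremum produces the three advertised pieces.

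For the $\alpha$-independent piece, Jensen's inequality together with $\E[\epsilon_n\epsilon_{n'}]=\delta_{nn'}$ and $\E\norm{\cG(u)}_\cY^4\le\norm{\cG}_{L^\infty_\nu}^4$ bound the expectation by $\norm{\cG}_{L^\infty_\nu}^2/\sqrt N$. For the linear piece, I would write $\ip{\cG(u_n)}{\Phi(u_n;\alpha)}_\cY=\ip{\alpha}{b(u_n)}_M$, where $b(u_n)_m\defeq\ip{\cG(u_n)}{\phi(u_n;\theta_m)}_\cY$ and $\ip{\slot}{\slot}_M$ denotes the $M$-normalized inner product on $\R^M$; then Cauchy--Schwarz in $(\R^M,\norm{\slot}_M)$ makes the supremum over $\cA_t$ equal to $\sqrt t\,\norm{\tfrac1N\sum_{n=1}^N\epsilon_n b(u_n)}_M$, and Jensen, Rademacher orthogonality, and the pointwise bound $\abs{b(u)_m}\le\norm{\cG}_{L^\infty_\nu}\norm{\phi}_{L^\infty}$ bound its expectation by $\sqrt t\,\norm{\cG}_{L^\infty_\nu}\norm{\phi}_{L^\infty}/\sqrt N$, so that, carrying the factor $2$ from the cross term, this piece contributes $2\sqrt t\,\norm{\cG}_{L^\infty_\nu}\norm{\phi}_{L^\infty}/\sqrt N$.

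The quadratic piece is the crux, and the step I expect to be the main obstacle, because the naive reflex is to invoke operator- or matrix-concentration --- precisely what the paper is designed to avoid. The key observation is that $\norm{\Phi(u_n;\alpha)}_\cY^2=\ip{\alpha}{\widehat G(u_n)\alpha}_M$ is the quadratic form on $(\R^M,\norm{\slot}_M)$ attached to the Gram kernel $G(u_n)_{m,m'}\defeq\ip{\phi(u_n;\theta_m)}{\phi(u_n;\theta_{m'})}_\cY$, so the supremum over $\cA_t$ equals $t$ times the $\norm{\slot}_M$-operator norm of the symmetric operator $\widehat S\defeq\tfrac1N\sum_{n=1}^N\epsilon_n\widehat G(u_n)$. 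Since only the \emph{mean} of this supremum is needed --- not a tail bound --- it suffices to use the crude estimates $\norm{\widehat S}_{\mathrm{op}}\le\norm{\widehat S}_{\HS}$ and $\E\norm{\widehat S}_{\HS}\le(\E\norm{\widehat S}_{\HS}^2)^{1/2}$, and a direct computation, using Rademacher orthogonality and $\abs{G(u)_{m,m'}}\le\norm{\phi}_{L^\infty}^2$, gives $\E\norm{\widehat S}_{\HS}^2=\tfrac1{M^2}\sum_{m,m'}\tfrac1N\,\E[G(u_1)_{m,m'}^2]\le\norm{\phi}_{L^\infty}^4/N$; hence this piece contributes at most $t\,\norm{\phi}_{L^\infty}^2/\sqrt N$. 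Collecting the three bounds gives $\E_{\U}[Z_t]\le\tfrac2{\sqrt N}(\norm{\cG}_{L^\infty_\nu}+\sqrt t\,\norm{\phi}_{L^\infty})^2$, and $(a+b)^2\le 2a^2+2b^2$ yields the claimed bound. The remaining items --- measurability of the suprema for the symmetrization step, the Fubini/conditioning swaps, and the elementary Cauchy--Schwarz and Gram-matrix identities --- are routine given boundedness of $\phi$ and compactness of $\cA_t$.
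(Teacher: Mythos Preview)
Your proof is correct and follows essentially the same route as the paper: symmetrization, expansion of the square into an $\alpha$-free term, a cross term linear in $\alpha$, and a term quadratic in $\alpha$, each handled by Cauchy--Schwarz in $(\R^M,\norm{\slot}_M)$, Jensen, and Rademacher orthogonality. Your packaging of the quadratic piece via the Gram operator and the inequality $\norm{\widehat S}_{\mathrm{op}}\le\norm{\widehat S}_{\HS}$ is exactly the paper's two successive Cauchy--Schwarz steps written in operator-norm language, and your final use of $(a+b)^2\le 2a^2+2b^2$ on the assembled bound is equivalent to the paper's use of Young's inequality on the cross term alone---both yield the same constant $4$.
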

\begin{proof}
    By Gin\'e--Zinn symmetrization \cite[see, e.g.,][Sect. 4.2, Prop. 4.11, pp. 107--108]{wainwright2019high},
    \begin{equation*}
            \E_{\U} [Z_t] \leq 2\E\sup_{\al\in\cA_t}\abs[\bigg]{\frac{1}{N}\sum_{n=1}^N\eps_n\norm{\cG(u_n)-\Phi(u_n;\al)}_{\cY}^2},\qw \eps_n\diid \Unif\bigl(\{+1,-1\}\bigr)\,,
    \end{equation*}
    because the original summands (conditioned on $\Th$) are independent. The expectation on the right is interpreted as the conditional expectation given $\Th$ (i.e., $\E_{\U,\{\eps_n\}}$ over the data and Rademacher variables only). The right hand side is the Rademacher complexity of the RF model class composed with the square loss. Expanding the square, it is bounded above by
    \begin{align}
        &2\E_{\U,\{\eps_n\}}\abs[\bigg]{\frac{1}{N}\sum_{n=1}^N\eps_n\norm{\cG(u_n)}_{\cY}^2}\tag{I}\label{eq:app_loss_sup_1}\\
        &\qquad+4\E_{\U,\{\eps_n\}}\sup_{\al\in\cA_t}\abs[\bigg]{\frac{1}{N}\sum_{n=1}^N\eps_n\ip{\cG(u_n)}{\Phi(u_n;\al)}_{\cY}}\tag{II}\label{eq:app_loss_sup_2}\\
        &\qquad\qquad \ + 2\E_{\U,\{\eps_n\}}\sup_{\al\in\cA_t}\abs[\bigg]{\frac{1}{N}\sum_{n=1}^N\eps_n\norm{\Phi(u_n;\al)}_{\cY}^2}\tag{III}\label{eq:app_loss_sup_3}\,.
    \end{align}
    We now estimate each term. The first term \eqref{eq:app_loss_sup_1} satisfies the standard Monte Carlo bound
    \begin{equation*}
            \eqref{eq:app_loss_sup_1}\leq 2\left(\E\abs[\bigg]{\frac{1}{N}\sum_{n=1}^N\eps_n\norm{\cG(u_n)}_{\cY}^2}^2\right)^{1/2}=
    \frac{2}{\sqrt{N}}\left(\frac{1}{N}\sum_{n=1}^N\E\norm{\cG(u_n)}_{\cY}^4\right)^{1/2}\leq \frac{2\norm{\cG}_{L^\infty_\nu}^2}{\sqrt{N}}\,.
    \end{equation*}
    For the second term \eqref{eq:app_loss_sup_2}, we begin by estimating the empirical average on the set $\cA_t$ as
    \begin{align*}
        \abs[\bigg]{\frac{1}{N}\sum_{n=1}^N\eps_n\ip{\cG(u_n)}{\Phi(u_n;\al)}_{\cY}} &= \abs[\bigg]{\frac{1}{M}\sum_{m=1}^M\al_m\biggl(\frac{1}{N}\sum_{n=1}^N \eps_n\ip{\cG(u_n)}{\phi(u_n;\theta_m)}_{\cY}\biggr)}\\
        &\leq \sqrt{t}\biggl(\frac{1}{M}\sum_{m=1}^M\abs[\bigg]{\frac{1}{N}\sum_{n=1}^N\eps_n\ip{\cG(u_n)}{\phi(u_n;\theta_m)}_{\cY}}^2\biggr)^{1/2}
    \end{align*}
    by the Cauchy--Schwarz inequality in $\R^M$. We deduce by Jensen's inequality and independence that
    \begin{align*}
         \eqref{eq:app_loss_sup_2}&\leq\frac{4\sqrt{t}}{N}\biggl(\frac{1}{M}\sum_{m=1}^M\sum_{n,n'=1}^N\E[\eps_n\eps_{n'}]\E_{\U}\bigl[\ip{\cG(u_n)}{\phi(u_n;\theta_m)}_{\cY}\ip{\cG(u_{n'})}{\phi(u_{n'};\theta_m)}_{\cY}\bigr]\biggr)^{1/2}\\
         &=\frac{4\sqrt{t}}{N}\biggl(\frac{1}{M}\sum_{m=1}^M\sum_{n=1}^N\E_{u\sim\nu}\ip{\cG(u)}{\phi(u;\theta_m)}_{\cY}^2\biggr)^{1/2}.
    \end{align*}
    A final application of the Cauchy--Schwarz inequality in $\cY$ in the last line shows that the second term \eqref{eq:app_loss_sup_2} is bounded above by $4\sqrt{t}\norm{\cG}_{L^\infty_\nu}\norm{\phi}_{L^\infty}/\sqrt{N}$.
    By Young's inequality $ab\leq a^2/2 + b^2/2$, we further bound
    \begin{equation*}
        \frac{4\norm{\cG}_{L^\infty_\nu}\norm{\phi}_{L^\infty}\sqrt{t}}{\sqrt{N}} = \left(\frac{2\norm{\cG}_{L_\nu^\infty}}{N^{1/4}}\right)\left(\frac{2\norm{\phi}_{L^\infty}\sqrt{t}}{N^{1/4}}\right)
    \leq  \frac{2\norm{\cG}_{L_\nu^\infty}^2 }{\sqrt{N}} + \frac{2\norm{\phi}_{L^\infty}^2 t}{\sqrt{N}}\,.
    \end{equation*}

    The third term \eqref{eq:app_loss_sup_3} is estimated in a similar manner. Expanding the empirical average on $\cA_t$ yields
    \begin{align*}
        \abs[\bigg]{\frac{1}{N}\sum_{n=1}^N\eps_n\norm{\Phi(u_n;\al)}_{\cY}^2}&=\abs[\bigg]{\frac{1}{M}\sum_{m=1}^M\al_m\biggl(\frac{1}{M}\sum_{m'=1}^M\al_{m'}\beta_{m,m'}^{(N)}\biggr)},\qw \\
        \beta_{m,m'}^{(N)}&=\frac{1}{N}\sum_{n=1}^N\eps_n\ip{\phi(u_n;\theta_m)}{\phi(u_n;\theta_{m'})}_{\cY}\,.
    \end{align*}
    The first equality in the above display satisfies the upper bound
    \begin{align*}
        \sqrt{t}\biggl(\frac{1}{M}\sum_{m=1}^M\abs[\bigg]{\frac{1}{M}\sum_{m'=1}^M\al_{m'}\beta_{m,m'}^{(N)}}^2\biggr)^{1/2}
        &\leq\sqrt{t}\biggl(\frac{1}{M}\sum_{m=1}^M t \biggl[\frac{1}{M}\sum_{m'=1}^M\abs[\big]{\beta_{m,m'}^{(N)}}^2\biggr]\biggr)^{1/2}\\
        &=\frac{t}{N}\sqrt{\frac{1}{M^2}\sum_{m,m'=1}^M\abs[\bigg]{\sum_{n=1}^N\eps_n\ip{\phi(u_n;\theta_m)}{\phi(u_n;\theta_{m'})}_{\cY}}^2}
    \end{align*}
    by two applications of the Cauchy--Schwarz inequality in $\R^M$. Finally, we deduce that
    \begin{align*}
        \eqref{eq:app_loss_sup_3}&\leq \frac{2t}{N}\sqrt{\frac{1}{M^2}\sum_{m,m'=1}^M\sum_{n=1}^N\E_{\U}\ip[\big]{\phi(u_n;\theta_m)}{\phi(u_n;\theta_{m'})}_{\cY}^2}\\
        &\leq \frac{2t}{\sqrt{N}}\sqrt{\frac{1}{M^2}\sum_{m,m'=1}^M\E_{u}\Bigl[\norm{\phi(u;\theta_m)}_{\cY}^2\norm{\phi(u;\theta_{m'})}_{\cY}^2\Bigr]}\\
        &\leq \frac{2t\norm{\phi}_{L^\infty}^2}{\sqrt{N}}
    \end{align*}
    by Jensen's inequality, the fact that $\E[\eps_n\eps_{n'}]=\delta_{n,n'}$, and the Cauchy--Schwarz inequality in $\cY$.

    Combining the three estimates completes the proof.
\end{proof}

The proof of the main generalization gap bound~\eqref{eq:app_est_bdd} is now immediate.
\begin{proof}[Proof of Proposition~\ref{prop:est_error_combined}]    Lem.~\ref{lem:app_loss_sup_con}~and~\ref{lem:app_loss_sup_expect} (applied with $t=\beta$) show that
    \begin{equation}\label{eq:app_est_bdd_temp}
    \cE_\beta(\U,\Th)\leq \frac{4(\norm{\cG}_{L_\nu^\infty}^2 + \norm{\phi}_{L^\infty}^2 \beta)}{\sqrt{N}}
    + 32e^{3/2}(\norm{\cG}_{L_\nu^\infty}^2 + \norm{\phi}_{L^\infty}^2 \beta)\sqrt{\frac{\log(1/\delta)}{N}}
    \end{equation}
    with conditional probability (over $\Th$) at least $1-\delta$ if $N\geq \log(1/\delta)$. Since $\delta$ does not depend on $\Th$, we deduce by the tower rule of conditional expectation that the event implied by \eqref{eq:app_est_bdd_temp} has $\P$-probability at least $1-\delta$ as well. Using the inequalities $4\leq 32e^{3/2}$ and $\sqrt{a}+\sqrt{b}\leq \sqrt{2(a+b)}$
    shows that the expression in \eqref{eq:app_est_bdd_temp} is bounded above by
    \begin{equation*}
          32e^{3/2}\bigl(\norm{\cG}_{L_\nu^\infty}^2 + \norm{\phi}_{L^\infty}^2 \beta\bigr)\sqrt{\frac{2(1+\log(1/\delta))}{N}}\,.
    \end{equation*}
    Application of the inequality $1\leq 2\log(2/\delta)$, valid for $\delta\in(0,1)$, implies \eqref{eq:app_est_bdd} as asserted.
\end{proof}

\section{Numerical experiment details}\label{app:numeric}
In this appendix, we detail the setup of the numerical experiment from Sect.~\ref{sec:numeric} and provide additional visualization of the function-valued RFM's discretization-independence in Figure~\ref{fig:sweep_res}.
All code used to produce the numerical results and figures in this paper are available at
\begin{center}
    \url{https://github.com/nickhnelsen/error-bounds-for-vvRF}.
\end{center}

A RFM with $M$ features is trained on $N$ input-output pairs $\{(u_n, \mathcal{G}(u_n))\}_{n=1}^N$ according to the vector-valued RF-RR algorithm. The ground truth map $\mathcal{G}\colon L^2(\mathbb{T};\mathbb{R})\to L^2(\mathbb{T};\mathbb{R}) $ is a nonlinear operator defined by $u^{(0)}(\cdot)\mapsto u(\cdot, 1)$, where $u=\{u(x,t)\}_{x,t}$ solves the partial differential equation
 \begin{align*}
        \frac{\partial u}{\partial t} + \frac{\partial}{\partial x}\left(\frac{u^2}{2}\right)=10^{-1}\frac{\partial^2 u}{\partial x^2}\,, \quad (x,t)\in \mathbb{T}\times (0,\infty)\,,
\end{align*}
with initial condition $u(\cdot,0) = u^{(0)}\in L^2(\T;\R)$. Here, $\mathbb{T}\simeq (0,2\pi)_{\mathrm{per}}$ is the 1D torus which comes with periodic boundary conditions.
The initial conditions $u_n\sim\nu$ are sampled iid from a centered Mat\'ern-like Gaussian process according to \cite[Sect. 6.3, p. 32]{kovachki2023neural}.

The random features are defined in a similar way to the Fourier Space RFs in \cite[Sect. 3.1, p. 15]{rfm}:
\begin{equation*}
    \varphi(u^{(0)};\theta)=2.6\cdot\mathrm{ELU}\bigl(\mathcal{F}^{-1}\{1_{(|k|\leq k_{\mathrm{max}})}\chi_k \cdot(\mathcal{F}u^{(0)})_k\cdot (\mathcal{F}\theta)_k \}_{k\in\mathbb{Z}}\bigr)\qa \theta\sim\mu\,,
\end{equation*}
where $\mu$ is also a centered Mat\'ern Gaussian measure with covariance operator $1.8^2(-\frac{d^2}{dx^2} + 15^2\Id)^{-3}$. In the above display, $\mathcal{F}$ maps a function to its Fourier series coefficients, and $\mathcal{F}^{-1}$ expresses a Fourier coefficient sequence as a function expanded in the Fourier basis. The filter $\chi$ is given by \cite[Eqn. 3.6, p. 15]{rfm} with $\delta=0.32$ and $\beta=0.1$. We take $k_{\mathrm{max}}=64$. The feature map $\varphi$ lifts the notion of hidden neuron in neural network architectures to function space.

 In Figures~\ref{fig:sweep}~and~\ref{fig:sweep_res}, the quantity represented on the vertical axis is an empirical approximation to the relative Bochner squared error:
 \begin{equation}\label{eq:app_rel_test_error}
     \frac{\frac{1}{N'}\sum_{n=1}^{N'}\norm{\cG(u'_n)-\Phi(u'_n;\alhat,\Th)}_{L^2}^2}{\frac{1}{N'}\sum_{n=1}^{N'}\norm{\cG(u'_n)}_{L^2}^2} \approx \frac{\E_{u\sim\nu}\norm{\cG(u)-\Phi(u;\alhat,\Th)}_{L^2}^2}{\E_{u\sim\nu}\norm{\cG(u)}_{L^2}^2}=\frac{\scrR(\alhat;\cG)}{\scrR(0;\cG)}\,.
 \end{equation}
 In \eqref{eq:app_rel_test_error}, $N'=500$ is the size of the test set $\{(u'_n,\cG(u'_n))\}_{n=1}^{N'}$, where $\{u'_n\}_{n=1}^{N'}\sim\nu^{\otimes N'}$ is disjoint from the training input set $\{u_n\}_{n=1}^N$. The input and output spaces are discretized on the same $p$-point equally spaced grid in $(0,2\pi)$. Thus, the discretized version of any input or output function belonging to $\mathcal{X} = \mathcal{Y}=L^2(\mathbb{T};\mathbb{R}) $ may be identified with an element of $\R^p$. In Figures~\ref{fig:M}~and~\ref{fig:Mp}, the regularization factor is chosen as $\lambda=7\cdot 10^{-4}/M$ and as $\lambda=3\cdot 10^{-6}/\sqrt{N}$ in Figures~\ref{fig:N} ~and~\ref{fig:Np}.

 A priori, it is not clear whether this operator learning benchmark satisfies our theoretical assumptions because we cannot verify that the Burgers' solution operator 
 belongs to the RKHS of $(\varphi,\mu)$ (or the range of some power of the RKHS kernel integral operator). At a more technical level, the feature map $\varphi$ uses an unbounded activation function (ELU, the exponential linear unit), while our theory is only developed for bounded RFs (Assumption~\ref{ass:rf}). Nevertheless, the empirically obtained parameter and sample complexity in Figure~\ref{fig:sweep} reasonably fit the main result of our well-specified theory (Theorem~\ref{thm:error_bound_total_well}).

\begin{figure}[htbp]%
\centering
\subfloat[Varying $M,\lambda, p$ for fixed $N=1548$.]{
    \includegraphics[width=0.5\textwidth]{./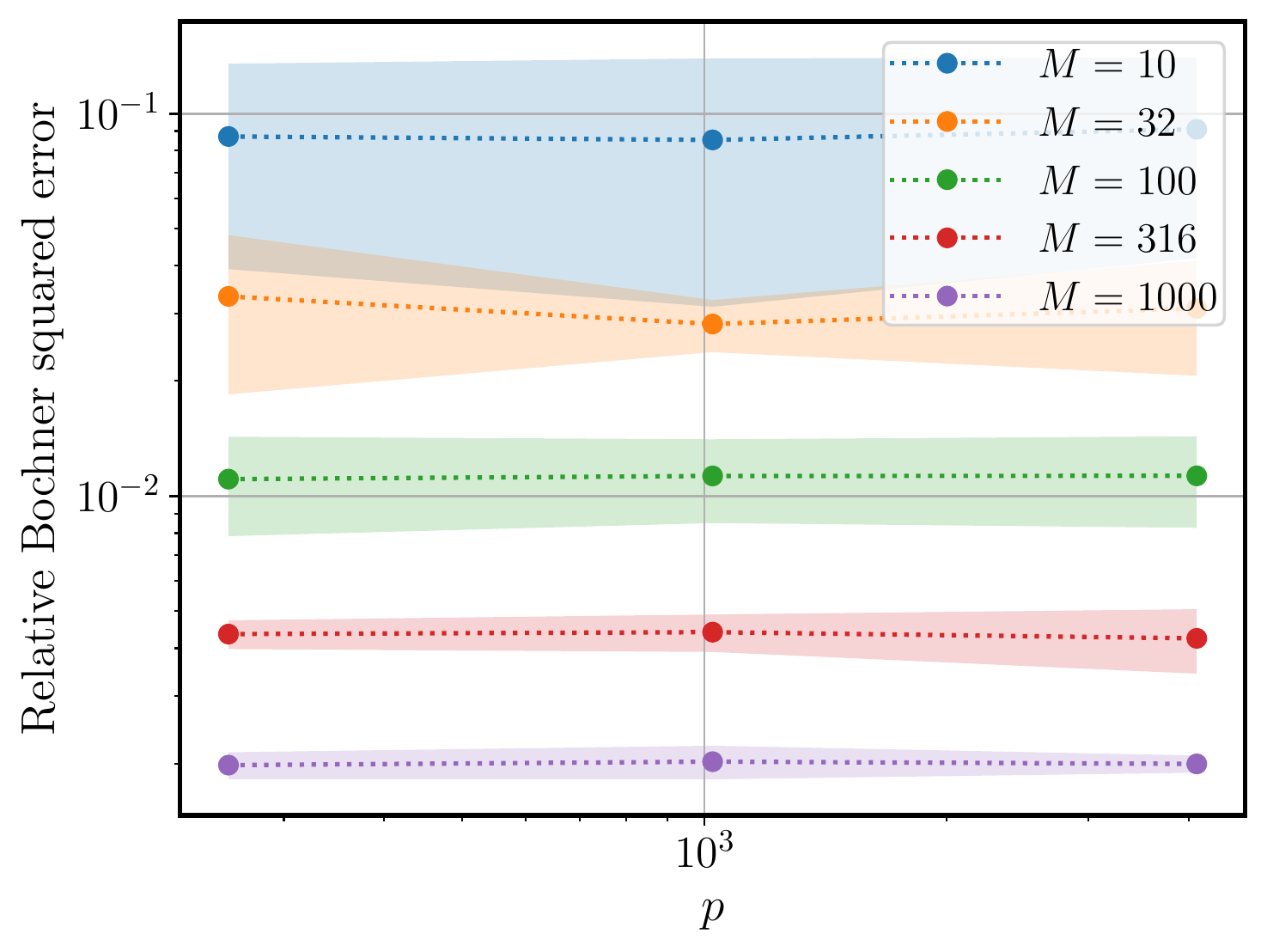}
    \label{fig:Mp}}
\subfloat[Varying $N,\lambda, p$ for fixed $M=10^4$.]{
    \includegraphics[width=0.479\textwidth]{./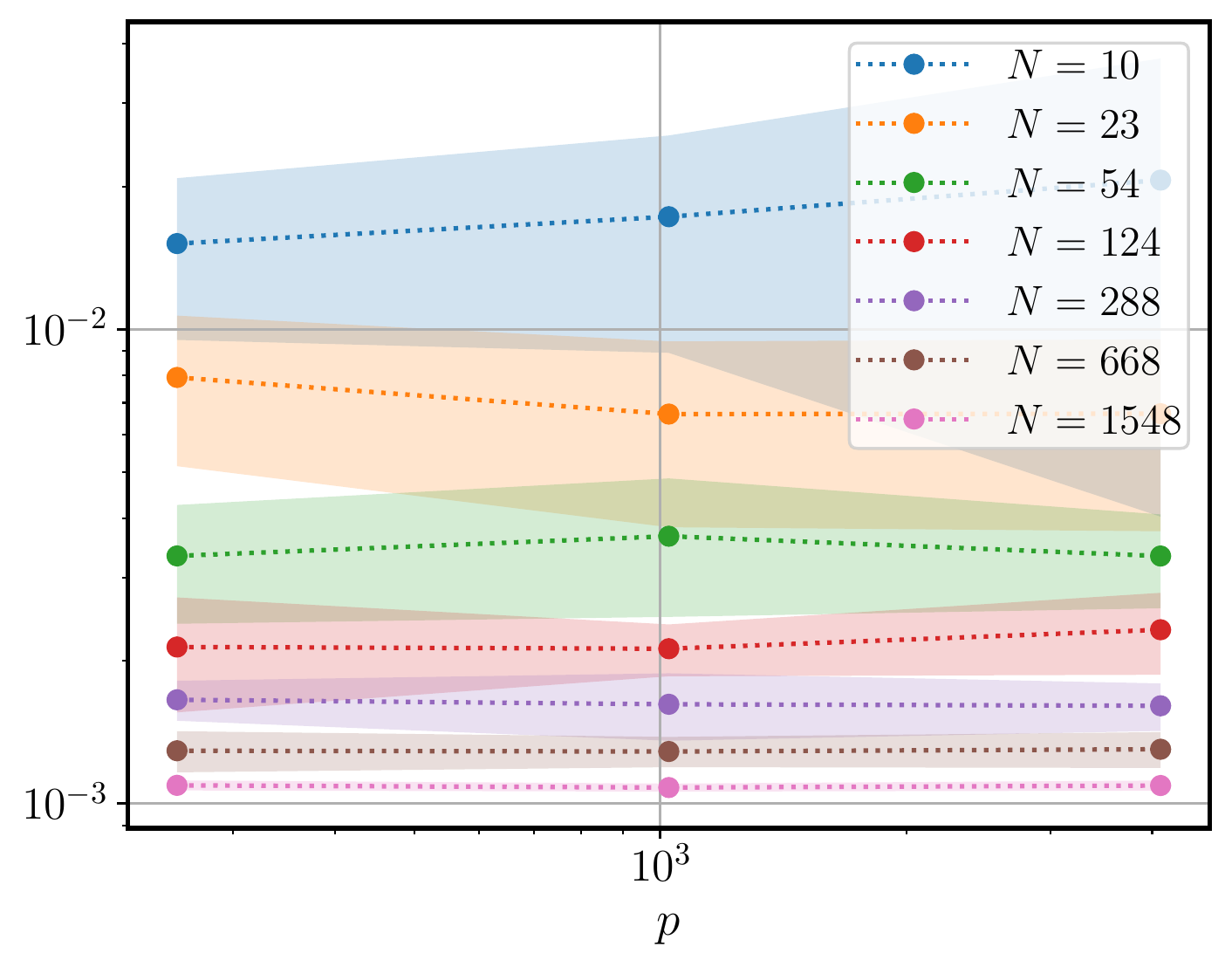}
    \label{fig:Np}}
\caption{Squared test error---which empirically approximates the population risk $ \scrR(\alhat;\cG) $---versus discretized output space dimension $p$, where $\cG$ is the Burgers' equation solution operator (\textbf{SM}~\ref{app:numeric}).
}
\label{fig:sweep_res}
\end{figure}


\end{document}